\documentclass{article}

\usepackage[round]{natbib}

\usepackage[preprint]{neurips_2026}

\usepackage[utf8]{inputenc} 
\usepackage[T1]{fontenc}    
\usepackage{hyperref}       
\usepackage{url}            
\usepackage{booktabs}       
\usepackage{amsfonts}       
\usepackage{nicefrac}       
\usepackage{microtype}      
\usepackage{xcolor}         

\usepackage{bm}
\usepackage{amsmath}
\usepackage{amssymb}
\usepackage{mathtools}
\usepackage{amsthm}

\usepackage{multirow, multicol}

\theoremstyle{plain}
\newtheorem{theorem}{Theorem}[section]
\newtheorem{proposition}[theorem]{Proposition}

\theoremstyle{definition}

\theoremstyle{remark}



\usepackage{algorithm}
\usepackage{algpseudocode}
\algrenewcommand\alglinenumber[1]{\scriptsize #1:}

\title{Simulation-free Unbalanced Dynamic Optimal Transport with General Growth Penalty}

\author{%
  Junda Ying$^{1}$, Yuxuan Wang$^2$, Bowen Yang$^3$, Peijie Zhou$^{2,4,5,6,\dagger}$ and Lei Zhang$^{1,2,3,4,7,\dagger}$ \\
  $^{1}$Beijing International Center for Mathematical Research, Peking University.\\$^{2}$Center for Quantitative Biology, Peking University.\\$^{3}$School of Mathematical Sciences, Peking University.\\$^{4}$Center for Machine Learning Research, Peking University.\\$^{5}$National Engineering Laboratory for Big Data Analysis and Applications, Beijing.\\$^{6}$AI for Science Institute, Beijing.$^{7}$Institute for Artificial Intelligence, Peking University.\\
\texttt{yingjunda@stu.pku.edu.cn,pjzhou@pku.edu.cn,zhangl@math.pku.edu.cn} \\
}

\begin{document}

\maketitle

\begin{abstract}
Inferring cellular dynamics from unpaired single-cell snapshots requires modeling both state transitions and population growth or death. Unbalanced dynamic optimal transport (UDOT) addresses this by penalizing growth along transport paths, making the choice of growth penalty a key way to encode biological priors on proliferation and apoptosis.  However, existing UDOT solvers either rely on computationally expensive NeuralODE simulations or depend on analytical solutions of conditional paths, restricting their efficiency solely to quadratic penalties, i.e. Wasserstein-Fisher-Rao (WFR) geodesics. To enable an efficient UDOT solver for general growth penalties, we first show that concave growth penalties lead to degenerate solutions where growth and transport are separated.  We then introduce \textbf{S}imulation-free \textbf{U}nbalanced \textbf{D}ynamic \textbf{O}ptimal transport (SUDO), a simulation-free framework for UDOT with general non-quadratic convex growth penalties. SUDO learns the conditional paths and transport costs, solves the induced semi-coupling problem, and subsequently leverages unbalanced flow matching to achieve a simulation-free solution. On WFR benchmarks, SUDO matches the accuracy of efficient, analytical solution-driven algorithms while outperforming simulation-based methods in computational speed. Beyond WFR, SUDO supports asymmetric penalties that encode proliferation-dominant priors and produce more plausible trajectories and growth estimates on synthetic and single-cell datasets.
\end{abstract}

\section{Introduction}
Inferring the underlying dynamics of complex biological processes is a central challenge in single-cell trajectory inference. However, single-cell RNA sequencing (scRNA-seq) inherently destroys the cell, restricting our observations to static, unpaired temporal snapshots rather than continuous time-series profiles of individual cells. To overcome this fundamental limitation, a standard paradigm is to treat the observed snapshots at different time points as probability distributions and connect them using Dynamic Optimal Transport (DOT) \citep{waddingot,bunne2023learning,trajectorynet,cfm_tong}. By minimizing the transport cost between distributions, DOT provides a mathematically rigorous framework for reconstructing the continuous cellular trajectories from snapshots.

While DOT assumes strict mass conservation during transitions, biological systems are characterized not only by state transitions but also by cell proliferation and apoptosis. Ignoring growth of the cell population can lead to incorrect inferences of the underlying trajectories. To address this issue, recent studies have introduced unbalanced extensions of OT (UOT) \citep{UOT,klein2024genot,moscot,trajectorynet,neural_uot,TIGON,mioflow2}. Among all extensions, we focus on the unbalanced dynamic OT (UDOT) formulation, where a growth penalty denoted $\Psi$ is added to the DOT cost, thus explicitly modeling the growth of the cell population.

However, efficiently solving UDOT remains a challenge. Current UDOT solvers generally fall into two categories. The first category comprises simulation-based methods \citep{TIGON,DeepRUOT,sun2026variational}. While flexible, these methods rely on computationally expensive NeuralODE simulations \citep{NeuralODE}, resulting in scalability issues and making it challenging for large-scale single-cell datasets. The second category includes simulation-free methods \citep{wfr_fm}, which are based on the flow matching framework \citep{cfm_lipman} and exhibit improved scalability. However, since these methods strictly depend on the availability of analytical solutions for the conditional path and transport cost, they are restricted to the Wasserstein-Fisher-Rao (WFR) problem \citep{chizat2018interpolating,chizat2018unbalanced}, where the penalty $\Psi$ must be a specific quadratic function.

The ability to model general penalty functions $\Psi$ is essential for capturing complex biological realities. For example, the standard quadratic penalty symmetrically penalizes proliferation and apoptosis; however, this can be an unrealistic assumption for many biological scenarios. If prior knowledge tells us that a specific developmental process involves minimal apoptosis but massive proliferation, an asymmetric $\Psi$ can be a more biologically sound option. As discussed in \citep{BSB,sun2026variational}, different biological priors or reference dynamics naturally correspond to different choices of $\Psi$. Currently, a central trade-off exists: methods capable of handling general $\Psi$ suffer from scalability issues, while efficient algorithms are restricted to quadratic \(\Psi\). There is a need for a UDOT solver that balances the flexibility of $\Psi$ selection with computational efficiency.

To address these challenges, we propose \textbf{S}imulation-free \textbf{U}nbalanced \textbf{D}ynamic \textbf{O}ptimal transport (SUDO), a simulation-free framework for efficiently solving UDOT problems with general growth penalties. By learning conditional paths and transport costs, SUDO extends the framework proposed by \citep{wfr_fm}, enabling it to solve UDOT problems in a simulation-free manner without relying on the analytical solution of conditional paths. Our main contributions are summarized below:

\begin{itemize}

\item We introduce SUDO, an efficient simulation-free framework for UDOT problems with a broad class of growth penalties, balancing the flexibility of penalty selection with computational efficiency.

\item We theoretically identify the non-degenerate regime of growth penalties for UDOT. In particular, we show that concave growth penalties lead to degenerate UDOT solutions where non-trivial growth and transport are separated, thereby motivating convex growth penalties as the relevant class for biologically meaningful trajectory inference.

\item We demonstrate that SUDO recovers the accuracy of analytical solution-based solvers on WFR problems while significantly outperforming NeuralODE-based baselines in scalability.

\item We demonstrate the necessity of selecting different $\Psi$ as biological priors, demonstrating SUDO's flexibility in integrating diverse prior knowledge through tailored penalty selection.

\end{itemize}

\section{Related works}
\subsection{Unbalanced Optimal Transport}
While traditional static Optimal Transport (OT) \cite{kantorovich1942translocation} seeks the optimal coupling between two probability distributions, \citep{benamou2000computational} showed that using a squared Euclidean transport cost is equivalent to finding a continuous probability flow that minimizes total kinetic energy, known as dynamic OT. Unbalanced OT generalizes it to measures with different total masses, typically by adding penalties to the marginals in the static form \citep{UOT,figalli2010optimal} or to the mass growth in the dynamic form \citep{liero2018optimal,chizat2018unbalanced,Kondratyev}. The latter is called the unbalanced dynamic OT (UDOT). A well-known example is the quadratic penalty case, referred to as Wasserstein-Fisher-Rao (WFR) metric. \citep{chizat2018interpolating,chizat2018unbalanced} provided both the analytical solution between Diracs and the static form for the WFR problem, alongside an efficient solver for the latter. Nevertheless, whether analytical solutions exist for non-quadratic penalties remains unclear.

\subsection{Flow Matching and Extensions}
Flow matching \citep{cfm_lipman,albergo2022building} is a simulation-free framework for learning probability path interpolating observed marginals. Previous works combined it with OT-couplings for learning dynamics \citep{cfm_tong,klein2024genot,rohbeck2025modeling}. Stochastic extensions \citep{tong2023simulationfree,lee2025mmsfm} and unbalanced extensions \citep{UOT,cao2025taming,corso2025composing,wang2025joint,wfr_fm,USB} are also introduced to model complex dynamics. At the core of the framework, one needs to design the conditional path corresponding to the underlying dynamics. When closed form is not available, it is also possible to learn it \citep{kapusniak2024metric, petrovic2026curly}.

\subsection{Neural Network solvers for Unbalanced Dynamic Optimal Transport}
Motivated by single-cell dynamics inference, researchers tried to solve UDOT with neural networks. \citep{TIGON,DeepRUOT,sun2026variational,peng2026stvcr} addressed the problem via NeuralODE \citep{NeuralODE}, which can deal with arbitrary \(\Psi\) but is computationally expensive. \citep{UOT,cao2025taming,corso2025composing,wang2025joint} introduced flow matching into the field. Mathematically, however, the problem they solved is not exactly the UDOT problem. Recently, \citep{wfr_fm} proposed a simulation-free framework for WFR, but it can only solve UDOT with quadratic \(\Psi\). A simulation-free framework for UDOT with general \(\Psi\) is therefore needed.

\section{Preliminaries}
\label{sec:pre}
\textbf{Optimal Transport (OT) and Extensions.} Let $\mathcal{M}_+(\mathcal{X})$ represent the set of all absolutely continuous finite measures supported on $\mathcal{X} \subseteq \mathbb{R}^d$. $\mu_0(\bm{x})$, $\mu_1(\bm{x})$ are two measures in $\mathcal{M}_+(\mathcal{X})$. Assuming \(\mu_0,\mu_1\) are both probability measures, the static OT between \(\mu_0\) and \(\mu_1\) \citep{kantorovich1942translocation} is defined as
\begin{equation}
\label{eq:static OT}
\begin{aligned}
&\text{OT}(\mu_0,\mu_1) =\inf_{\gamma} \int_{\mathcal{X}^2}  \, C(\bm{x},\bm{y})\gamma(\bm{x},\bm{y})\mathrm{d} \bm{x}\mathrm{d} \bm{y}\\
&\text{s.t.}\int_{\mathcal{X}}\gamma(\bm{x},\bm{y})\mathrm{d} \bm{y}=\mu_0(\bm{x}),\ \int_{\mathcal{X}}\gamma(\bm{x},\bm{y})\mathrm{d} \bm{x}=\mu_1(\bm{y})
\end{aligned}
\end{equation}
where \(\gamma\) is called the coupling, and \(C(\bm{x},\bm{y})\) is the transport cost of transporting unit mass from \(\bm{x}\) to \(\bm{y}\). When choosing \(C(\bm{x},\bm{y})=\frac{1}{2}\Vert\bm{x}-\bm{y}\Vert^2\), it also has a dynamic OT (DOT) formulation, also known as the BB-form \citep{benamou2000computational}.
\begin{equation}
\label{eq:dynamic OT}
\begin{aligned}
&\text{OT}(\mu_0,\mu_1) =\inf_{\rho,\bm{u}} \int_0^1\int_{\mathcal{X}}  \, \frac{1}{2}\Vert \bm{u}(\bm{x},t)\Vert_2^2\rho_t(\bm{x})\mathrm{d} \bm{x}\mathrm{d}t \\
&\text{s.t.} \ \ \ \ \ \ \ \partial_t\rho+\nabla_{\bm{x}}\cdot(\rho \bm{u})=0,\ \rho_0=\mu_0,\ \rho_1=\mu_1
\end{aligned}
\end{equation}

The velocity field \(\bm{u}\) models how mass flows through the space, transforming \(\mu_0\) to \(\mu_1\) with least kinetic energy cost. It is also known as the square of the 2-Wasserstein distance (scaled by \(\frac{1}{2}\)). With clear interpretability, DOT has been widely used in single-cell dynamics inference \citep{cfm_tong,moscot,klein2024genot}. Recently, researchers also utilized unbalanced DOT (UDOT) to simultaneously model single-cell trajectories and cell population growth. When the total mass of \(\mu_0,\mu_1\) may be different, the UDOT between \(\mu_0,\mu_1\) is defined as  
\begin{equation}
\label{eq:UDOT}
\begin{aligned}
&\text{UDOT}(\mu_0,\mu_1) =
\inf_{\rho,g,\bm{u}} \int_0^1\int_{\mathcal{X}}  \, \frac{1}{2}\Big(\Vert \bm{u}(\bm{x},t)\Vert_2^2+\Psi(g(\bm{x},t))\Big)\rho_t(\bm{x})\mathrm{d} \bm{x}\mathrm{d}t \\
&\text{s.t.}\ \ \ \ \ \ \ \ \ \ \ \ \ \ \partial_t\rho+\nabla_{\bm{x}}\cdot(\rho \bm{u})=\rho g,\ \rho_0=\mu_0,\ \rho_1=\mu_1
\end{aligned}
\end{equation}
where \(g\) is a growth rate for modeling cell proliferation and apoptosis \citep{TIGON}. \(\Psi\) is the growth penalty for enforcing relative preferred growth rate. When \(\Psi(g)=\delta^2g^2\), (\ref{eq:UDOT}) is known as the square of the Wasserstein-Fisher-Rao (WFR) metric \citep{chizat2018interpolating,liero2018optimal}. One can also consider the stochastic trajectory by replacing the continuity equation constraint with Fokker-Planck equation. The resulting optimization problem is called the regularized unbalanced OT (RUOT) \citep{DeepRUOT, BSB}.
\begin{equation}
\label{eq:dynamic RUOT}
\begin{aligned}
&\text{RUOT}(\mu_0,\mu_1) =
\inf_{\rho,g,\bm{u}} \int_0^1\int_{\mathcal{X}}  \, \frac{1}{2}\Big(\Vert \bm{u}(\bm{x},t)\Vert_2^2+\Psi(g(\bm{x},t))\Big)\rho_t(\bm{x})\mathrm{d} \bm{x}\mathrm{d}t \\
&\text{s.t.}\ \ \ \ \ \ \ \ \ \ \ \ \ \ \partial_t\rho+\nabla_{\bm{x}}\cdot(\rho \bm{u})=\rho g+\frac{\sigma^2}{2}\Delta_{\bm{x}}\rho,\ \rho_0=\mu_0,\ \rho_1=\mu_1
\end{aligned}
\end{equation}
\textbf{Continuous Measure Flow.} Consider a time dependent velocity field $\bm{u} :  \mathbb{R}^d \times[0,1] \to \mathbb{R}^d$, and a time dependent growth rate $g : \mathbb{R}^d \times[0,1] \to \mathbb{R}$. A population of weighted particles \((\bm{x}_i,m_i)\) evolve under the ODE: \(\mathrm{d}\bm{x}_t = \bm{u}(\bm{x}_t,t)\mathrm{d}t,\  \mathrm{d}\operatorname{ln}m_t=g(\bm{x}_t,t)\mathrm{d}t\). The time-dependent measure \(\rho_t\) induced by the particles \(\{\bm{x}_i\}\), weighted by their mass \(\{m_i\}\), follows the continuity equation with source term \(\partial_t\rho+\nabla_{\bm{x}}\cdot(\rho \bm{u})=\rho g\). One can also replace the ODE with SDE to introduce stochasticity. A diffusion term will be added to the continuity equation with source term.



\textbf{Unbalanced Flow Matching (UFM).} \citep{wfr_fm} developed a simulation-free UFM framework for learning the continuous measure flow satisfying \(\partial_t\rho+\nabla_{\bm{x}}\cdot(\rho \bm{u})=\rho g\), which interpolates \(\mu_0\) and \(\mu_1\). They parameterized two neural networks \(\bm{u}_{\bm{\theta}}\) and \(g_{\bm{\theta}}\) to approximate the true \(\bm{u}\) and \(g\). The neural networks are trained to minimize the regression loss (UFM loss).
\begin{equation}
\begin{aligned}
&\mathcal{L}_{\text{UFM}}(\bm{\theta})=\int_{0}^{1}\int_\mathcal{X} (\left\| \bm{\bm{u}_{\theta}}(\bm{x},t) - \bm{u}(\bm{x},t) \right\|_2^2+\left\| g_{\bm{\theta}}(\bm{x},t) - g(\bm{x},t) \right\|_2^2)\rho_t(\bm{x})\mathrm{d} \bm{x}\mathrm{d}t
\label{eq:UFM}
\end{aligned}
\end{equation}
Although the true \(\rho, \bm{u},g\) are intractable, they proved that minimizing the loss above is equivalent to minimize the conditional version below (CUFM loss).
\begin{equation}
\begin{aligned}
\label{eq:CUFM}
&\mathcal{L}_{\text{CUFM}}(\bm{\theta})=
\mathbb{E}_{t, \bm{z}, \bm{x}\sim \tilde{\rho}_t(\bm{x}\vert \bm{z})}m_t(\bm{z})\big(\left\| \bm{\bm{u}_{\theta}}(\bm{x},t) - \bm{u}_t(\bm{x}\vert\bm{z}) \right\|_2^2
+\left\| g_{\bm{\theta}}(\bm{x},t)- g_t(\bm{x}\vert\bm{z}) \right\|_2^2\big)
\end{aligned}
\end{equation}
where \(t\sim \mathcal{U}[0,1]\), \(\bm{z}\) is a conditional variable with density \(q(\bm{z})\), \(\rho_t(\bm{x}\vert\bm{z})\) is called the conditional measure path such that the marginal measure path \(\rho_t(\bm{x})\) satisfies that \(\rho_t(\bm{x})=\int\rho_t(\bm{x}\vert\bm{z})q(\bm{z})\mathrm{d}\bm{z}\). The conditional measure is further decoupled into a mass term and a probability density \(\rho_t(\bm{x}\vert\bm{z}) = m_t(\bm{z})\tilde{\rho}_t(\bm{x}\vert\bm{z})\). Following \citep{chizat2018unbalanced,wfr_fm}, one can choose \(\bm{z}=(\bm{x}_0,\bm{x}_1)\) drawn from the UDOT semi-coupling \(\gamma_0(\bm{x}_0,\bm{x}_1)\) (introduced in section \ref{sec:theory}), and derive the conditional velocity \(\bm{u}_t(\bm{x}\vert\bm{z})\), growth rate \(g_t(\bm{x}\vert\bm{z})\) and mass \(m_t(\bm{z})\) from the travelling Dirac (also introduced in section \ref{sec:theory}). The resulting flow recovers the UDOT flow. 


\textbf{Growth Penalty as Biological Prior.}
While previous studies have explored the WFR problem i.e. the quadratic penalty $\Psi(g)=\delta^2g^2$, which symmetrically penalizes the absolute magnitude of the growth rate to favor moderate rates of both cell proliferation and apoptosis, more general \(\Psi\) can be introduced to incorporate other type of biological priors.

A family of convex \(\Psi\) was discussed by \citep{BSB}. Three specific \(\Psi\) were introduced as proliferation/apoptosis preference with nice biological interpretation where cellular dynamics are modeled via branching stochastic process. By utilizing these $\Psi$, we can explicitly incorporate preferences for proliferation and apoptosis into the growth modeling. For details, see \ref{appendix:BSB}.
\begin{equation}
\label{eq:BSB Psi}
\left \{
\begin{aligned}
&\Psi(g)=2(1-g+g\operatorname{log}g)\ (g>0)\ &\text{Only-growth}\\
&\Psi(g)=2(1+g-g\operatorname{log}(-g))\ (g<0)\ &\text{Only-death}\\
&\Psi(g)=2\big(1-\sqrt{1+g^2}+g\operatorname{log}(g+\sqrt{1+g^2})\big)\ &\text{No preference}\\
\end{aligned}
\right .
\end{equation}
In parallel, a family of concave \(\Psi(g)=|g|^p,0<p<1\) was also introduced as regularization terms by \citep{sun2026variational}. The convexity or concavity of $\Psi$ determines a certain monotonicity of the growth rate during the process. A more detailed discussion on their results can be found in \ref{appendix:varrout}.

In this paper, we consider a broader class of $\Psi$ satisfying the following assumptions: 1) $\Psi$ first decreases and then increases, reaching a minimum value of 0. This implies the existence of a unique, most-preferred growth rate $g_0$, and the penalty increases the further the rate deviates from this value; 2) $\Psi$ is either convex or concave on both sides of $g_0$, meaning that the marginal penalty for deviating from $g_0$ is either increasing or decreasing. Our first results can be stated as following.

\begin{theorem}
\label{thm:concave}
For \(\Psi\) which is concave on both sides of \(g_0\), the solution of UDOT problem (\ref{eq:UDOT}) separates the non-trivial growth and transport.
\end{theorem}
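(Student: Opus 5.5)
Our plan is to reduce the dynamic problem to single-particle (travelling Dirac) paths and then argue pathwise. By the standard Lagrangian reformulation, a feasible triple $(\rho,\bm{u},g)$ is a superposition of weighted particles $(\bm{x}_t,m_t)$ with $\dot{\bm{x}}_t=\bm{u}(\bm{x}_t,t)$ and $\frac{\mathrm{d}}{\mathrm{d}t}\ln m_t=g(\bm{x}_t,t)$. The cost then equals
\begin{equation*}
\int \int_0^1 \tfrac12\big(\Vert\dot{\bm{x}}_t\Vert^2+\Psi(g_t)\big)m_t\,\mathrm{d}t .
\end{equation*}
It therefore suffices to show that, for a single particle moving from $(\bm{x}_0,m_0)$ to $(\bm{x}_1,m_1)$, the optimal path can be chosen so that the time set where $g_t\neq g_0$ and the time set where $\dot{\bm{x}}_t\neq 0$ are (essentially) disjoint. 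We will interpret "separates non-trivial growth and transport" in exactly this sense: nontrivial growth $g\neq g_0$ only happens while the particle is at rest.

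The key step is a convexity/concavity rearrangement in the growth variable. Fix the spatial trajectory up to reparametrization, and write $s_t=\ln m_t$, so that $\dot s_t=g_t$. Subtract the baseline growth $g_0$ by setting $h_t=g_t-g_0$, which gives $\int_0^1 h_t\,\mathrm{d}t=\ln(m_1/m_0)-g_0$ as a fixed constraint. On each side of $g_0$ the penalty $\Psi$ is concave with $\Psi(g_0)=0$, so $\Psi(g_0+h)/|h|$ is nonincreasing in $|h|$. Concentrating a given amount of excess growth into a shorter time window at a larger rate therefore does not increase $\int\Psi(g_t)\,\mathrm{d}t$; this is subadditivity of a concave function vanishing at the minimum. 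We will also handle the mixed-sign case: cancelling positive and negative parts of $h$ only lowers the cost, so optimal $h$ has one sign.

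Next comes the weighting. The integrand is weighted by the mass $m_t$, and the kinetic term is weighted by $m_t$ as well. We split $[0,1]$ into a transport interval, where $g=g_0$ and the particle moves, and a growth interval, where the particle is stationary with growth rate $g\neq g_0$. We then compare with any mixed path by a time-rescaling argument. Compressing the movement into a subinterval of length $\tau$ raises the kinetic cost by a factor of $1/\tau$. Compressing the growth into the complementary interval of length $1-\tau$ changes the growth cost in a way controlled by concavity, so that it does not increase and, under strict concavity, saturates toward $\lim_{|h|\to\infty}\Psi(g_0+h)/|h|$. Placing growth at whichever endpoint carries the smaller mass weight can only help; we choose growth at the end if the particle shrinks and at the start if it grows, or whichever is cheaper. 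Optimizing over $\tau$ then yields a separated path whose cost is at most that of the mixed path.

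We expect the main obstacle to be the coupling between the mass weight $m_t$ and the kinetic term. Moving while the mass differs changes the kinetic cost, so pure rearrangement in time is not obviously cost-neutral. We will first try to handle this with an explicit two-phase comparison, treating the per-particle problem as a one-dimensional variational problem in $(|\dot{\bm{x}}|,g)$ with Lagrangian $\tfrac12 m(v^2+\Psi(g))$. We will then show that its lower-semicontinuous relaxation is jointly concave in the growth direction, so that extreme points of the relaxed problem have $g\in\{g_0,\pm\infty\}$ concentrated on times with $v=0$. If this fails, we will instead show that minimizing sequences degenerate: the infimum is approached by paths with instantaneous jumps in mass, at rest, followed by pure transport. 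Separation would then hold in the limit, which is again the claimed degeneracy.
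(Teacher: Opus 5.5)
\paragraph{The central comparison fails.} The competitor you describe is the wrong one, not just a missing estimate. You optimize the growth profile on its own: $h$ of one sign, all growth in one window at one endpoint, and a kinetic cost merely rescaled by $1/\tau$. But the degeneracy comes from the interaction between mass and kinetic energy. For fixed $m$, the Cauchy--Schwarz step in the proof of Theorem~\ref{thm:uniformly convex} gives $\inf_k\int_0^1 m\dot k^2\,\mathrm{d}t=d^2\bigl(\int_0^1 m^{-1}\,\mathrm{d}t\bigr)^{-1}$. So moving the growth changes the mass carried during transport, and it pays to shrink \emph{before} moving and regrow \emph{after}.

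Here is a concrete case. Take $g_0=0$ and $m_0=m_1$. Let $\alpha,\beta$ be the asymptotic slopes of $\Psi$ at $\pm\infty$ (both zero in the sublinear case), and suppose $d^2>\alpha+\beta$. Consider a mixed path that shrinks to $\varepsilon m_0$ at a large rate while already moving, moves, then regrows at a large rate while still moving. It costs about $\tfrac12(\alpha+\beta)m_0$, because $\int\Psi(\dot m/m)m\,\mathrm{d}t=\int\frac{\Psi(g)}{|g|}|\dot m|\,\mathrm{d}t$ and the kinetic term vanishes as $\varepsilon\to0$. Every two-phase path of your type carries mass $m_0$ throughout its transport window, so it costs at least $\tfrac12 d^2m_0$, which is larger.

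This has three consequences for your plan. ``Optimizing over $\tau$'' cannot yield a dominating separated path. Your claim that cancelling positive and negative parts of $h$ only lowers the cost is false; near-optimal paths use both signs, as in the paper's Case~1 construction. And your default rule (growth at the end if shrinking, at the start if growing) makes transport carry $\max(m_0,m_1)$, which is the worse choice.

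\paragraph{What is missing, and how to repair it.} In general no minimizer exists. For $d>0$ the sublinear infimum is $0$ and is not attained, and the linear-growth infimum needs instantaneous jumps. So ``the optimal path can be chosen'' is vacuous, and you must pass to the relaxation. That relaxation is the convex envelope $\Psi^{**}(g)=\alpha(g-g_0)^++\beta(g-g_0)^-$, not something concave in the growth direction.

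With $\Psi^{**}$ your particle route can be completed when $g_0=0$, as follows.
\begin{itemize}
\item Set $H=\bigl(\int_0^1 m^{-1}\,\mathrm{d}t\bigr)^{-1}$. Since $\min m\le H\le\max m$ and $m$ is continuous, $m$ attains the value $H$.
\item Use the three-phase path: jump $m_0\to H$ at rest, transport at constant mass $H$, then jump $H\to m_1$ at rest.
\item Its kinetic value is $\tfrac12 d^2H$, which lower-bounds the original kinetic term.
\item Its jump cost is at most $\tfrac12\bigl(\alpha\,\mathrm{TV}^+(m)+\beta\,\mathrm{TV}^-(m)\bigr)$, which lower-bounds the original growth term because $\Psi\ge\Psi^{**}$.
\end{itemize}
For $g_0\neq0$ you need the substitution $\hat m=e^{-g_0t}m$ and must handle the weights $e^{g_0t}$.

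You also need to reassemble the particles with a common schedule. For concave $\Psi$, Jensen runs the wrong way on the Eulerian growth rate: $\rho\Psi(\bar g)\ge\sum_i m_i\Psi(g_i)$ for rates on the same side of $g_0$. So particles with individually optimized windows that overlap can raise the cost of (\ref{eq:UDOT}).

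The paper avoids all of this by staying Eulerian. When $\Psi=o(|g|)$ it uses the explicit kill-and-recreate construction. When $\Psi\sim|g|$ it relaxes to $\Psi^{**}$ and reads separation off the KKT conditions: where $\tilde s\neq0$ the multiplier equals $\tfrac12\alpha e^{g_0t}$ or $-\tfrac12\beta e^{g_0t}$, so $\nabla_{\bm x}\phi=0$ and $\bm u=0$. This gives separation for every relaxed minimizer, not just for one you choose.
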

The proof is left to \ref{pf:concave}. As discussed in \citep{wang2025joint,wfr_fm}, in a biologically realistic cellular trajectory, a non-trivial population growth and cell state transitions tend to occur concurrently. Thus, in the following part of this paper, we focus on convex \(\Psi\).

\section{Travelling Dirac and Static Form}
\label{sec:theory}
As an important special case, we first study the UDOT problem between two Diracs. The corresponding optimal path $(\bm{x}(t), m(t))$ is usually called the travelling Dirac \citep{chizat2018interpolating}.
\begin{equation}
\label{eq:Dirac UDOT}
\begin{aligned}
&\text{UDOT-DD}(m_0\delta_{\bm{x}_0},m_1\delta_{\bm{x}_1}) = \inf_{m,\bm{x}} \int_0^1\frac{1}{2}\Big(\Vert \dot{\bm{x}}(t)\Vert_2^2+ \Psi(\frac{\dot{m}(t)}{m(t)})\Big)m(t)\mathrm{d}t \\
&\text{s.t.} \ \ \ \ \ \ \ \ \ \ \ \ \ \ \ m(0)=m_0,m(1)=m_1,\bm{x}(0)=\bm{x}_0,\bm{x}(1)=\bm{x}_1&
\end{aligned}
\end{equation}

\begin{theorem}
\label{thm:EL eqn}
There exists two 1-dimensional functions $k(t),l(t)$ which only depend on $d=\Vert \bm{x}_1-\bm{x}_0\Vert,r=\frac{m_1}{m_0}$ s.t. $\bm{x}(t)=\bm{x}_0+k(t)\frac{\bm{x}_1-\bm{x}_0}{d}$, $m(t)=m_0l(t)$, i.e. the travelling Dirac is straight.
\end{theorem}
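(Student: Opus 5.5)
We prove the claim in two steps: first show that the optimal trajectory runs along the segment $[\bm{x}_0,\bm{x}_1]$, then show that after a change of variables the remaining one-dimensional problem depends only on $d$ and $r$.

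\textbf{Step 1: the path is straight.} Fix any admissible mass profile $m(t)>0$. The growth term $\int_0^1\Psi(\dot m/m)\,m\,\mathrm{d}t$ does not depend on $\bm{x}$. So, for that $m$, the spatial part is the weighted length-type problem
\[
\inf_{\bm{x}}\int_0^1\tfrac12\Vert\dot{\bm{x}}\Vert^2 m(t)\,\mathrm{d}t .
\]
Decompose $\bm{x}(t)-\bm{x}_0=a(t)\bm{e}+\bm{w}(t)$, where $\bm{e}=(\bm{x}_1-\bm{x}_0)/d$ and $\bm{w}\perp\bm{e}$. The boundary conditions give $\bm{w}(0)=\bm{w}(1)=0$. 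Since
\[
\Vert\dot{\bm{x}}\Vert^2=\dot a^2+\Vert\dot{\bm{w}}\Vert^2 \quad\text{and}\quad m>0,
\]
replacing $\bm{w}$ by $0$ never increases the cost and keeps $(m,\bm{x})$ admissible. Hence any optimal travelling Dirac can be taken of the form $\bm{x}(t)=\bm{x}_0+k(t)\bm{e}$. It is strictly straight unless $\dot{\bm{w}}\equiv0$ fails on a null set, in which case the cost strictly decreases, so every minimizer is straight.

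For fixed $m$ one can also solve for $k$ explicitly. The Euler--Lagrange equation $\frac{\mathrm{d}}{\mathrm{d}t}(m\dot k)=0$ gives $\dot k=c/m$. Imposing $k(0)=0$ and $k(1)=d$ yields $c=d/\int_0^1 m^{-1}$. This is a useful reduction, though it is not strictly necessary.

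\textbf{Step 2: dependence only on $d$ and $r$.} Write $m(t)=m_0 l(t)$, so that $l(0)=1$, $l(1)=r$ and $\dot m/m=\dot l/l$. The objective becomes
\[
m_0\int_0^1\tfrac12\big(\dot k^2+\Psi(\dot l/l)\big)\,l\,\mathrm{d}t,
\qquad k(0)=0,\quad k(1)=d .
\]
The factor $m_0$ is a positive constant and does not affect the minimizers. The reduced problem in $(k,l)$ involves only $d$, $r$ and $\Psi$. It is independent of $\bm{x}_0$, of the direction $\bm{e}$ and of $m_0$, which gives the stated form with $k,l$ depending only on $(d,r)$.

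\textbf{Main obstacle: existence and uniqueness of the reduced minimizer.} For $k,l$ to be well-defined functions, I will argue as follows:
\begin{itemize}
\item Substitute $\dot k=c/l$ with $c=d/\int_0^1 l^{-1}$. The problem becomes
\[
\tfrac12\, d^2\Big/\!\int_0^1 l^{-1}\,\mathrm{d}t \;+\; \tfrac12\int_0^1\Psi(\dot l/l)\,l\,\mathrm{d}t .
\]
\item Under the convexity of $\Psi$ on both sides of $g_0$, the perspective function $(l,\dot l)\mapsto l\,\Psi(\dot l/l)$ is jointly convex.
\item Existence then follows by the direct method, using coercivity of $\Psi$ in the setting of convex growth.
\end{itemize}

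If uniqueness fails, the theorem should be read as "there exist" such $k,l$: choose one minimizer of the reduced problem for each $(d,r)$. I expect this existence and selection point, rather than the straightness itself, to be the only delicate part.
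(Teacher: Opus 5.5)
Your argument is correct. Your Step 2 is essentially the paper's invariance argument (translating $\bm{x}$, scaling $m$, and noting that only $\Vert\dot{\bm{x}}\Vert$ enters), written as an explicit substitution. The real difference is Step 1.

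The paper writes down the Euler--Lagrange system and reads straightness off the conservation law $\frac{\mathrm{d}}{\mathrm{d}t}(m\dot{\bm{x}})=\bm{0}$. The momentum $m\dot{\bm{x}}$ is a constant vector, so the velocity keeps a fixed direction, and the endpoint condition forces it to be that of $\bm{x}_1-\bm{x}_0$. You replace this with a direct comparison. The transverse part $\bm{w}$ vanishes at both ends, adds $\tfrac12\Vert\dot{\bm{w}}\Vert^2 m\ge 0$ to the cost, and does not touch the growth term, so deleting it is free.

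What your route buys:
\begin{itemize}
\item It needs no differentiability of $\Psi$ and no regularity of the minimizer.
\item It applies to every minimizer, not only to critical points.
\item It gives an exact equivalence between the full and reduced problems: equal infima up to the factor $m_0$, and minimizers in correspondence.
\end{itemize}
That equivalence is what justifies your selection remark when minimizers are not unique. The paper's ``shares the same minimizer'' step leaves this point implicit.

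The Euler--Lagrange route gives a little more for free, namely $\dot k=c/m$ and hence a monotone $k$. You recover this separately, and the paper reuses it in the Cauchy--Schwarz reduction in the proof of Theorem~\ref{thm:uniformly convex}.

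Your sentence about $\dot{\bm{w}}\equiv 0$ ``failing on a null set'' is garbled. The intended statement is: if $\dot{\bm{w}}\neq\bm{0}$ on a set of positive measure, the cost strictly drops, so $\bm{w}\equiv\bm{0}$ for every minimizer.

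One caution: your existence sketch overclaims. The reduced functional contains the concave term $\frac{d^2}{2}\big(\int_0^1 l^{-1}\,\mathrm{d}t\big)^{-1}$, which rewards letting $l$ pinch towards $0$. So joint convexity of the perspective $l\,\Psi(\dot l/l)$ does not produce a minimizer with $l>0$.

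Concretely, take $\Psi(g)=\delta^2g^2$ and substitute $\rho=\sqrt{l}$, $\theta=k/(2\delta)$. The energy becomes $2\delta^2\int_0^1(\dot\rho^2+\rho^2\dot\theta^2)\,\mathrm{d}t$, i.e.\ planar kinetic energy in polar coordinates with $\theta$ unwrapped. Once $d\ge 2\pi\delta$, the infimum $2\delta^2(\sqrt{m_0}+\sqrt{m_1})^2$ is not attained by any path with $\rho>0$, because the shortest path runs through the apex $\rho=0$. So no travelling Dirac with positive mass exists. The direct method can at best give a minimizer in a relaxed class where $l$ may vanish and $k$ may jump.

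Existence therefore has to be assumed, as the paper implicitly does by writing Euler--Lagrange equations, or restricted to small $d$. This does not affect the theorem itself, which your argument proves for any minimizer that exists.
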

The proof is left to \ref{pf:EL eqn}. The theorem reduces the high-dimensional optimization problem (\ref{eq:Dirac UDOT}) to a 1-dimensional problem. 
\begin{equation}
\label{eq:Path}
\begin{aligned}
&\text{UDOT-DD}(m_0\delta_{\bm{x}_0},m_1\delta_{\bm{x}_1})=C_d(m_0,m_1) = m_0\inf_{l,k} \int_0^1\frac{1}{2}\Big(\dot{k}(t)^2+ \Psi(\frac{\dot{l}(t)}{l(t)})\Big)l(t)\mathrm{d}t \\
&\text{s.t.} \ \ \ \ \ \ \ \ \ \ \ \ \ \ \ \ \ \ \  \ \ \ \ \ \ \ \ \ \ \ \ \ l(0)=1,l(1)=r,k(0)=0,k(1)=d&
\end{aligned}
\end{equation}

Some properties are straightforward.
\begin{proposition}
\label{prop:cost form}
The cost $C_d$ is a homogeneous function of degree 1 w.r.t $(m_0, m_1)$. Thus, it takes form of $C_d(m_0,m_1)=m_0f_d(\frac{m_1}{m_0})$ where $f_d(r)=C_d(1,r)$. For even $\Psi$, $C_d$ is symmetric. 
\end{proposition}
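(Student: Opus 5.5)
Homogeneity follows from a direct rescaling of the one-dimensional problem (\ref{eq:Path}), and symmetry from time reversal. By Theorem \ref{thm:EL eqn}, it suffices to work with the reduced formulation. Define the functional
\[
J(m,\bm{x})=\int_0^1\frac{1}{2}\Big(\Vert\dot{\bm{x}}\Vert^2+\Psi(\dot m/m)\Big)m\,\mathrm{d}t .
\]
Take any admissible pair $(m,\bm{x})$ connecting $(m_0,\bm{x}_0)$ to $(m_1,\bm{x}_1)$, and a scalar $\lambda>0$. The pair $(\lambda m,\bm{x})$ is then admissible for the endpoints $(\lambda m_0,\lambda m_1)$. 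The growth rate $\dot m/m$ does not change under scaling, since $\frac{\mathrm{d}}{\mathrm{d}t}(\lambda m)/(\lambda m)=\dot m/m$. The integrand is linear in $m$ for fixed $\dot m/m$ and $\dot{\bm{x}}$, so $J(\lambda m,\bm{x})=\lambda J(m,\bm{x})$. This map is a bijection between admissible sets, with inverse given by scaling with $1/\lambda$. Taking infima gives $C_d(\lambda m_0,\lambda m_1)=\lambda C_d(m_0,m_1)$. Choosing $\lambda=1/m_0$ (with $m_0>0$) yields $C_d(m_0,m_1)=m_0C_d(1,m_1/m_0)=m_0f_d(m_1/m_0)$. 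This is also visible directly in (\ref{eq:Path}), where $m_0$ factors out and only $r$ remains.

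For symmetry, assume $\Psi$ is even, so $\Psi(-g)=\Psi(g)$. Given an admissible path $(m,\bm{x})$ from $(m_0,\bm{x}_0)$ to $(m_1,\bm{x}_1)$, define the time-reversed path $\hat m(t)=m(1-t)$ and $\hat{\bm{x}}(t)=\bm{x}(1-t)$. Then:
\begin{itemize}
\item $\dot{\hat{\bm{x}}}(t)=-\dot{\bm{x}}(1-t)$, so the kinetic term is unchanged;
\item $\dot{\hat m}(t)/\hat m(t)=-(\dot m/m)(1-t)$, so by evenness $\Psi$ takes the same value;
\item the weight $\hat m(t)=m(1-t)$ matches.
\end{itemize}
Substituting $s=1-t$ therefore gives $J(\hat m,\hat{\bm{x}})=J(m,\bm{x})$. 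The reversed path connects $(m_1,\bm{x}_1)$ to $(m_0,\bm{x}_0)$, and $d$ is unchanged. Reversal is an involution on admissible paths, so taking infima gives $C_d(m_0,m_1)=C_d(m_1,m_0)$. Equivalently, $f_d(r)=rf_d(1/r)$.

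The only delicate point is admissibility and the regularity class. We need paths with $m>0$ and suitable absolute continuity to be preserved under scaling and reversal. Both operations clearly preserve these properties, so the argument needs no further work. The edge case $m_0=0$ or $m_1=0$ can be handled by the usual convention, either by restricting to $m_0>0$ as in the statement $f_d(r)=C_d(1,r)$ or by a limiting argument.
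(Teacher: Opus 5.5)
Your proof is correct and follows essentially the same route as the paper. Homogeneity comes from rescaling the mass path, using that the growth rate $\dot m/m$ is scale-invariant and the integrand is linear in $m$. Symmetry comes from time reversal combined with evenness of $\Psi$. The only cosmetic difference is that you reverse paths in the unnormalized formulation, which spares you the paper's extra renormalization $\hat l(t)=\frac{1}{r}l(1-t)$ and gives $f_d(r)=rf_d(1/r)$ as an immediate corollary.
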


The proof is left to \ref{pf:cost form}. As an important family of \(\Psi\) which is interested in this paper, for uniformly convex functions, we have the following theorem.

\begin{theorem}
\label{thm:uniformly convex}
For uniformly convex $\Psi$, i.e. \(\exists \kappa>0 \ s.t. 
\Psi''\ge\kappa\), $C_d$ is jointly convex and sublinear w.r.t $(m_0, m_1)$ when \(d\le\pi\sqrt{\frac{\kappa}{2}}\), i.e. $C_d(m_0,m_1)+C_d(n_0,n_1)\ge C_d(m_0+n_0,m_1+n_1)$.
\end{theorem}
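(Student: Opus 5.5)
Since Proposition~\ref{prop:cost form} gives $C_d(m_0,m_1)=m_0 f_d(m_1/m_0)$, $C_d$ is the perspective of $f_d$. The plan is to reduce everything to a one-variable statement: \emph{$f_d$ is convex on $(0,\infty)$ whenever $d\le\pi\sqrt{\kappa/2}$}. The perspective of a convex function is jointly convex, so this gives joint convexity of $C_d$. Sublinearity then follows from homogeneity of degree one:
\[
C_d(m_0+n_0,m_1+n_1)=2\,C_d\!\left(\tfrac{m_0+n_0}{2},\tfrac{m_1+n_1}{2}\right)\le C_d(m_0,m_1)+C_d(n_0,n_1).
\]
So all the work is in the convexity of $f_d(r)=C_d(1,r)$, which is the value of the one-dimensional problem (\ref{eq:Path}) as a function of the terminal mass $r$.

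\textbf{Step 1: variational structure.} Using Theorem~\ref{thm:EL eqn}, work with the reduced Lagrangian $L(l,\dot l,\dot k)=\tfrac12\big(\dot k^2 l+l\,\Psi(\dot l/l)\big)$. The variable $k$ is cyclic, so the momentum $p=l\dot k$ is constant along extremals. Substitute $\dot k=p/l$ and treat $p$ as a Lagrange multiplier enforcing $\int_0^1 p/l\,\mathrm dt=d$. This leaves a one-dimensional problem in $l$ (equivalently in $s=\ln l$) with Lagrangian
\[
e^{s}\,\Psi(\dot s)+p^2e^{-s},
\]
up to the factor $\tfrac12$ and a $p$-dependent additive constant. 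By the envelope theorem, $f_d'(r)$ is the terminal costate $\partial L/\partial \dot l$ at $t=1$ along the optimal extremal. Convexity of $f_d$ is therefore equivalent to this terminal costate being nondecreasing in $r$.

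\textbf{Step 2: monotonicity via second variation.} The terminal costate is nondecreasing in $r$ exactly when the family of extremals, parametrized by $r$ at fixed $d$, has no conjugate point on $[0,1]$. Equivalently, the second variation of the action is positive definite on variations fixing $s(0)$ and $s(1)$ while preserving the constraint $\int p/l=d$. I would write the accessory (Jacobi) problem for a perturbation $\eta$ of $s$. Its kinetic coefficient is $e^{s}\Psi''(\dot s)\ge\kappa e^{s}$, by uniform convexity. Its potential part comes from the Hessian of $e^{s}\Psi(\dot s)+p^2e^{-s}$ in $s$, minus the constraint correction. Following the WFR computation in polar/cone coordinates, where the critical distance is $\pi\delta$, I expect the negative part of the zeroth-order coefficient to be controlled by $p^2e^{-s}$. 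After the rescaling $\tau=\int_0^t p\,e^{-s}$, whose total range is exactly $d$, the Jacobi equation should be dominated by $\eta''+\tfrac{2}{\kappa}\eta\le0$ in the variable $\tau$.

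\textbf{Step 3: Sturm comparison.} Compare with the harmonic oscillator of frequency $\sqrt{2/\kappa}$, whose first conjugate point occurs at $\tau=\pi\sqrt{\kappa/2}$. Since $\tau$ ranges over $[0,d]$ and $d\le\pi\sqrt{\kappa/2}$, Sturm comparison rules out a conjugate point. This gives a nonnegative second variation, hence $f_d''\ge0$. Equivalently, via a Wirtinger inequality $\int\eta'^2\ge(\pi/d)^2\int\eta^2$ applied in the $\tau$ variable, the quadratic form is nonnegative. The case $p=0$ (that is, $d=0$) is immediate, because then $f_0(r)=\inf\int l\Psi(\dot l/l)$ and $l\Psi(\dot l/l)$ is a perspective, hence jointly convex in $(l,\dot l)$.

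\textbf{Main obstacle.} The delicate step is Step 2. I have to compute the second variation correctly, including the coupling through the constraint multiplier $p$, and show that its indefinite part is bounded by the $p^2e^{-s}$ term in a way that makes the reparametrized range equal to exactly $d$ with constant $2/\kappa$. I would first carry this out for $\Psi=\delta^2g^2$, checking that it reproduces the known threshold $\pi\delta$ with $\kappa=2\delta^2$. Then I would use $\Psi''\ge\kappa$ only in the kinetic coefficient, which is the only place where the specific form of $\Psi$ should enter.
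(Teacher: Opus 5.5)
Your reduction to convexity of $f_d$ via the perspective, and the closing sublinearity step from homogeneity, are fine. The gap is in Step~2.

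\textbf{The claimed equivalence in Step~2 is false.} Convexity of $f_d$ is \emph{not} equivalent to the absence of conjugate points, i.e.\ to positivity of the second variation on variations fixing both endpoints. Along a smooth family of minimizers $l_r$, $f_d''(r)$ is the second variation of the reduced functional at $l_r$ in the direction $\eta=\partial_r l_r$. This direction has $\eta(0)=0$ but $\eta(1)=1$: it moves the terminal mass, and fixed-endpoint positivity does not control its sign.

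WFR gives an explicit counterexample. Take $\Psi=\delta^2g^2$, so $\kappa=2\delta^2$. The substitution $R=2\delta\sqrt{l}$, $\theta=k/(2\delta)$ turns the reduced action into $\frac12\int_0^1(\dot R^2+R^2\dot\theta^2)\,\mathrm{d}t$, which is planar kinetic energy in polar coordinates. For $\pi\delta<d<2\pi\delta$ the optimal path is a straight segment avoiding the origin, so it has no conjugate point. Yet $f_d(r)=2\delta^2\big(1+r-2\sqrt{r}\cos\frac{d}{2\delta}\big)$ gives $f_d''(r)=\delta^2r^{-3/2}\cos\frac{d}{2\delta}<0$. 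So for WFR the criterion you state holds up to $d<2\pi\delta$, while convexity of $f_d$ holds only up to $d\le\pi\delta$.

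Your anticipated comparison $\eta''+\frac{2}{\kappa}\eta\le0$, combined with a Dirichlet Wirtinger bound, lands on $\pi\sqrt{\kappa/2}$ only because it discards the constraint correction. The right number would therefore come out of an invalid chain of implications. Two smaller problems:
\begin{itemize}
\item The Routhian potential is $-p^2/(2l)$, not $+p^2e^{-s}$. With your sign there would be no negative part at all.
\item Working in $s=\ln l$ with endpoint $\sigma=\ln r$ produces $\frac{\mathrm{d}^2}{\mathrm{d}\sigma^2}f_d(e^\sigma)=r^2f_d''+rf_d'$, not $f_d''$.
\end{itemize}

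\textbf{What is actually needed.} You need nonnegativity of the second variation on variations that are free at the endpoints, with the constraint correction kept. This is what the paper proves, and it is simpler.
\begin{itemize}
\item Cauchy--Schwarz eliminates $k$ exactly: $\inf_k\int_0^1 l\dot k^2\,\mathrm{d}t=d^2\big(\int_0^1 l^{-1}\mathrm{d}t\big)^{-1}$. Hence $C_d(m_0,m_1)=\inf_l J[l]$ with $J[l]=\frac12\int_0^1 l\,\Psi(\dot l/l)\,\mathrm{d}t+\frac{d^2}{2}\big(\int_0^1 l^{-1}\mathrm{d}t\big)^{-1}$.
\item The second variation is computed at an \emph{arbitrary} positive $l$ (not only at extremals), along an \emph{arbitrary} $\eta$ with no boundary conditions.
\item With $w=\eta/l$, $Z=\int_0^1 l^{-1}\mathrm{d}t$ and $\mathrm{d}\nu=l^{-1}\mathrm{d}t/Z$, it equals $\frac1Z\big[\frac12\int_0^1\Psi''(\dot l/l)\,(\mathrm{d}w/\mathrm{d}\nu)^2\,\mathrm{d}\nu-d^2\,\mathrm{Var}_\nu(w)\big]$.
\item The negative part is a variance, which is blind to constants. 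This is exactly the constraint correction, and it reflects $1$-homogeneity.
\item Therefore the Poincar\'e--Wirtinger inequality $\int_0^1(\mathrm{d}w/\mathrm{d}\nu)^2\,\mathrm{d}\nu\ge\pi^2\,\mathrm{Var}_\nu(w)$ applies with no boundary conditions. It gives nonnegativity whenever $d\le\pi\sqrt{\kappa/2}$.
\end{itemize}
Thus $J$ itself is convex on the convex cone of positive paths. Joint convexity of $C_d$ then follows by interpolating optimal (or near-optimal) paths for two sets of boundary data. No envelope theorem, no smooth dependence of extremals on $r$, and no conjugate-point theory is required.
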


The proof is left to \ref{pf:uniformly convex}. As an example, when \(\Psi=\delta^2g^2\), i.e. WFR, the upper bound of \(d\) is \(\pi\delta\), which is consistent to \citep{chizat2018interpolating}. Having a sublinear $C_d$, one can reformulate (\ref{eq:UDOT}) into a static form, following the standard technique introduced by \citep{chizat2018unbalanced}. This can be viewed as a Kantorovich form of UDOT.
\begin{equation}
\label{eq:static UDOT}
\text{UDOT}(\mu_0,\mu_1) = \inf_{(\gamma_0,\gamma_1)\in \Gamma(\mu_0, \mu_1)} \int_{\mathcal{X}^2}  \, C_{\Vert \bm{x}-\bm{y}\Vert}(\gamma_0(\bm{x},\bm{y}),\gamma_1(\bm{x},\bm{y}))\mathrm{d} \bm{x}\mathrm{d} \bm{y},
\end{equation}
where $(\gamma_0,\gamma_1)$, satisfying the following constraints, is called semi-coupling.  
\begin{equation}
\label{eq:semi-coupling constraint}
    \Gamma(\mu_0, \mu_1) \stackrel{\mathrm{def.}}{=} \left\{(\gamma_0, \gamma_1) \in \left(\mathcal{M}_+(\mathcal{X}^2)\right)^2 : \int_\mathcal{X}\gamma_0(\bm{x},\bm{y})\mathrm{d} \bm{y} = \mu_0(\bm{x}), \int_\mathcal{X}\gamma_1(\bm{x},\bm{y})\mathrm{d} \bm{x} = \mu_1(\bm{y})\right\}.
\end{equation}
Intuitively, the semi-coupling means that mass $\gamma_0(\bm{x},\bm{y})$ was sent from $\bm{x}$, and $\gamma_1(\bm{x},\bm{y})$ was received at $\bm{y}$ after transport and growth. 

\begin{theorem}
\label{thm:convexity}
For uniformly convex $\Psi$ with $\Psi''\ge \kappa>0$, and \(\text{diam}(\mathcal{X})\le\pi\sqrt{\frac{\kappa}{2}}\), (\ref{eq:static UDOT}) is a convex optimization w.r.t $(\gamma_0,\gamma_1)$.  
\end{theorem}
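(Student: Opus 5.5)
The plan is to reduce the claim to the pointwise convexity already established in Theorem \ref{thm:uniformly convex}. The feasible set $\Gamma(\mu_0,\mu_1)$ in (\ref{eq:semi-coupling constraint}) is cut out by two linear marginal constraints together with nonnegativity. It is therefore a convex subset of $(\mathcal{M}_+(\mathcal{X}^2))^2$: if $(\gamma_0,\gamma_1),(\gamma_0',\gamma_1')\in\Gamma$ and $\lambda\in[0,1]$, the combination $\lambda(\gamma_0,\gamma_1)+(1-\lambda)(\gamma_0',\gamma_1')$ still has the prescribed marginals and is nonnegative. It then remains to show that the objective
\[
J(\gamma_0,\gamma_1)=\int_{\mathcal{X}^2} C_{\Vert \bm{x}-\bm{y}\Vert}\big(\gamma_0(\bm{x},\bm{y}),\gamma_1(\bm{x},\bm{y})\big)\,\mathrm{d}\bm{x}\mathrm{d}\bm{y}
\]
is a convex functional.

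For every pair $(\bm{x},\bm{y})\in\mathcal{X}^2$ we have $d=\Vert\bm{x}-\bm{y}\Vert\le\operatorname{diam}(\mathcal{X})\le\pi\sqrt{\kappa/2}$. Theorem \ref{thm:uniformly convex} then says that $(m_0,m_1)\mapsto C_d(m_0,m_1)$ is jointly convex on $\mathbb{R}_+^2$. Applying this pointwise gives
\[
C_d\big(\lambda\gamma_0+(1-\lambda)\gamma_0',\,\lambda\gamma_1+(1-\lambda)\gamma_1'\big)\le \lambda C_d(\gamma_0,\gamma_1)+(1-\lambda)C_d(\gamma_0',\gamma_1')
\]
for each $(\bm{x},\bm{y})$. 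Integrating over $\mathcal{X}^2$ preserves the inequality, which yields $J(\lambda\gamma+(1-\lambda)\gamma')\le\lambda J(\gamma)+(1-\lambda)J(\gamma')$. A convex functional minimized over a convex set is a convex optimization problem.

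\textbf{Technical points.} First, $C_d$ must be well defined on the boundary of $\mathbb{R}_+^2$, where $m_0=0$ or $m_1=0$ (pure creation or destruction). Here I would use the 1-homogeneous form $C_d(m_0,m_1)=m_0f_d(m_1/m_0)$ from Proposition \ref{prop:cost form}. I would define $C_d$ on the boundary as the lower semicontinuous (perspective) extension, and note that taking this closure preserves joint convexity. Second, the integrand must be measurable, and the integral should be allowed to take the value $+\infty$ so that convexity is understood in the extended sense. Measurability follows from continuity of $d\mapsto C_d$ (or lower semicontinuity, which suffices).

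\textbf{Main obstacle.} The hard part is the joint convexity of $C_d$ itself. In this plan it is supplied entirely by Theorem \ref{thm:uniformly convex}, with the diameter bound ensuring that its hypothesis $d\le\pi\sqrt{\kappa/2}$ holds uniformly over $\mathcal{X}^2$. With that result taken as given, the remaining argument is the routine pointwise-to-integral lift described above.
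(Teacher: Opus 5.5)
Your proposal is correct and follows the paper's proof essentially verbatim. The diameter bound places every pair $(\bm{x},\bm{y})$ under the hypothesis of Theorem~\ref{thm:uniformly convex}, pointwise joint convexity of $C_{\Vert\bm{x}-\bm{y}\Vert}$ integrates to joint convexity of the objective, and the semi-coupling constraints are linear, with your added remarks on the boundary of $\mathbb{R}_+^2$, measurability, and extended-valued convexity covering technicalities the paper leaves implicit without changing the argument.
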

The proof is left to \ref{pf:convexity}. As mentioned in section \ref{sec:pre}, the UDOT flow (\ref{eq:UDOT}) can be obtained by averaging the travelling Dirac paths w.r.t the semi-coupling. Intuitively, The semi-coupling plans \textit{how much} mass goes from $x$ to $y$, while the travelling Dirac determines \textit{how} it moves and grows through space. 

We point out that the two main assumptions used in this section: 1) \(\Psi\) is uniformly convex; 2) \(\text{diam}(\mathcal{X})\le\pi\sqrt{\frac{\kappa}{2}}\), can easily hold in practice. Since a real-world dataset is always supported on a compact domain \(\mathcal{X}\), one can scale the growth penalty \(\Psi\) with sufficiently large \(\delta^2\) to satisfy \(\text{diam}(\mathcal{X})\le\delta\pi\sqrt{\frac{\kappa}{2}}\). And for general convex penalties such as (\ref{eq:BSB Psi}), a uniformly convex wall, for example \(\frac{\kappa}{2}g^2\), can be added for sufficiently large \(|g|\) to satisfy the uniform convexity. 
\section{Simulation-free Training of General UDOT problem}
The difficulty is that only when \(\Psi\) is a quadratic function, the travelling Dirac has a closed form, and the semi-coupling can be easily obtained by solving an entropy regularized OT \citep{chizat2018unbalanced}. To efficiently solve the UDOT under UFM framework, we divide the training pipeline into four stages: learning the travelling Dirac, learning the transport cost, solving the semi-coupling, and unbalanced flow matching. The inference workflow are left to \ref{appendix:inference}.

\subsection{Learning the Travelling Dirac Path}
Inspired by previous work \citep{kapusniak2024metric,petrovic2026curly}, we use a neural network to learn the travelling Dirac path which doesn't have a closed form. Thanks to theorem \ref{thm:EL eqn}, we only need to learn $k(t)$ and $l(t)$. Hence we never encounter the curse of dimensionality. We parameterize two neural network $\phi_{\eta}(t,d,r), \psi_{\eta}(t,d,r)$ to learn the path.
\begin{equation}
\label{eq:path model}
\begin{aligned}
&\bm{x}_{\eta}(t,d,r) = \bm{x}_0+(\bm{x}_1-\bm{x}_0)(t+t(1-t)\phi_{\eta}(t,d,r))\\
&m_{\eta}(t,d,r)=m_0r^t\operatorname{exp}(t(1-t)\psi_{\eta}(t,d,r))
\end{aligned}
\end{equation}
With the form above, the boundary value of the travelling Dirac is automatically satisfied. More clearly, $k_{\eta}(t,d,r)=d(t+t(1-t)\phi_{\eta}(t,d,r)),l_{\eta}(t,d,r)=r^t\operatorname{exp}(t(1-t)\psi_{\eta}(t,d,r))$. The goal here is to train the neural networks to minimize the path energy (\ref{eq:Path}). Viewing it as a variational problem, we solve it by minimizing a Monte Carlo style loss function \citep{e2018deepritz}. The derivatives are w.r.t $t$. The expectation w.r.t \((d,r)\) is taken over a 2-dim grid \(D\times R\), which is introduced below.
\begin{equation}
\label{eq:path loss}
\begin{aligned}
\mathcal{L}_{path}(\eta)=\mathbb{E}_{d,r}\mathbb{E}_{t\sim \mathcal{U}[0,1]}\Big(\dot{k}_{\eta}(t,d,r)^2+ \Psi(\frac{\dot{l}_{\eta}(t,d,r)}{l_{\eta}(t,d,r)})\Big)l_{\eta}(t,d,r)
\end{aligned}
\end{equation}
\subsection{Learning the Transport Cost Between Diracs}
Thanks to theorem \ref{thm:uniformly convex}, it is sufficient to approximate $C_d(1,r)$, which is a function of only $d,r$, rather than approximating $C_{\Vert\bm{x}_1-\bm{x}_0\Vert}(m_0,m_1)$. Further, since the total mass is bounded, \(C_d\) is bounded. Considering that $C_d(m_0,m_1)=m_0C_d(1,\frac{m_1}{m_0})=m_1C_d(\frac{m_0}{m_1},1)$, it is sufficient to only approximate \(C_d(1,r)\) for $r$ in some moderate range \([r_{min},r_{max}]\), since extremely small or large $r$ corresponds to extremely small $m_0$ or $m_1$, hence extremely small \(C_d(m_0,m_1)\). Also, in the following stages, we only need the value of \(C_d(m_0,m_1)\) for \(d=\Vert\bm{x}_1-\bm{x}_0\Vert\), where \(\bm{x}_1,\bm{x}_0\) are sampled from the dataset. Hence, it is also sufficient to approximate only for $d$ in some moderate range \([d_{min},d_{max}]\). In practice, we uniformly discretize the range of $d$ into a grid $D$, and uniformly discretize the range of $r$ on a log scale into a grid $R$. Given the trained path model $k_{\eta}$ and $l_{\eta}$, we do this by parameterizing a neural network $\mathcal{E}_{\xi}(d,r)$ and minimizing the regression loss on the 2-dim grid $D\times R$.
\begin{equation}
\label{eq:cost loss}
\begin{aligned}
\mathcal{L}_{cost}(\xi)=\sum_{d\in D}\sum_{r\in R}\Big(\mathcal{E}_{\xi}(d,r)-\frac{1}{2}\int_0^1\big(\dot{k}_{\eta}(t,d,r)^2+ \Psi(\frac{\dot{l}_{\eta}(t,d,r)}{l_{\eta}(t,d,r)})\big)l_{\eta}(t,d,r)\mathrm{d}t\Big)^2
\end{aligned}
\end{equation}
The integral is calculated numerically via the trapezoid scheme.  Note that the integral is 1-dimensional, and the derivative is calculated w.r.t $t$ which is also a 1-dimensional variable. Hence, the numerical integral also encounters no curse of dimensionality. Also, we don't need to calculate the gradient of the trajectory ($k_{\eta}$ and $l_{\eta}$) in this stage, which differs from simulation-based methods such as NeuralODE \citep{NeuralODE}. Thus, our method is simulation-free.

\subsection{Solving the Semi-coupling via Projected Gradient Descent}
Thanks to theorem \ref{thm:convexity}, (\ref{eq:static UDOT}) is a convex optimization w.r.t the semi-coupling $(\gamma_0,\gamma_1)$ with a convex constraint (\ref{eq:semi-coupling constraint}), hence, can be efficiently solved by projected gradient descent. Given the trained cost model $\mathcal{E}_{\xi}$, we approximate (\ref{eq:static UDOT}) with a neural surrogate by replacing the true cost $C$ with $\mathcal{E}_{\xi}$. 
\begin{equation}
\label{eq:surrogate UDOT}
\text{UDOT-SUR}(\mu_0,\mu_1) = \inf_{(\gamma_0,\gamma_1)\in \Gamma(\mu_0, \mu_1)} \int_{\mathcal{X}^2}  \, \gamma_0(\bm{x},\bm{y})\mathcal{E}_{\xi}({\Vert \bm{x}-\bm{y}\Vert},\frac{\gamma_1(\bm{x},\bm{y})}{\gamma_0(\bm{x},\bm{y})})\mathrm{d} \bm{x}\mathrm{d} \bm{y},
\end{equation}
The optimal semi-coupling $(\gamma_0^\star,\gamma_1^\star)$ is optimized via projected gradient descent. The projection step is adapted from \citep{duchi2008efficient}.

\subsection{Unbalanced Flow Matching}
Given the optimized semi-coupling $(\gamma_0^\star,\gamma_1^\star)$ and the trained path model $k_{\eta},l_{\eta}$, or say $\bm{x}_{\eta},m_{\eta}$, we then follow \citep{wfr_fm} to train an unbalanced flow matching model for learning the velocity field $\bm{u}(\bm{x},t)$ and growth rate $g(\bm{x},t)$ corresponding to the UDOT problem (\ref{eq:UDOT}). We parameterize two neural networks $\bm{u}_{\theta}(\bm{x},t), g_{\theta}(\bm{x},t)$ and minimize the conditional unbalanced flow matching loss (\ref{eq:CUFM}). 
\begin{equation}
\begin{aligned}
\label{eq:SUDO CUFM}
&\mathcal{L}_{\text{CUFM}}(\bm{\theta})=
\mathbb{E}_{t \sim \mathcal{U}[0,1], (\bm{x}_0,\bm{x}_1)\sim\gamma_0^\star, }m_{\eta,t}\big(\left\| \bm{\bm{u}_{\theta}}(\bm{x}_{\eta,t},t) - \dot{\bm{x}_{\eta,t}} \right\|_2^2
+\left\| g_{\bm{\theta}}(\bm{x}_{\eta,t},t)- \frac{\dot{m_{\eta,t}}}{m_{\eta,t}} \right\|_2^2\big)
\end{aligned}
\end{equation}
where $d=\Vert\bm{x}_1-\bm{x}_0\Vert,r=\frac{\gamma_1^*(\bm{x}_1,\bm{x}_0)}{\gamma_0^*(\bm{x}_1,\bm{x}_0)}$, and $m_{\eta,t}=m_{\eta}(t,d,r), \bm{x}_{\eta,t}=\bm{x}_{\eta}(t,d,r)$ are given by the learned travelling Dirac (\ref{eq:path model}). The derivatives are w.r.t $t$. The whole training process requires no trajectory simulation, hence is simulation-free. A stochastic version for empirically solving the RUOT (\ref{eq:dynamic RUOT}) problem can be found in \ref{appendix:USM}. Pseudocode are left to \ref{appendix:algorithm}.

\section{Experiments}
\subsection{SUDO Recovers the WFR} 
To validate SUDO's capability in solving UDOT problems, we applied it to the WFR problem (\(\Psi(g)=\delta^2g^2\)) — currently the only UDOT formulation for which an analytical solution is known. Since WFR-FM explicitly utilizes the WFR analytical solution, we consider it a strong approximation of the ground truth. We demonstrate that SUDO can match the marginal measures with high accuracy while obtaining a sufficiently low UDOT cost.

\textbf{Measure Matching.} We first test whether SUDO can match the target measures. We trained SUDO on three synthetic datasets: Simulation (2D) \citep{DeepRUOT}, Dyngen (5D) \citep{cannoodt2021spearheading}, and Gaussian (1000D) \citep{wfr_fm}. We calculated the \(\mathcal{W}_1\) distance and relative mass error (RME) between the true measure and the predicted measure generated by SUDO. The results are compared to other UDOT based methods in Table \ref{tab:recover WFR}. Results on real data can be found in \ref{appendix:additional results}.

\begin{table}[h!]
\centering
\caption{Mean $\mathcal{W}_1$ and RME on synthetic datasets. For the methods that exhibit randomness in inference, we report the mean value and standard deviation over 5 runs.}
\label{tab:recover WFR}

\begin{tabular}{lcccccc}
\toprule
\textbf{Method} & \multicolumn{2}{c}{Simulation (2D)} & \multicolumn{2}{c}{Dyngen (5D)} & \multicolumn{2}{c}{Gaussian (1000D)} \\
\cmidrule(lr){2-3} \cmidrule(lr){4-5} \cmidrule(lr){6-7}
& $\mathcal{W}_1$ ($\downarrow$) & RME ($\downarrow$) & $\mathcal{W}_1$ ($\downarrow$) & RME ($\downarrow$) & $\mathcal{W}_1$ ($\downarrow$) & RME ($\downarrow$) \\
\midrule
TIGON  & 0.045 & 0.014 & 0.512 & \underline{0.047} & \underline{2.263}   & 0.127  \\
DeepRUOT & 0.043\tiny$\pm$0.002 & 0.017\tiny$\pm$0.001 & 0.623\tiny$\pm$0.032 & 0.065\tiny$\pm$0.011 & 3.785\tiny$\pm$0.009 & 0.303\tiny$\pm$0.070   \\
Var-RUOT  & 0.079\tiny$\pm$0.003 & 0.008\tiny$\pm$0.002 & 0.522\tiny$\pm$0.008 & 0.177\tiny$\pm$0.007 & 2.813\tiny$\pm$0.004  & \textbf{0.041}\tiny$\pm$0.006  \\
WFR-FM & \textbf{0.019} & \textbf{0.001} & \textbf{0.135} & \textbf{0.005} & \textbf{2.233}  & \underline{0.044} \\
\textbf{SUDO} & \underline{0.022} & \underline{0.002} & \underline{0.190} & \textbf{0.005} & 2.315  & 0.061 \\
\bottomrule
\end{tabular}
\end{table}
In most scenarios, SUDO outperforms simulation-based UOT methods (TIGON, DeepRUOT, VarRUOT) and achieves comparable performance to WFR-FM across various datasets. We note that since SUDO does not rely on the analytical solution of WFR, its slight loss in accuracy relative to WFR-FM is predictable and acceptable.

\textbf{UDOT Cost.} Given the marginal measures matched, to further evaluate whether SUDO can recover the dynamic WFR flow, we compared the UDOT cost (\ref{eq:UDOT}) of the measure path \(\rho_t\) given by SUDO and WFR-FM in Table \ref{tab:action}. The cost induced by the static semi-coupling is also shown as a ground truth.

\begin{table}[h!]
\centering
\caption{The UDOT cost of SUDO and WFR-FM. Both algorithms are deterministic.}
\label{tab:action}
\resizebox{\textwidth}{!}{
\begin{tabular}{lcccc}
\toprule
\textbf{Algorithm} & \textbf{Simulation (2D)} & \textbf{Dyngen (5D)} & \textbf{EMT (10D)} & \textbf{Gaussian (1000D)} \\
\midrule
WFR-FM           & 1.0810  & 7.4620  & 0.7726 & 7.6843 \\
SUDO             & 1.0703  & 7.7064  & 0.7741 & 7.1576 \\
Static Reference & 1.0935 & 7.8403 & 0.9545 & 9.9153    \\
\bottomrule
\end{tabular}
}
\end{table}
SUDO delivers performance on par with WFR-FM across three synthetic dataset and EMT \citep{cook2020context} scRNA-seq dataset. These results validate the effectiveness of SUDO in solving UDOT problems. The costs of both algorithms are lower than the static reference. This is because the marginal measures are not precisely matched, allowing them to find a lower-cost solution.




\subsection{Scalability}
\label{scalability}
To demonstrate the scalability afforded by simulation-free training, we benchmarked SUDO's training time and memory usage on the Mouse dataset \cite{weinreb2020lineage} at varying scales (10000 to 49302 cells). We compared our method with DeepRUOT and VarRUOT, with WFR-FM serving as the optimal reference baseline. Results are shown in Table \ref{tab:scalability}.
\begin{table}[htbp]
\centering
\caption{Scalability evaluation on the Mouse dataset (Training time and peak memory consumption)}
\label{tab:scalability}
\resizebox{\textwidth}{!}{
\begin{tabular}{llccccc}
\toprule
\multirow{2}{*}{Algorithm} & \multirow{2}{*}{Metric} & \multicolumn{5}{c}{Cell number} \\
\cmidrule(r){3-7}
 & & 10,000 & 20,000 & 30,000 & 40,000 & 49,302 \\
\midrule
\multirow{2}{*}{DeepRUOT} & Time (s) & $493.14 \pm 58.24$ & $818.73 \pm 11.89$ & $1290.99 \pm 63.32$ & $1850.74 \pm 196.53$ & $2204.64 \pm 171.37$ \\
                          & Mem (GB) & $2.76 \pm 0.01$    & $6.16 \pm 0.00$    & $10.22 \pm 0.00$    & $14.98 \pm 0.00$     & $19.73 \pm 0.00$ \\
\midrule
\multirow{2}{*}{VarRUOT}  & Time (s) & $2707.19 \pm 319.11$ & $2689.76 \pm 277.53$ & $2701.46 \pm 284.03$ & $2754.84 \pm 251.65$ & $2756.25 \pm 281.77$ \\
                          & Mem (GB) & $2.57 \pm 0.00$ & $3.55 \pm 0.00$ & $4.58 \pm 0.01$ & $5.64 \pm 0.01$ & $6.63 \pm 0.00$ \\
\midrule
\multirow{2}{*}{\textbf{SUDO}}     & Time (s) & $221.42 \pm 2.83$ & $347.80 \pm 5.55$ & $489.48 \pm 10.65$ & $622.74 \pm 20.02$ & $744.72 \pm 12.36$ \\
                          & Mem (GB) & $4.55 \pm 0.00$ & $5.03 \pm 0.01$ & $5.87 \pm 0.01$ & $7.00 \pm 0.01$ & $8.48 \pm 0.01$ \\
\midrule
\multirow{2}{*}{WFR-FM}   & Time (s) & $61.67 \pm 2.23$ & $67.57 \pm 0.44$ & $77.61 \pm 0.34$ & $90.58 \pm 0.32$ & $104.22 \pm 0.35$ \\
                          & Mem (GB) & $0.59 \pm 0.03$ & $1.71 \pm 0.00$ & $3.25 \pm 0.00$ & $5.64 \pm 0.00$ & $8.24 \pm 0.00$ \\
\bottomrule
\end{tabular}
}
\end{table}

SUDO outperforms simulation-based solvers in computational cost. For details of the constant training time of VarRUOT, see \ref{appendix:varruot training time}.


\subsection{The Necessity of General Growth Penalty}
\label{general Psi}
While WFR is mathematically elegant and computationally efficient, a non-quadratic $\Psi$ can express more explicit biological priors \citep{BSB}. We demonstrate the necessity of general choice of \(\Psi\) on both synthetic and real scRNA-seq data.

Consider two distant cell clusters on a 2D plane (bottom-left and top-right) with initial populations of 1000 each at $t=0$, shifting to 400 and 2600 at $t=1$. Without prior knowledge, WFR—due to its symmetric penalty—favors apoptosis in the bottom-left and proliferation in the top-right, with no migration occurring if $\delta^2$ is moderate (Figure \ref{fig:1}). However, for highly proliferative and rarely apoptotic cells like cancer cells and stem cells, this solution contradicts biological priors. By employing an asymmetric growth penalty (only-growth)
\begin{equation}
\label{eq:only growth Psi}
\Psi(g)=\left\{
\begin{aligned}
&2(1-g+g\operatorname{log}g) &g>0\\
&+\infty &g\le 0
\end{aligned}
\right .
\end{equation}
to incorporate this prior into the dynamics inference, SUDO infers a more biologically plausible trajectory: rather than undergoing apoptosis, the bottom-left cluster exhibits mild proliferation and dispatches around 700 cells that concurrently proliferate and migrate toward the top-right cluster (Figure \ref{fig:1}). This result matches the prior knowledge. The results on a more complex 1000D Gaussian data can be found in \ref{appendix:gaussian data}.

\begin{figure}
  \centering
  \includegraphics[width=0.23\textwidth]{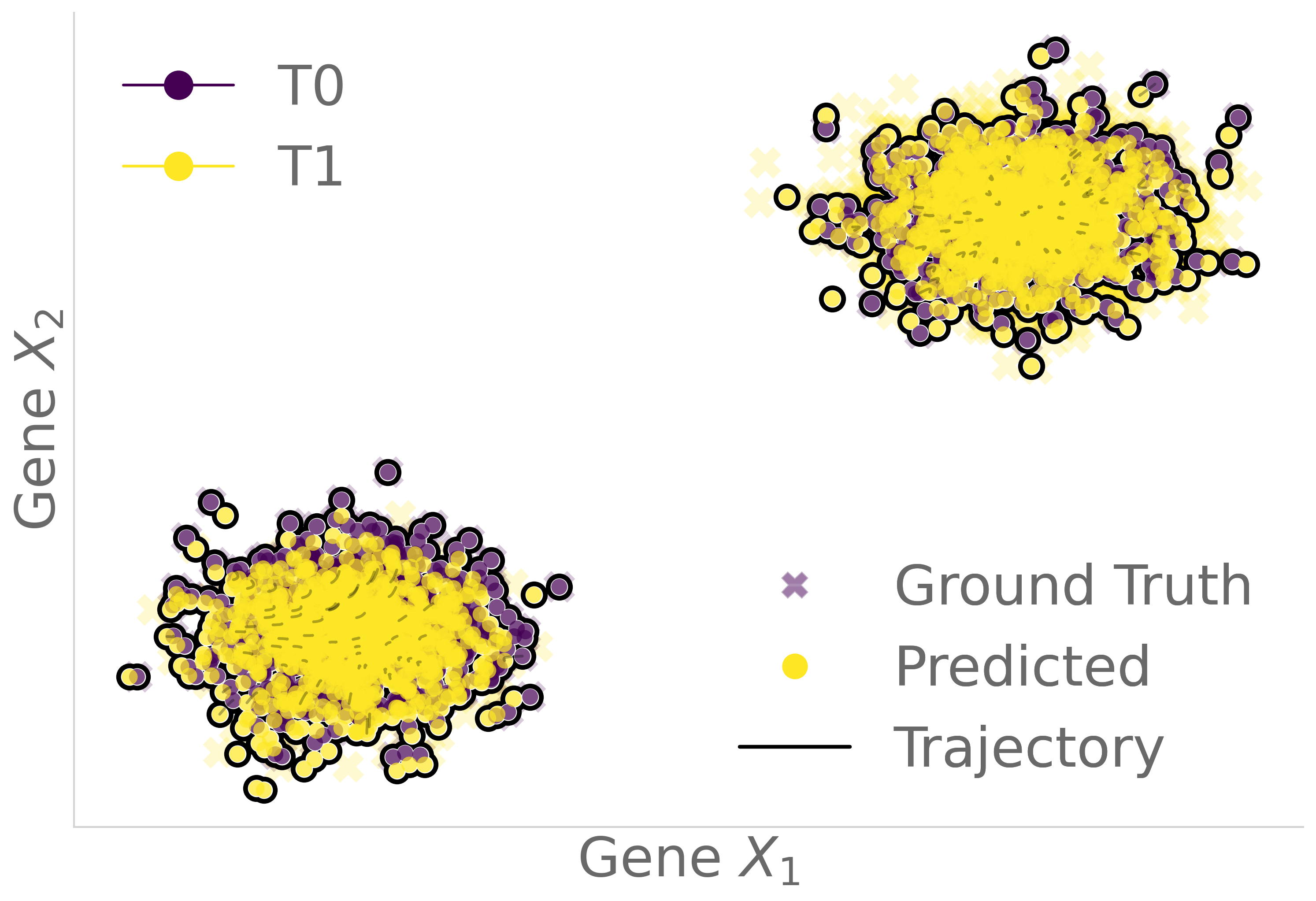} 
  \includegraphics[width=0.23\textwidth]{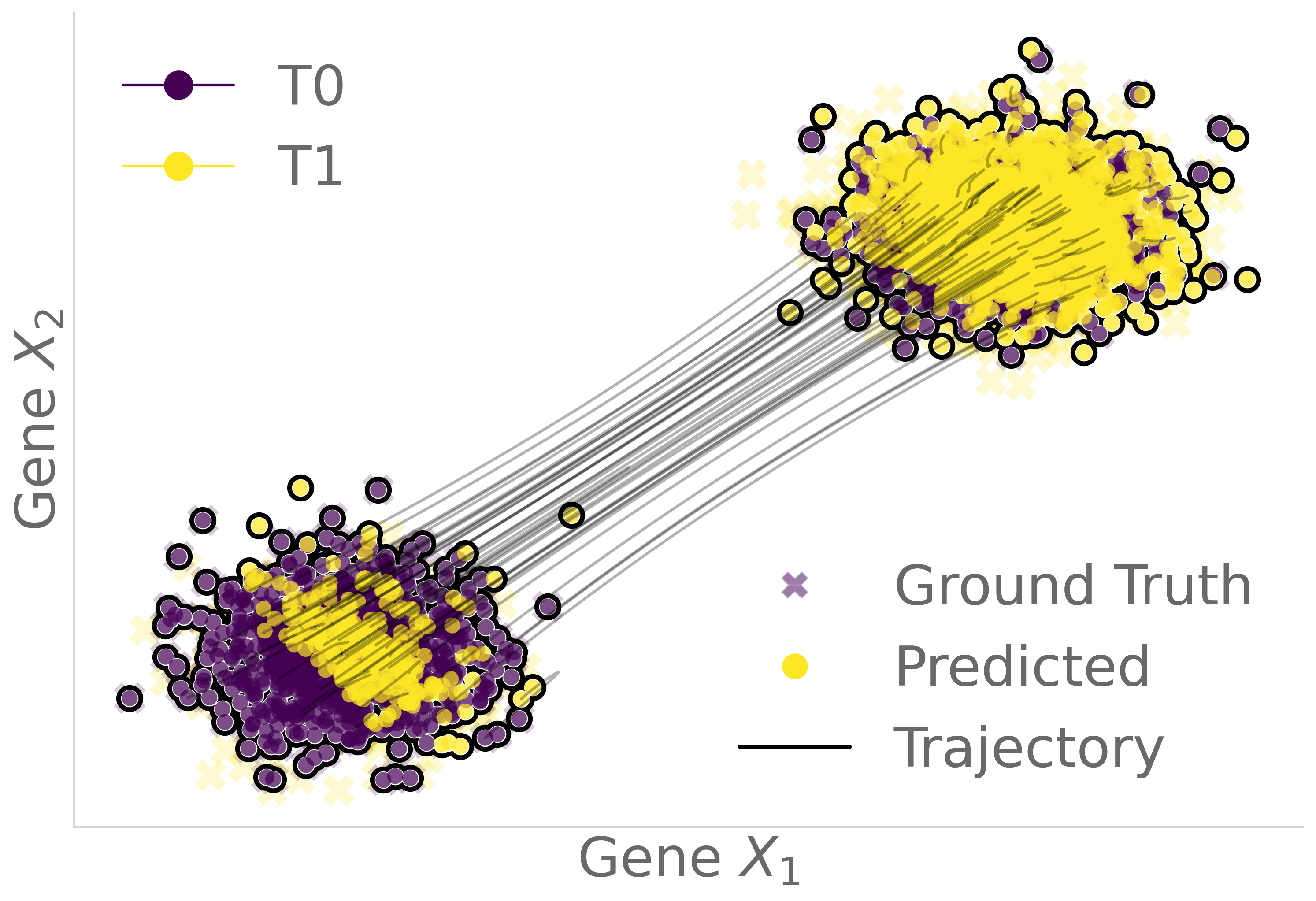} 
  \includegraphics[width=0.23\textwidth]{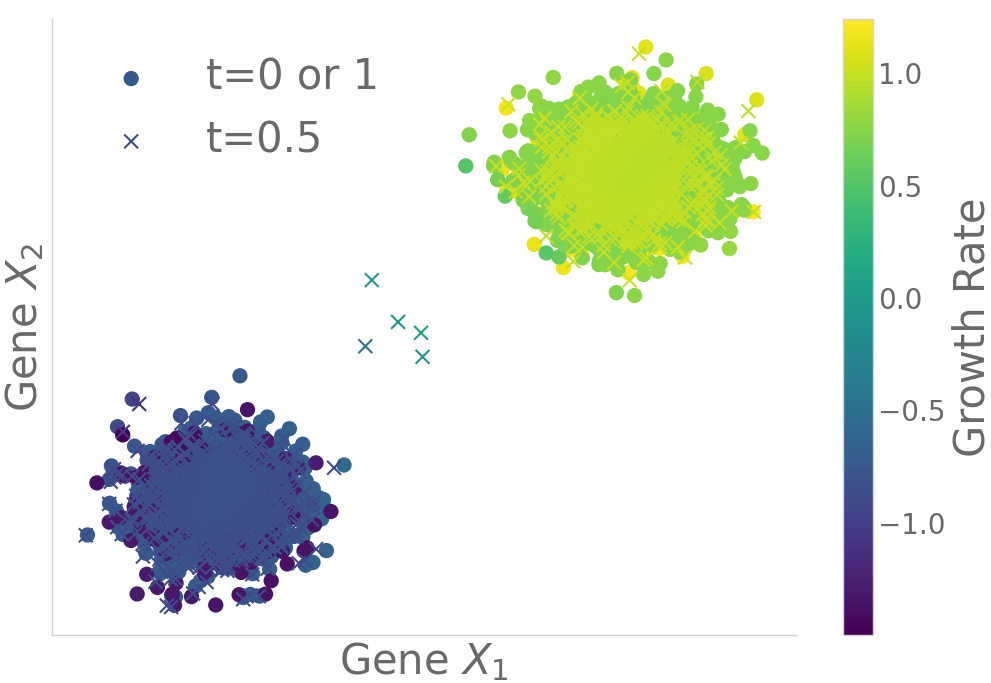} 
  \includegraphics[width=0.23\textwidth]{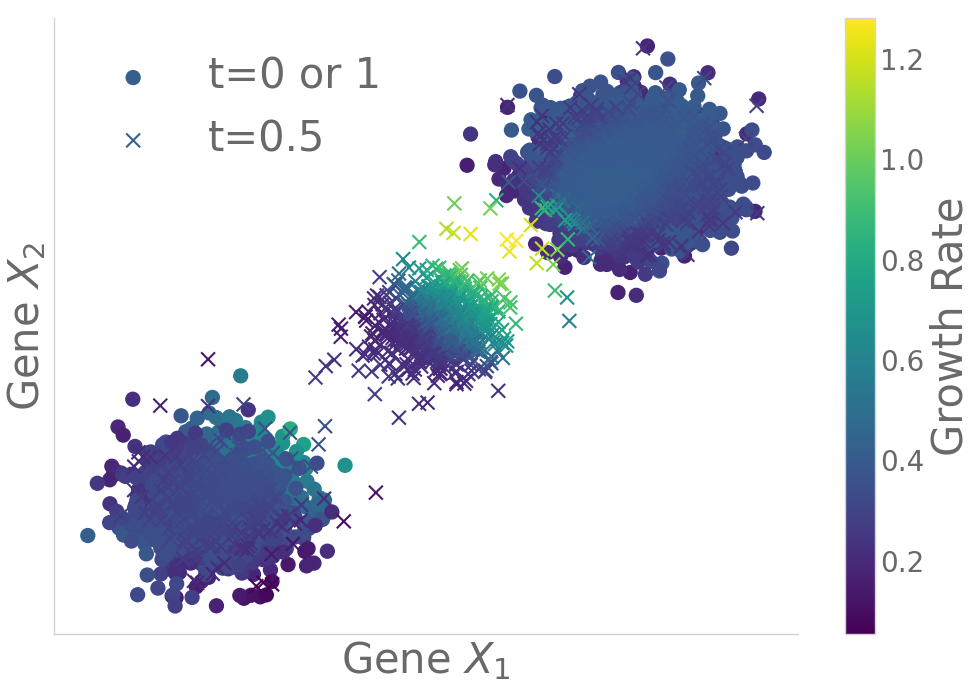} 
  
  \caption{Left: predicted trajectories by WFR-FM and SUDO; Right: predicted growth rate by WFR-FM and SUDO. We also plot the particles and their growth rate at t=0.5.}
  \label{fig:1}
\end{figure}

We further compared the growth rate induced by WFR penalty and only-growth penalty on 2D mouse hematopoiesis data \citep{TIGON,weinreb2020lineage}. As shown in Figure \ref{fig:real data growth}, WFR penalty resulted in both apoptosis and proliferation while only-growth penalty induced pure positive growth rate. In mouse hematopoiesis, proliferation is the primary driver of population dynamics, with apoptosis typically maintained at low levels. By incorporating this prior via an only-growth penalty, SUDO reconstructs more realistic growth rates than the standard WFR framework.

\begin{figure}
  \centering
  \includegraphics[width=0.32\textwidth]{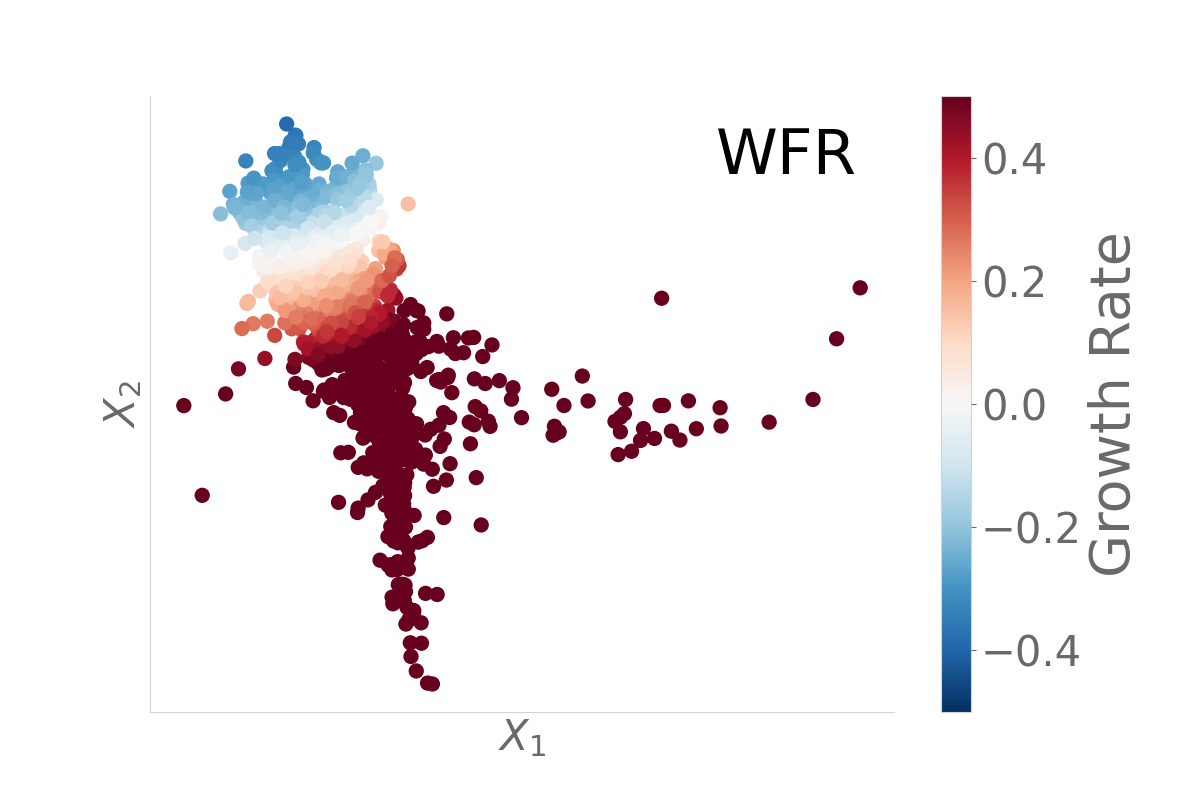} 
  \includegraphics[width=0.32\textwidth]{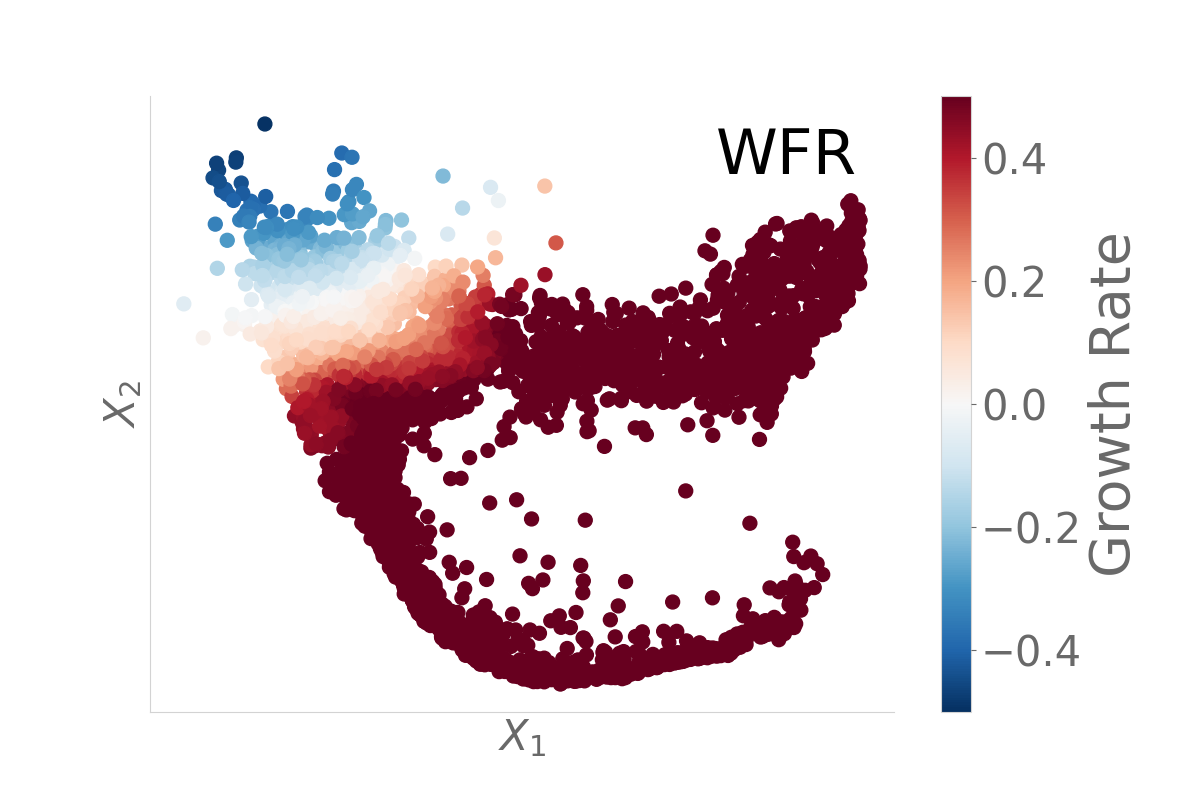} 
  \includegraphics[width=0.32\textwidth]{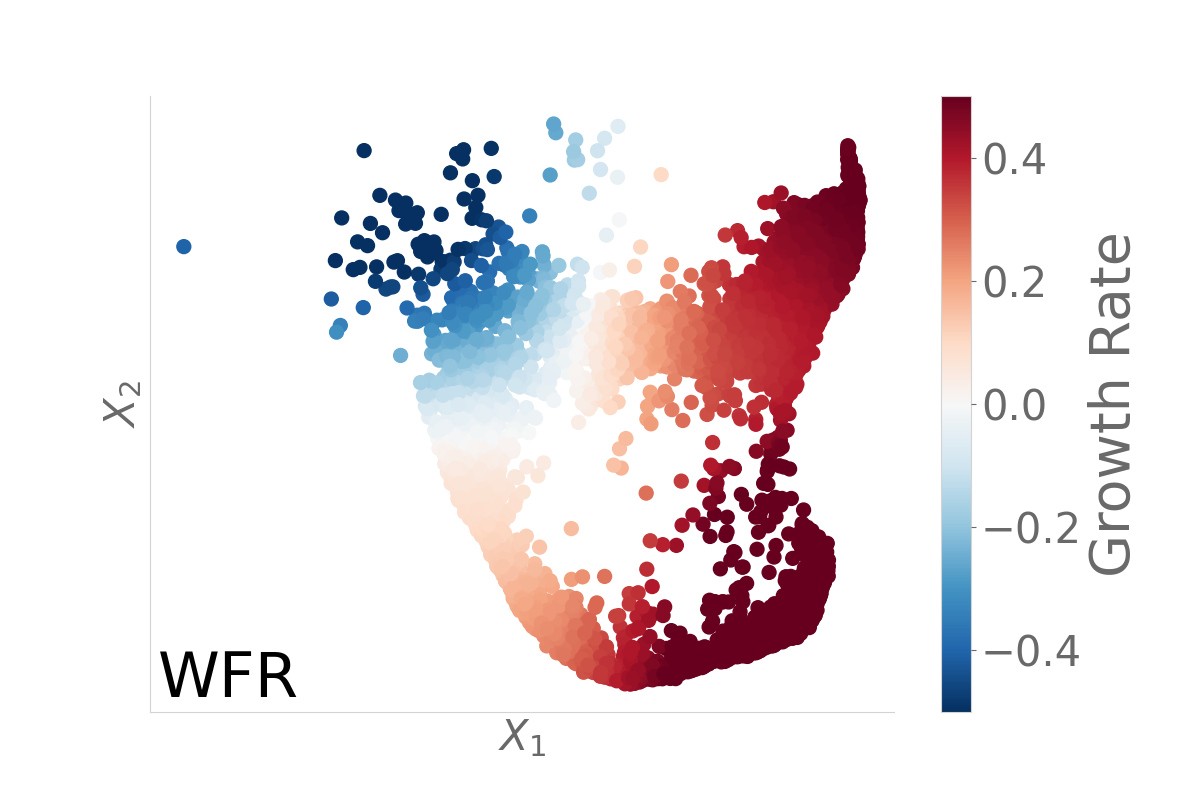} 
  \includegraphics[width=0.32\textwidth]{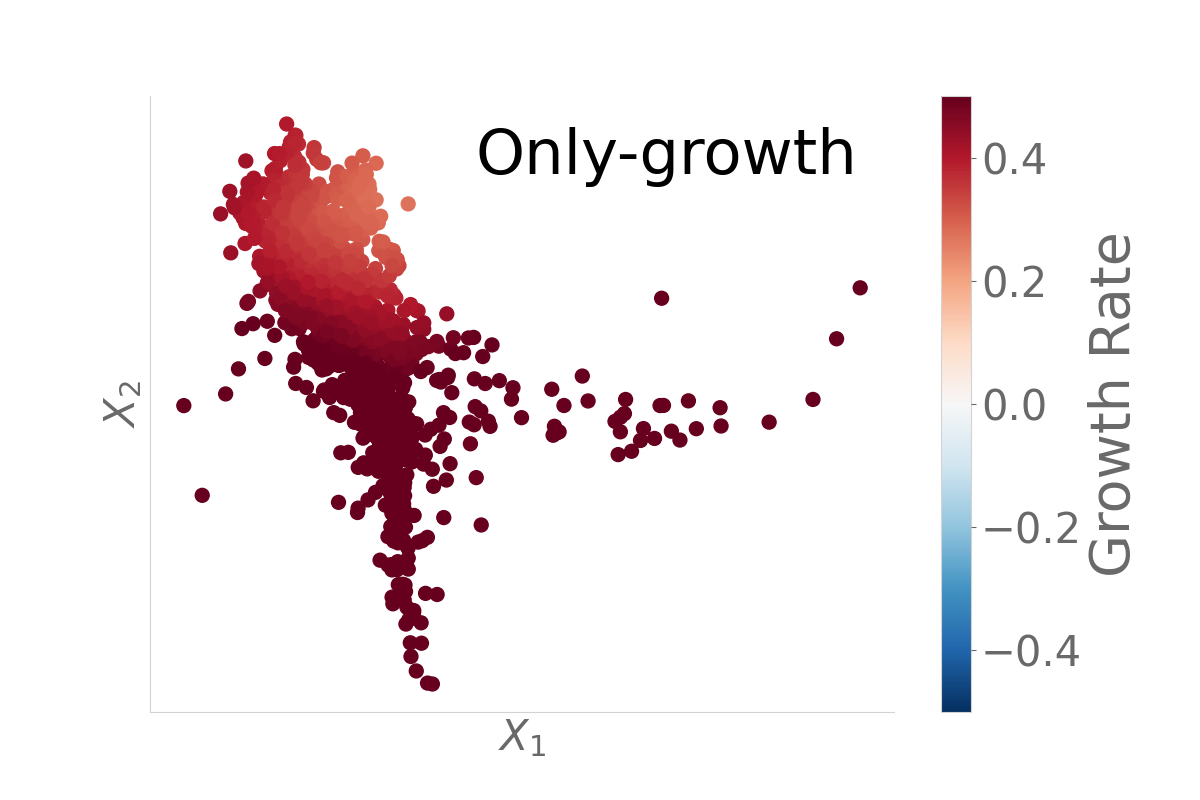} 
  \includegraphics[width=0.32\textwidth]{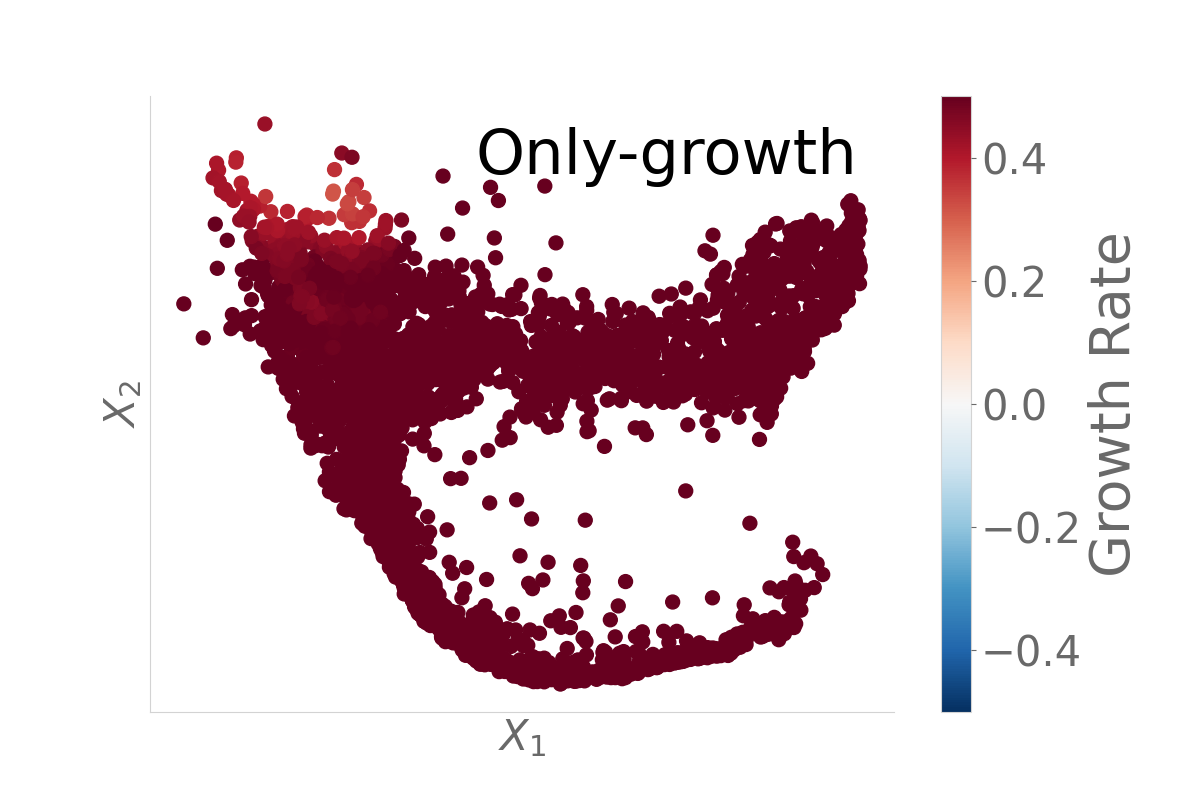} 
  \includegraphics[width=0.32\textwidth]{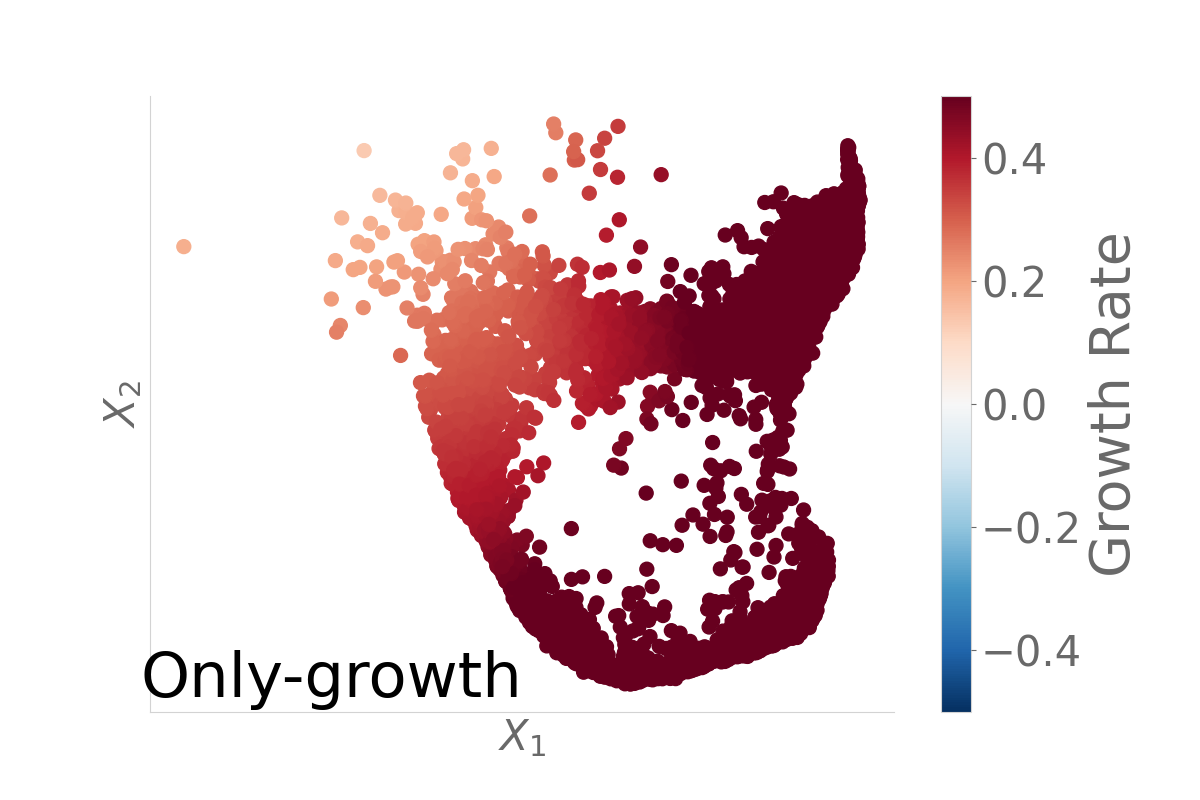} 
  
  \caption{Upper: predicted growth rate by WFR penalty; Lower: predicted growth rate by Only-growth penalty; From left to right: Time point 0, 1, 2. The colorbar is clipped to show the sign of the growth rate more clearly.}
  \label{fig:real data growth}
\end{figure}

\section{Conclusion}
\label{sec:conclusion}
We present SUDO, a simulation-free framework for solving UDOT problems with general growth penalties. SUDO automatically learns the travelling Dirac, transport cost, and solves the semi-coupling via PGD. By doing so, our framework circumvents the dependency on analytical solutions of travelling Diracs that characterizes previous simulation-free approaches \citep{wfr_fm}. SUDO also delivers computational speedups over existing simulation-based algorithms \citep{DeepRUOT,sun2026variational}, thereby achieving a balance between flexibility and efficiency.

A key theoretical implication of this work is that not all growth penalties lead to meaningful coupled dynamics. We showed that concave penalties produce degenerate UDOT solutions where non-trivial growth and transport are separated, thereby identifying convex growth penalties as the relevant non-degenerate regime for coupled cellular dynamics. Empirically, we demonstrate SUDO’s effectiveness in solving UDOT problems, and highlight the need for general $\Psi$. We point out that different $\Psi$ can be viewed as different biological priors on cell proliferation and apoptosis. Integrating such prior knowledge into biological modeling enables more faithful reconstructions of cellular dynamics.

The limitations of SUDO are twofold. First, while SUDO provides an empirical approach for RUOT-like settings, its theoretical formulation is currently developed for deterministic dynamics, leaving a rigorous treatment of stochastic dynamics for future work. Second, SUDO requires a pre-specified growth penalty $\Psi$. In some applications, such as lineage tracing, partial trajectory information may be available. Incorporating such supervision to learn or calibrate the growth penalty from data, rather than specifying it a prior, is an important future direction.




\newpage
\bibliographystyle{plainnat}
\bibliography{example_paper}

@article{e2018deepritz,
  title={The Deep Ritz Method: A Deep Learning-Based Numerical Algorithm for Solving Variational Problems},
  author={E, W. and Yu, B.},
  journal={Commun. Math. Stat.},
  volume={6},
  pages={1--12},
  year={2018}
}

@inproceedings{
albergo2022building,
title={Building Normalizing Flows with Stochastic Interpolants},
author={Michael Samuel Albergo and Eric Vanden-Eijnden},
booktitle={The Eleventh International Conference on Learning Representations },
year={2023},
}

@inproceedings{
rohbeck2025modeling,
title={Modeling Complex System Dynamics with Flow Matching Across Time and Conditions},
author={Martin Rohbeck and Charlotte Bunne and Edward De Brouwer and Jan-Christian Huetter and Anne Biton and Kelvin Y. Chen and Aviv Regev and Romain Lopez},
booktitle={The Thirteenth International Conference on Learning Representations},
year={2025},
}

@inproceedings{lee2025mmsfm,
      title={Multi-Marginal Stochastic Flow Matching for High-Dimensional Snapshot Data at Irregular Time Points},
      author={Justin Lee and Behnaz Moradijamei and Heman Shakeri},
      booktitle={Forty-second International Conference on Machine Learning},
      year={2025},
      url={https://openreview.net/forum?id=ZLyb8DwXXE}
}

@inproceedings{
corso2025composing,
title={Composing Unbalanced Flows for Flexible Docking and Relaxation},
author={Gabriele Corso and Vignesh Ram Somnath and Noah Getz and Regina Barzilay and Tommi Jaakkola and Andreas Krause},
booktitle={The Thirteenth International Conference on Learning Representations},
year={2025},
}

@misc{cao2025taming,
      title={Taming Flow Matching with Unbalanced Optimal Transport into Fast Pansharpening}, 
      author={Zihan Cao and Yu Zhong and Liang-Jian Deng},
      year={2025},
      eprint={2503.14975},
      archivePrefix={arXiv},
      primaryClass={cs.CV},
      url={https://arxiv.org/abs/2503.14975}, 
}

@inproceedings{
petrovic2026curly,
title={Curly Flow Matching for Learning Non-gradient Field Dynamics},
author={Katarina Petrovi{\'c} and Lazar Atanackovic and Viggo Moro and Kacper Kapu{\'s}niak and Ismail Ilkan Ceylan and Michael M. Bronstein and Joey Bose and Alexander Tong},
booktitle={The Thirty-ninth Annual Conference on Neural Information Processing Systems},
year={2026},
url={https://openreview.net/forum?id=7cqKVDgFZQ}
}

@inproceedings{mioflow,
 author = {Huguet, Guillaume and Magruder, Daniel Sumner and Tong, Alexander and Fasina, Oluwadamilola and Kuchroo, Manik and Wolf, Guy and Krishnaswamy, Smita},
 booktitle = {Advances in Neural Information Processing Systems},
 editor = {S. Koyejo and S. Mohamed and A. Agarwal and D. Belgrave and K. Cho and A. Oh},
 pages = {29705--29718},
 publisher = {Curran Associates, Inc.},
 title = {Manifold Interpolating Optimal-Transport Flows for Trajectory Inference},
 volume = {35},
 year = {2022}
}

@inproceedings{UOT,
title={Unbalancedness in Neural Monge Maps Improves Unpaired Domain Translation},
author={Luca Eyring and Dominik Klein and Th{\'e}o Uscidda and Giovanni Palla and Niki Kilbertus and Zeynep Akata and Fabian J Theis},
booktitle={The Twelfth International Conference on Learning Representations},
year={2024},
}

@misc{neural_uot,
      title={Neural Unbalanced Optimal Transport via Cycle-Consistent Semi-Couplings}, 
      author={Frederike Lübeck and Charlotte Bunne and Gabriele Gut and Jacobo Sarabia del Castillo and Lucas Pelkmans and David Alvarez-Melis},
      year={2022},
      eprint={2209.15621},
      archivePrefix={arXiv},
      primaryClass={cs.LG},
      url={https://arxiv.org/abs/2209.15621}, 
}

@misc{mioflow2,
      title={MIOFlow 2.0: A unified framework for inferring cellular stochastic dynamics from single cell and spatial transcriptomics data}, 
      author={Xingzhi Sun and João Felipe Rocha and Brett Phelan and Dhananjay Bhaskar and Guillaume Huguet and Yanlei Zhang and Alexander Tong and Ke Xu and Oluwadamilola Fasina and Mark Gerstein and Natalia Ivanova and Christine L. Chaffer and Guy Wolf and Smita Krishnaswamy},
      year={2026},
      eprint={2603.22564},
      archivePrefix={arXiv},
      primaryClass={cs.LG},
      url={https://arxiv.org/abs/2603.22564}, 
}

@article{figalli2010optimal,
  title={The Optimal Partial Transport Problem},
  author={Figalli, Alessio},
  journal={Archive for Rational Mechanics and Analysis},
  volume={195},
  pages={533--560},
  year={2010}
}

@article{Kondratyev,
author = {Stanislav Kondratyev and L{\'e}onard Monsaingeon and Dmitry Vorotnikov},
title = {{A new optimal transport distance on the space of finite Radon measures}},
volume = {21},
journal = {Advances in Differential Equations},
number = {11/12},
publisher = {Khayyam Publishing, Inc.},
pages = {1117 -- 1164},
year = {2016},
}

@inproceedings{
wang2025joint,
title={Joint Velocity-Growth Flow Matching for Single-Cell Dynamics Modeling},
author={Dongyi Wang and Yuanwei Jiang and Zhenyi Zhang and Xiang Gu and Peijie Zhou and Jian Sun},
booktitle={The Thirty-ninth Annual Conference on Neural Information Processing Systems},
year={2025},
}

@article{peng2026stvcr,
  title={stVCR: spatiotemporal dynamics of single cells},
  author={Peng, Q. and Zhou, P. and Li, T.},
  journal={Nature Methods},
  volume={23},
  pages={542--553},
  year={2026}
}

@inproceedings{wfr_fm,
title={{WFR}-{FM}: Simulation-Free Dynamic Unbalanced Optimal Transport},
author={Qiangwei Peng and Zihan Wang and Junda Ying and Yuhao Sun and Qing Nie and Lei Zhang and Tiejun Li and Peijie Zhou},
booktitle={The Fourteenth International Conference on Learning Representations},
year={2026},
url={https://openreview.net/forum?id=1nqu7bK1mm}
}

@inproceedings{
DeepRUOT,
title={Learning stochastic dynamics from snapshots through regularized unbalanced optimal transport},
author={Zhenyi Zhang and Tiejun Li and Peijie Zhou},
booktitle={The Thirteenth International Conference on Learning Representations},
year={2025},
}

@article{TIGON,
  title={Reconstructing growth and dynamic trajectories from single-cell transcriptomics data},
  author={Sha, Yutong and Qiu, Yuchi and Zhou, Peijie and Nie, Qing},
  journal={Nature Machine Intelligence},
  volume={6},
  number={1},
  pages={25--39},
  year={2024},
  publisher={Nature Publishing Group UK London}
}

@inproceedings{NeuralODE,
 author = {Chen, Ricky T. Q. and Rubanova, Yulia and Bettencourt, Jesse and Duvenaud, David K},
 booktitle = {Advances in Neural Information Processing Systems},
 editor = {S. Bengio and H. Wallach and H. Larochelle and K. Grauman and N. Cesa-Bianchi and R. Garnett},
 pages = {},
 publisher = {Curran Associates, Inc.},
 title = {Neural Ordinary Differential Equations},
 volume = {31},
 year = {2018}
}

@inproceedings{
cfm_lipman,
title={Flow Matching for Generative Modeling},
author={Yaron Lipman and Ricky T. Q. Chen and Heli Ben-Hamu and Maximilian Nickel and Matthew Le},
booktitle={The Eleventh International Conference on Learning Representations },
year={2023},
}

@article{
cfm_tong,
title={Improving and generalizing flow-based generative models with minibatch optimal transport},
author={Alexander Tong and Kilian Fatras and Nikolay Malkin and Guillaume Huguet and Yanlei Zhang and Jarrid Rector-Brooks and Guy Wolf and Yoshua Bengio},
journal={Transactions on Machine Learning Research},
issn={2835-8856},
year={2024},
pages={1--34},
}

@article{kantorovich1942translocation,
 author={Kantorovich, Leonid V},
 ISSN = {00251909, 15265501},
 journal = {Management Science},
 number = {1},
 pages = {1--4},
 publisher = {INFORMS},
 title = {On the Translocation of Masses},
 volume = {5},
 year = {1958}
}

@article{benamou2000computational,
  title={A computational fluid mechanics solution to the Monge-Kantorovich mass transfer problem},
  author={Benamou, Jean-David and Brenier, Yann},
  journal={Numerische Mathematik},
  volume={84},
  number={3},
  pages={375--393},
  year={2000},
  publisher={Springer-Verlag Berlin/Heidelberg}
}

@article{chizat2018interpolating,
  title={An interpolating distance between optimal transport and Fisher--Rao metrics},
  author={Chizat, Lenaic and Peyr{\'e}, Gabriel and Schmitzer, Bernhard and Vialard, Fran{\c{c}}ois-Xavier},
  journal={Foundations of Computational Mathematics},
  volume={18},
  number={1},
  pages={1--44},
  year={2018},
  publisher={Springer}
}

@article{chizat2018unbalanced,
  title={Unbalanced optimal transport: Dynamic and Kantorovich formulations},
  author={Chizat, Lenaic and Peyr{\'e}, Gabriel and Schmitzer, Bernhard and Vialard, Fran{\c{c}}ois-Xavier},
  journal={Journal of Functional Analysis},
  volume={274},
  number={11},
  pages={3090--3123},
  year={2018},
  publisher={Elsevier}
}

@article{liero2018optimal,
  title={Optimal entropy-transport problems and a new Hellinger--Kantorovich distance between positive measures},
  author={Liero, Matthias and Mielke, Alexander and Savar{\'e}, Giuseppe},
  journal={Inventiones mathematicae},
  volume={211},
  pages={969--1117},
  year={2018},
  publisher={Springer}
}

@article{schrodinger1932sur,
  title={Sur la th{\'e}orie relativiste de l'{\'e}lectron et l'interpr{\'e}tation de la m{\'e}canique quantique},
  author={Schr\"odinger, Erwin},
  journal={Annales de l'Institut Henri Poincar{\'e}},
  volume={2},
  number={4},
  pages={269--310},
  year={1932},
  publisher={Gauthier-Villars}
}

@inproceedings{bortoli2021diffusion,
 author = {De Bortoli, Valentin and Thornton, James and Heng, Jeremy and Doucet, Arnaud},
 booktitle = {Advances in Neural Information Processing Systems},
 editor = {M. Ranzato and A. Beygelzimer and Y. Dauphin and P.S. Liang and J. Wortman Vaughan},
 pages = {17695--17709},
 publisher = {Curran Associates, Inc.},
 title = {Diffusion Schr\"odinger Bridge with Applications to Score-Based Generative Modeling},
 volume = {34},
 year = {2021}
}

@InProceedings{bunne2023schrodinger,
  title = 	 {The Schrödinger Bridge between Gaussian Measures has a Closed Form},
  author =       {Bunne, Charlotte and Hsieh, Ya-Ping and Cuturi, Marco and Krause, Andreas},
  booktitle = 	 {Proceedings of The 26th International Conference on Artificial Intelligence and Statistics},
  pages = 	 {5802--5833},
  year = 	 {2023},
  editor = 	 {Ruiz, Francisco and Dy, Jennifer and van de Meent, Jan-Willem},
  volume = 	 {206},
  series = 	 {Proceedings of Machine Learning Research},
  month = 	 {25--27 Apr},
  publisher =    {PMLR},
}

@InProceedings{tong2023simulationfree,
  title = 	 {Simulation-Free {S}chrödinger Bridges via Score and Flow Matching},
  author =       {Tong, Alexander Y. and Malkin, Nikolay and Fatras, Kilian and Atanackovic, Lazar and Zhang, Yanlei and Huguet, Guillaume and Wolf, Guy and Bengio, Yoshua},
  booktitle = 	 {Proceedings of The 27th International Conference on Artificial Intelligence and Statistics},
  pages = 	 {1279--1287},
  year = 	 {2024},
  editor = 	 {Dasgupta, Sanjoy and Mandt, Stephan and Li, Yingzhen},
  volume = 	 {238},
  series = 	 {Proceedings of Machine Learning Research},
  month = 	 {02--04 May},
  publisher =    {PMLR},
}

@misc{BSB,
      title={Regularized unbalanced optimal transport as entropy minimization with respect to branching Brownian motion}, 
      author={Aymeric Baradat and Hugo Lavenant},
      year={2021},
      eprint={2111.01666},
      archivePrefix={arXiv},
      primaryClass={math.PR},
      url={https://arxiv.org/abs/2111.01666},
}

@article{moon2019visualizing,
  title={Visualizing structure and transitions in high-dimensional biological data},
  author={Moon, Kevin R and Van Dijk, David and Wang, Zheng and Gigante, Scott and Burkhardt, Daniel B and Chen, William S and Yim, Kristina and Elzen, Antonia van den and Hirn, Matthew J and Coifman, Ronald R and others},
  journal={Nature biotechnology},
  volume={37},
  pages={1482--1492},
  year={2019},
  publisher={Nature Publishing Group US New York}
}

@article{cook2020context,
  title={Context specificity of the EMT transcriptional response},
  author={Cook, David P and Vanderhyden, Barbara C},
  journal={Nature communications},
  volume={11},
  pages={2142},
  year={2020},
  publisher={Nature Publishing Group UK London}
}

@article{cannoodt2021spearheading,
  title={Spearheading future omics analyses using dyngen, a multi-modal simulator of single cells},
  author={Cannoodt, Robrecht and Saelens, Wouter and Deconinck, Louise and Saeys, Yvan},
  journal={Nature communications},
  volume={12},
  pages={3942},
  year={2021},
  publisher={Nature Publishing Group UK London}
}

@article{weinreb2020lineage,
  title={Lineage tracing on transcriptional landscapes links state to fate during differentiation},
  author={Weinreb, Caleb and Rodriguez-Fraticelli, Alejo and Camargo, Fernando D and Klein, Allon M},
  journal={Science},
  volume={367},
  number={6479},
  pages={eaaw3381},
  year={2020},
  publisher={American Association for the Advancement of Science}
}

@inproceedings{adam,
  author={Diederik P. Kingma and Jimmy Ba},
  title={Adam: A Method for Stochastic Optimization},
  year={2015},
  cdate={1420070400000},
  booktitle={ICLR (Poster)},
}

@article{pot,
  title={Pot: Python optimal transport},
  author={Flamary, R{\'e}mi and Courty, Nicolas and Gramfort, Alexandre and Alaya, Mokhtar Z and Boisbunon, Aur{\'e}lie and Chambon, Stanislas and Chapel, Laetitia and Corenflos, Adrien and Fatras, Kilian and Fournier, Nemo and others},
  journal={Journal of Machine Learning Research},
  volume={22},
  number={78},
  pages={1--8},
  year={2021}
}

@misc{fatras2021minibatchOT,
      title={Minibatch optimal transport distances; analysis and applications}, 
      author={Kilian Fatras and Younes Zine and Szymon Majewski and Rémi Flamary and Rémi Gribonval and Nicolas Courty},
      year={2021},
      eprint={2101.01792},
      archivePrefix={arXiv},
      primaryClass={stat.ML},
}

@article{waddingot,
  title={Optimal-transport analysis of single-cell gene expression identifies developmental trajectories in reprogramming},
  author={Schiebinger, Geoffrey and Shu, Jian and Tabaka, Marcin and Cleary, Brian and Subramanian, Vidya and Solomon, Aryeh and Gould, Joshua and Liu, Siyan and Lin, Stacie and Berube, Peter and others},
  journal={Cell},
  volume={176},
  number={4},
  pages={928--943},
  year={2019}
}

@article{moscot,
  title={Mapping cells through time and space with moscot},
  author={Klein, Dominik and Palla, Giovanni and Lange, Marius and others},
  journal={Nature},
  volume={638},
  pages={1065-1075},
  year={2025},
  publisher={Nature Publishing Group UK London}
}

@inproceedings{kapusniak2024metric,
 author = {Kapu\'{s}niak, Kacper and Potaptchik, Peter and Reu, Teodora and Zhang, Leo and Tong, Alexander and Bronstein, Michael and Bose, Avishek Joey and Di Giovanni, Francesco},
 booktitle = {Advances in Neural Information Processing Systems},
 doi = {10.52202/079017-4291},
 editor = {A. Globerson and L. Mackey and D. Belgrave and A. Fan and U. Paquet and J. Tomczak and C. Zhang},
 pages = {135011--135042},
 publisher = {Curran Associates, Inc.},
 title = {Metric Flow Matching for Smooth Interpolations on the Data Manifold},
 volume = {37},
 year = {2024}
}

@inproceedings{klein2024genot,
 author = {Klein, Dominik and Uscidda, Th\'{e}o and Theis, Fabian and Cuturi, Marco},
 booktitle = {Advances in Neural Information Processing Systems},
 doi = {10.52202/079017-3301},
 editor = {A. Globerson and L. Mackey and D. Belgrave and A. Fan and U. Paquet and J. Tomczak and C. Zhang},
 pages = {103897--103944},
 publisher = {Curran Associates, Inc.},
 title = {GENOT: Entropic (Gromov) Wasserstein Flow Matching with Applications to Single-Cell Genomics},
 volume = {37},
 year = {2024}
}

@InProceedings{trajectorynet,
  title = 	 {{T}rajectory{N}et: A Dynamic Optimal Transport Network for Modeling Cellular Dynamics},
  author =       {Tong, Alexander and Huang, Jessie and Wolf, Guy and Van Dijk, David and Krishnaswamy, Smita},
  booktitle = 	 {Proceedings of the 37th International Conference on Machine Learning},
  pages = 	 {9526--9536},
  year = 	 {2020},
  editor = 	 {III, Hal Daumé and Singh, Aarti},
  volume = 	 {119},
  series = 	 {Proceedings of Machine Learning Research},
  month = 	 {13--18 Jul},
  publisher =    {PMLR},
}

@inproceedings{shi2024diffusion,
 author = {Shi, Yuyang and De Bortoli, Valentin and Campbell, Andrew and Doucet, Arnaud},
 booktitle = {Advances in Neural Information Processing Systems},
 editor = {A. Oh and T. Naumann and A. Globerson and K. Saenko and M. Hardt and S. Levine},
 pages = {62183--62223},
 publisher = {Curran Associates, Inc.},
 title = {Diffusion Schr\"odinger Bridge Matching},
 volume = {36},
 year = {2023}
}

@article{bunne2023learning,
  title={Learning single-cell perturbation responses using neural optimal transport},
  author={Bunne, Charlotte and Stark, Stefan G and Gut, Gabriele and others},
  journal={Nature methods},
  volume={20},
  pages={1759--1768},
  year={2023},
  publisher={Nature Publishing Group US New York}
}

@inproceedings{
sun2026variational,
title={Variational Regularized Unbalanced Optimal Transport: Single Network, Least Action},
author={Yuhao Sun and Zhenyi Zhang and Zihan Wang and Tiejun Li and Peijie Zhou},
booktitle={The Thirty-ninth Annual Conference on Neural Information Processing Systems},
year={2026},
url={https://openreview.net/forum?id=Iguyg0LULD}
}

@inproceedings{duchi2008efficient,
  title={Efficient projections onto the l1-ball for learning in high dimensions},
  author={Duchi, John and Shalev-Shwartz, Shai and Singer, Yoram and Chandra, Tushar},
  booktitle={Proceedings of the 25th international conference on Machine learning},
  year={2008}
}

@misc{USB,
      title={Beyond Continuity: Simulation-free Reconstruction of Discrete Branching Dynamics from Single-cell Snapshots}, 
      author={Junda Ying and Yuxuan Wang and Bowen Yang and Peijie Zhou and Lei Zhang},
      year={2026},
      eprint={2605.00545},
      archivePrefix={arXiv},
      primaryClass={cs.LG},
      url={https://arxiv.org/abs/2605.00545}, 
}

@book{heinonen2012nonlinear,
  title={Nonlinear Potential Theory of Degenerate Elliptic Equations},
  author={Heinonen, J. and Kilpel{\"a}inen, T. and Martio, O.},
  isbn={9780486149257},
  series={Dover Books on Mathematics},
  year={2012},
  publisher={Dover Publications}
}

@article{SiLU,
title = {Sigmoid-weighted linear units for neural network function approximation in reinforcement learning},
journal = {Neural Networks},
volume = {107},
pages = {3-11},
year = {2018},
note = {Special issue on deep reinforcement learning},
issn = {0893-6080},
author = {Stefan Elfwing and Eiji Uchibe and Kenji Doya},
}

@book{dacorogna2008direct,
  title={Direct Methods in the Calculus of Variations},
  author={Dacorogna, Bernard},
  series={Applied Mathematical Sciences},
  edition={2},
  year={2008},
  publisher={Springer New York},
  isbn={978-0-387-35779-9}
}

@article{spring,
    author = {Weinreb, Caleb and Wolock, Samuel and Klein, Allon M},
    title = {SPRING: a kinetic interface for visualizing high dimensional single-cell expression data},
    journal = {Bioinformatics},
    volume = {34},
    number = {7},
    pages = {1246-1248},
    year = {2018},
    month = {04},
    issn = {1367-4803}
}
\newpage








\appendix

\section{Proofs}
\label{appendix:proof}
Proofs of theorems and propositions

\subsection{Proof of Theorem \ref{thm:concave}}
\label{pf:concave}
\textbf{Theorem \ref{thm:concave}.}
\textit{For \(\Psi\) which is concave on both sides of \(g_0\), the solution of UDOT problem (\ref{eq:UDOT}) separates the non-trivial growth and transport.}

\textit{Proof.} By concavity, it is easy to show that \(\Psi(g) = O(|g|)\) as \(|g|\to\infty\). Recall the UDOT problem
\begin{equation}
\label{eq:problem}
\begin{aligned}
&\text{UDOT}(\mu_0,\mu_1) =
\inf_{\rho,g,\bm{u}} \int_0^1\int_{\mathcal{X}}  \, \frac{1}{2}\Big(\Vert \bm{u}(\bm{x},t)\Vert_2^2+\Psi(g(\bm{x},t))\Big)\rho_t(\bm{x})\mathrm{d} \bm{x}\mathrm{d}t \\
&\text{s.t.}\ \ \ \ \ \ \ \ \ \ \ \ \ \ \partial_t\rho+\nabla_{\bm{x}}\cdot(\rho \bm{u})=\rho g,\ \rho_0=\mu_0,\ \rho_1=\mu_1
\end{aligned}
\end{equation}

\textbf{Case 1.} \(\Psi(g)=o(|g|)\quad(|g|\to\infty)\)

Consider a family of \((g,\bm{u})\) defined as following.
\begin{equation}
\label{eq:concave g}
\left \{
\begin{aligned}
&g(\bm{x},t)=-\frac{N\mu_0(\bm{x})}{(1-Nt)\mu_0(\bm{x})}=-\frac{N}{1-Nt}\quad&\bm{u}(\bm{x},t)=\bm{0}\quad &t\in[0,\frac{1}{N}]\\
&g(\bm{x},t)=0\quad&\text{arbitrary }\bm{u}(\bm{x},t)\quad &t\in[\frac{1}{N},\frac{N-1}{N}]\\
&g(\bm{x},t)=\frac{N\mu_1(\bm{x})}{(Nt-N+1)\mu_1(\bm{x})}=\frac{N}{Nt-N+1}\quad&\bm{u}(\bm{x},t)=\bm{0}\quad &t\in[\frac{N-1}{N},1]\\
\end{aligned}
\right .
\end{equation}
Intuitively, the total mass vanishes linearly in \([0,\frac{1}{N}]\), and grows linearly in \([\frac{N-1}{N},1]\). One can easily check that \((g,\bm{u})\) defined above transports \(\mu_0\) to \(\mu_1\). The transport cost in \([\frac{1}{N},\frac{N-1}{N}]\) is \(0\) due to the zero total mass. Thus, the total transport cost is

\begin{equation}
\begin{aligned}
&\int_0^{\frac{1}{N}}\int_{\mathcal{X}}\Psi(g(\bm{x},t))\rho(\bm{x},t)\mathrm{d}\bm{x}\mathrm{d}t+\int^1_{\frac{N-1}{N}}\int_{\mathcal{X}}\Psi(g(\bm{x},t))\rho(\bm{x},t)\mathrm{d}\bm{x}\mathrm{d}t\\
=&\int_0^{\frac{1}{N}}\Psi(-\frac{N}{1-Nt})(1-Nt)\int_{\mathcal{X}}\mu_0(\bm{x})\mathrm{d}\bm{x}\mathrm{d}t+\int^1_{\frac{N-1}{N}}\Psi(\frac{N}{Nt-N+1})(Nt-N+1)\int_{\mathcal{X}}\mu_1(\bm{x})\mathrm{d}\bm{x}\mathrm{d}t\\
=&M_0\int_0^{\frac{1}{N}}\Psi(-\frac{N}{1-Nt})(1-Nt)\mathrm{d}t+M_1\int^1_{\frac{N-1}{N}}\Psi(\frac{N}{Nt-N+1})(Nt-N+1)\mathrm{d}t\\
\end{aligned}
\end{equation}

where \(M_0,M_1\) are the total mass of \(\mu_0,\mu_1\). Since \(\Psi(g)=o(|g|)\quad(|g|\to\infty)\),  \(\forall \epsilon>0, \exists N>0 \ s.t.\ \forall |g|>N, \Psi(g)\le\epsilon |g|\). 

\begin{equation}
\begin{aligned}
&M_0\int_0^{\frac{1}{N}}\Psi(-\frac{N}{1-Nt})(1-Nt)\mathrm{d}t+M_1\int^1_{\frac{N-1}{N}}\Psi(\frac{N}{Nt-N+1})(Nt-N+1)\mathrm{d}t\\
\le&M_0\epsilon\int_0^{\frac{1}{N}}\frac{N}{1-Nt}(1-Nt)\mathrm{d}t+M_1\epsilon\int^1_{\frac{N-1}{N}}\frac{N}{Nt-N+1}(Nt-N+1)\mathrm{d}t\\
=&(M_0+M_1)\epsilon \to0\quad(\epsilon\to0)
\end{aligned}
\end{equation}

Hence, the optimal UDOT cost of this case is \(0\). Intuitively, the corresponding dynamics is that at time $t=0$, the mass of $\mu_0$ decays to zero at an infinite rate, while at time $t=1$, the mass grows at an infinite rate to form $\mu_1$. 

\textbf{Case 2.} \(\Psi(g)\sim|g|\quad(|g|\to\infty)\)

Assume $$ \lim_{g \to +\infty} \frac{\Psi(g)}{g} = \alpha, \quad -\lim_{g \to -\infty} \frac{\Psi(g)}{g} = \beta $$

Using the relaxation theorem for non-convex variational problems \citep{dacorogna2008direct}, for linear growth functions, the infimum of (\ref{eq:problem}) is equal to the infimum of its convex envelope relaxation.

\begin{equation}
\label{eq:convex relaxation}
\begin{aligned}
&
\inf_{\rho,g,\bm{u}} \int_0^1\int_{\mathcal{X}}  \, \frac{1}{2}\Big(\Vert \bm{u}(\bm{x},t)\Vert_2^2+\Psi^{**}(g(\bm{x},t))\Big)\rho_t(\bm{x})\mathrm{d} \bm{x}\mathrm{d}t \\
&\text{s.t.}\ \ \ \ \ \ \ \partial_t\rho+\nabla_{\bm{x}}\cdot(\rho \bm{u})=\rho g,\ \rho_0=\mu_0,\ \rho_1=\mu_1
\end{aligned}
\end{equation}

where \(\Psi^{**}\) is the convex envelope if \(\Psi\). And there exists a minimizing sequence of (\ref{eq:problem}) which converges to the solution of (\ref{eq:convex relaxation}) in weak sense. In this case, 
\begin{equation}
\Psi^{**}(g) = \alpha \max(g-g_0,0) + \beta \max(g_0-g,0) = \alpha (g-g_0)^+ + \beta (g-g_0)^- 
\end{equation} 

Let \(\tilde{s}=(g-g_0)\rho\), the corresponding relaxed problem is
\begin{equation}
\label{eq:relax2}
\begin{aligned}
&\inf_{\rho, \bm{u}, \tilde{s}}  \frac{1}{2}\int_0^1\int_{\mathcal{X}} \Big(\|\bm{u}(\bm{x},t)\|^2\rho(\bm{x},t) +\alpha \tilde{s}^+ + \beta\tilde{s}^-\Big)\mathrm{d}x\mathrm{d}t \\
&s.t. \ \partial_t \rho + \nabla_{\bm{x}} \cdot (\rho \bm{u}) = g_0 \rho+\tilde{s}^+ -\tilde{s}^-,\ \rho_0=\mu_0,\ \rho_1=\mu_1
\end{aligned}
\end{equation}

Change the variables
\begin{equation}
\left \{
\begin{aligned}
&\hat{\rho}(\bm{x},t) = \rho(\bm{x},t) e^{-g_0 t}\\ 
&\hat{\bm{u}}(\bm{x},t) = \bm{u}(\bm{x},t)\\
&\hat{s}(\bm{x},t) = \tilde{s}(\bm{x},t) e^{-g_0 t}
\end{aligned}
\right .
\end{equation}

(\ref{eq:relax2}) becomes

\begin{equation}
\label{eq:relax3}
\begin{aligned}
&\inf_{\hat{\rho}, \hat{\bm{u}}, \hat{s}}  \frac{1}{2}\int_0^1\int_{\mathcal{X}} e^{g_0t}\Big(\|\hat{\bm{u}}(\bm{x},t)\|^2\hat{\rho}(\bm{x},t) +\alpha \hat{s}^+ + \beta\hat{s}^-\Big)\mathrm{d}x\mathrm{d}t \\
&s.t. \ \ \partial_t \hat{\rho} + \nabla_{\bm{x}} \cdot (\hat{\rho} \hat{\bm{u}}) = \hat{s}^+ - \hat{s}^-,\ \hat{\rho}_0=\mu_0,\ \hat{\rho}_1=\mu_1e^{-g_0}
\end{aligned}
\end{equation}

Construct the Lagrangian functional of the constraint optimization problem (\ref{eq:relax3})
\begin{equation}
\begin{aligned}
&\mathcal{L}(\hat{\rho}, \hat{\bm{u}}, \hat{s}^+, \hat{s}^-, \phi) \\&= \int_0^1 \int_{\mathcal{X}} \left[ \frac{1}{2} e^{g_0 t} \left( \|\hat{\bm{u}}\|^2 \hat{\rho} + \alpha \hat{s}^+ + \beta \hat{s}^- \right) + \phi \left( \partial_t \hat{\rho} + \nabla_x \cdot (\hat{\rho}\hat{\bm{u}}) - \hat{s}^+ + \hat{s}^- \right) \right] \mathrm{d}\bm{x} \mathrm{d}t   
\end{aligned}
\end{equation}
where \(\phi\) is the Lagrangian multiplier. Integrate by part, we have
\begin{equation}
\begin{aligned}
\mathcal{L} = &\int_0^1 \int_{\mathcal{X}} \left[ \hat{\rho} \left( \frac{1}{2} e^{g_0 t} \|\hat{\bm{u}}\|^2 - \partial_t \phi - \hat{\bm{u}} \cdot \nabla_{\bm{x}} \phi \right) + \hat{s}^+ \left( \frac{1}{2} \alpha e^{g_0 t} - \phi \right) + \hat{s}^- \left( \frac{1}{2} \beta e^{g_0 t} + \phi \right) \right] \mathrm{d}\bm{x} \mathrm{d}t \\
&+ \int_{\mathcal{X}} \phi(\bm{x},1) \hat{\rho}_1(\bm{x}) \mathrm{d}\bm{x} - \int_{\mathcal{X}} \phi(\bm{x},0) \hat{\rho}_0(\bm{x}) \mathrm{d}\bm{x}  
\end{aligned}
\end{equation}

By KKT conditions, we have the following properties of the optimal solution.
\begin{equation}
\label{eq:KKT}
\left \{
\begin{aligned}
&\hat{\rho} \left( e^{g_0 t} \hat{\bm{u}} - \nabla_{\bm{x}} \phi \right) = 0 &(\text{stationary}\ \ \hat{\bm{u}})\\
&\frac{1}{2} \alpha e^{g_0 t} - \phi(\bm{x},t) \ge 0 &(\text{stationary}\ \ \hat{s}^+)\\
&\frac{1}{2} \beta e^{g_0 t} + \phi(\bm{x},t) \ge 0 &(\text{stationary}\ \ \hat{s}^-)\\ 
&\hat{s}^+(\bm{x},t) \left( \frac{1}{2} \alpha e^{g_0 t} - \phi(\bm{x},t) \right) = 0 &(\text{complementary slackness}\ \ \hat{s}^+)\\
&\hat{s}^-(\bm{x},t) \left( \frac{1}{2} \beta e^{g_0 t} + \phi(\bm{x},t) \right) = 0 &(\text{complementary slackness}\ \ \hat{s}^-)\\
\end{aligned}
\right .
\end{equation}

If \(\hat{s}\neq0\) on some domain \(A\), we have either \(\hat{s}^+>0\) or \(\hat{s}^->0\). By complementary slackness, it implies that \(\phi(\bm{x},t)=\frac{1}{2} \alpha e^{g_0 t}\) or \(\phi(\bm{x},t)=-\frac{1}{2} \beta e^{g_0 t}\) on \(A\). Both implies that \(\phi(\bm{x},t)\) is independent to the spatial variable \(\bm{x}\), hence \(\hat{\bm{u}}(\bm{x},t)=e^{-g_0t}\nabla_{\bm{x}} \phi=\bm{0}\) on \(A\). Changing back to the original variables, we have that \(\tilde{s}(\bm{x},t)\bm{u}(\bm{x},t)=\bm{0}\), i.e. the non-trivial part of the growth (other than just \(g_0\)) and transport are separated in the optimal solution. As a result of the relaxation theorem, we have a minimizing sequence of the original problem (\ref{eq:problem}) which converges to the solution above in weak sense. 

In summary, both cases induced an unbalanced transport where non-trivial growth and transport are separated. The proof is completed. \hfill $\square$

\subsection{Proof of Theorem \ref{thm:EL eqn}}
\label{pf:EL eqn}
\textbf{Theorem \ref{thm:EL eqn}.}
\textit{There exists two 1-dimensional functions $k(t),l(t)$ which only depend on $d=\Vert \bm{x}_1-\bm{x}_0\Vert,r=\frac{m_1}{m_0}$ s.t. $\bm{x}(t)=\bm{x}_0+k(t)\frac{\bm{x}_1-\bm{x}_0}{d}$, $m(t)=m_0l(t)$, i.e. the travelling Dirac is straight.}

\textit{Proof.} Recall the minimization problem (\ref{eq:Dirac UDOT})
\begin{equation}
\label{eq:Dirac UDOT appendix}
\begin{aligned}
&\text{UDOT-DD}(m_0\delta_{\bm{x}_0},m_1\delta_{\bm{x}_1}) = \inf_{m,\bm{x}} \int_0^1\frac{1}{2}\Big(\Vert \dot{\bm{x}}(t)\Vert_2^2+ \Psi(\frac{\dot{m}(t)}{m(t)})\Big)m(t)\mathrm{d}t \\
&\text{s.t.} \ \ \ \ \ \ \ \ \ \ \ \ \ \ \ \ \ \ \ \ \ m(0)=m_0,m(1)=m_1,\bm{x}(0)=\bm{x}_0,\bm{x}(1)=\bm{x}_1&
\end{aligned}
\end{equation}
We derive the optimality conditions via Euler-Lagrange equation.

\begin{equation}
\left \{
\begin{aligned}
&\frac{\mathrm{d}}{\mathrm{d}t}(m\dot{\bm{x}})=\bm{0}\quad\quad &(\frac{\partial}{\partial\bm{x}}=\frac{\mathrm{d}}{\mathrm{d}t}\frac{\partial}{\partial\dot{\bm{x}}})\\
&|\dot{\bm{x}}|^2+\Psi-\frac{\dot{m}}{m}\Psi'=\frac{\ddot{m}m-\dot{m}^2}{m^2}\Psi''\quad\quad &(\frac{\partial}{\partial m}=\frac{\mathrm{d}}{\mathrm{d}t}\frac{\partial}{\partial\dot{m}})
\end{aligned}
\right .
\end{equation}

The first equation implies momentum conservation in the travelling Dirac, yielding a constant velocity direction. This restricts the spatial trajectory to a straight line; thus, it is sufficient to solely parameterize the relative position \(k(t)\) of the particle along this line at time $t$. Hence there exists a 1-dimensional function \(k(t)\  s.t.\  \bm{x}(t)=\bm{x}_0+k(t)\frac{\bm{x}_1-\bm{x}_0}{d}\).

Replacing \(\bm{x}(t)\) and \(m(t)\) with \(\bm{x}(t)+\bm{\alpha}\) and \(\beta m(t)\) where \(\bm{\alpha},\beta\) are constant vector and scalar, it is easy to see that the following problem shares the same minimizer with (\ref{eq:Dirac UDOT appendix}).
\begin{equation}
\label{eq:Dirac UDOT shift}
\begin{aligned}
&\text{UDOT-DD}(\beta m_0\delta_{\bm{x}_0+\bm{\alpha}},\beta m_1\delta_{\bm{x}_1+\bm{\alpha}}) = \inf_{m,\bm{x}} \int_0^1\frac{1}{2}\Big(\Vert \dot{\bm{x}}(t)\Vert_2^2+ \Psi(\frac{\beta\dot{m}(t)}{\beta m(t)})\Big)\beta m(t)\mathrm{d}t \\
&\text{s.t.} \ \ \ \ \ \beta m(0)=\beta m_0,\beta m(1)=\beta m_1,\bm{x}(0)+\bm{\alpha}=\bm{x}_0+\bm{\alpha},\bm{x}(1)+\bm{\alpha}=\bm{x}_1+\bm{\alpha}&
\end{aligned}
\end{equation}
Thus, there exists a 1-dimensional function \(l(t)\ s.t.\ m(t)=m_0l(t)\), and \(k(t), l(t)\) only depend on \(\frac{m_1}{m_0}\) and \(\bm{x}_1-\bm{x}_0\). Also note that only the functional of \(\Vert\dot{\bm{x}}(t)\Vert\) is minimized, hence the direction is not important. Hence, \(k(t), l(t)\) only depend on \(r\) and \(d\). The proof is completed. \hfill $\square$

\subsection{Proof of Proposition \ref{prop:cost form}}
\label{pf:cost form}
\textbf{Proposition \ref{prop:cost form}.}
\textit{The cost $C_d$ is a homogeneous function of degree 1 w.r.t $(m_0, m_1)$. Thus, it takes form of $C_d(m_0,m_1)=m_0f_d(\frac{m_1}{m_0})$ where $f_d(r)=C_d(1,r)$. For even $\Psi$, $C_d$ is symmetric.}

\textit{Proof.} Recall the formula of transport cost where \(r=\frac{m_1}{m_0}\).
\begin{equation}
\label{eq:Path appendix}
\begin{aligned}
&C_d(m_0,m_1) = m_0\inf_{l,k} \int_0^1\frac{1}{2}\Big(\dot{k}(t)^2+ \Psi(\frac{\dot{l}(t)}{l(t)})\Big)l(t)\mathrm{d}t \\
&\text{s.t.} \ \ \ \ \ \ \ \ \  l(0)=1,l(1)=r,k(0)=0,k(1)=d&
\end{aligned}
\end{equation}

Replacing \(m_0,m_1\) with \(\beta m_0,\beta m_1\), we have
\begin{equation}
\label{eq:Path appendix with ratio}
\begin{aligned}
&C_d(\beta m_0,\beta m_1) = \beta m_0\inf_{l,k} \int_0^1\frac{1}{2}\Big(\dot{k}(t)^2+ \Psi(\frac{\dot{l}(t)}{l(t)})\Big)l(t)\mathrm{d}t \\
&\text{s.t.} \ \ \ \ \ \ \ \ \  l(0)=1,l(1)=r,k(0)=0,k(1)=d&
\end{aligned}
\end{equation}
Hence, \(C_d(\beta m_0,\beta m_1)=\beta C_d(m_0,m_1)\), i.e. \(C_d\) is a homogeneous function of degree 1 w.r.t $(m_0, m_1)$. Thus, \(C_d(m_0,m_1)=m_0C_d(1,r)\).

For even \(\Psi\), i.e. \(\Psi(g)=\Psi(-g)\),
\begin{equation}
\begin{aligned}
&\int_0^1\frac{1}{2}\Big(\big(\frac{\mathrm{d}}{\mathrm{d}t}k(1-t)\big)^2+ \Psi(\frac{\frac{\mathrm{d}}{\mathrm{d}t}l(1-t)}{l(1-t)})\Big)l(1-t)\mathrm{d}t\\ =&\int_0^1\frac{1}{2}\Big(\dot{k}(1-t)^2+ \Psi(\frac{-\dot{l}(1-t)}{l(1-t)})\Big)l(1-t)\mathrm{d}t\\
=&\int_0^1\frac{1}{2}\Big(\dot{k}(1-t)^2+ \Psi(\frac{\dot{l}(1-t)}{l(1-t)})\Big)l(1-t)\mathrm{d}t\\
=&\int_0^1\frac{1}{2}\Big(\dot{k}(t)^2+ \Psi(\frac{\dot{l}(t)}{l(t)})\Big)l(t)\mathrm{d}t
\end{aligned}
\end{equation}

holds for any \(k(t),l(t)\). Since
\begin{equation}
\begin{aligned}
C_d(m_0,m_1)&=m_0\inf_{l,k} \int_0^1\frac{1}{2}\Big(\dot{k}(t)^2+ \Psi(\frac{\dot{l}(t)}{l(t)})\Big)l(t)\mathrm{d}t \\
&\text{s.t.} \ \  l(0)=1,l(1)=r,k(0)=0,k(1)=d\\
&=m_0\inf_{l,k}\int_0^1\frac{1}{2}\Big(\big(\frac{\mathrm{d}}{\mathrm{d}t}k(1-t)\big)^2+ \Psi(\frac{\frac{\mathrm{d}}{\mathrm{d}t}l(1-t)}{l(1-t)})\Big)l(1-t)\mathrm{d}t\\
&\text{s.t.} \ \  l(0)=1,l(1)=r,k(0)=0,k(1)=d\\
&=m_1\inf_{l,k}\int_0^1\frac{1}{2}\Big(\big(\frac{\mathrm{d}}{\mathrm{d}t}k(1-t)\big)^2+ \Psi(\frac{\frac{1}{r}\frac{\mathrm{d}}{\mathrm{d}t}l(1-t)}{\frac{1}{r}l(1-t)})\Big)\frac{1}{r}l(1-t)\mathrm{d}t\\
&\text{s.t.} \ \  \frac{1}{r}l(0)=\frac{1}{r},\frac{1}{r}l(1)=1,k(0)=0,k(1)=d\\
\end{aligned}
\end{equation}
Rename \(\hat{l}(t)=\frac{1}{r}l(1-t),\hat{k}(t)=k(1-t)\), we have
\begin{equation}
\begin{aligned}
C_d(m_0,m_1)&=m_1\inf_{\hat{l},\hat{k}}\int_0^1\frac{1}{2}\Big(\dot{\hat{k}}(t)^2+ \Psi(\frac{\dot{\hat{l}}(t)}{\hat{l}(t)})\Big)\hat{l}(t)\mathrm{d}t\\
&\text{s.t.} \ \  \hat{l}(0)=1,\hat{l}(1)=\frac{1}{r},\hat{k}(0)=d,\hat{k}(1)=0\\
&=C_d(m_1,m_0)
\end{aligned}
\end{equation}

The proof is completed.\hfill $\square$

\subsection{Proof of Theorem \ref{thm:uniformly convex}}
\label{pf:uniformly convex}
\textbf{Theorem \ref{thm:uniformly convex}.}
\textit{For uniformly convex $\Psi$, i.e. \(\exists \kappa>0 \ s.t. 
\Psi''\ge\kappa\), $C_d$ is jointly convex and sublinear w.r.t $(m_0, m_1)$ when \(d\le\pi\sqrt{\frac{\kappa}{2}}\), i.e. $C_d(a,b)+C_d(c,d)\ge C_d(a+c,b+d)$.}

\textit{Proof.} Recall the formula of transport cost where \(r=\frac{m_1}{m_0}\).
\begin{equation}
\label{eq:Path appendix 2}
\begin{aligned}
&C_d(m_0,m_1) = \inf_{l,k} \int_0^1\frac{1}{2}\Big(\dot{k}(t)^2+ \Psi(\frac{\dot{l}(t)}{l(t)})\Big)l(t)\mathrm{d}t \\
&\text{s.t.} \ \ \ \  l(0)=m_0,l(1)=m_1,k(0)=0,k(1)=d&
\end{aligned}
\end{equation}

By Cauchy's inequality,

\begin{equation}
d^2=\Big(\int_0^1\dot{k}(t)\mathrm{d}t\Big)^2\le\int_0^1l(t)\dot{k}(t)^2\mathrm{d}t\int_0^1\frac{1}{l(t)}\mathrm{d}t
\end{equation}
Hence, when \(\dot{k}\propto\frac{1}{l}\), the equality holds.

\begin{equation}
\inf_{k}\int_0^1l(t)\dot{k}(t)^2\mathrm{d}t=d^2\Big(\int_0^1\frac{1}{l(t)}\mathrm{d}t\Big)^{-1}
\end{equation}

We then reformulate (\ref{eq:Path appendix 2}) as

\begin{equation}
\label{eq:J}
\begin{aligned}
&C_d(m_0,m_1) = \inf_{l} J[l]\ \ \ \ \text{s.t.}\ \ \  \ l(0)=m_0,l(1)=m_1
\end{aligned}
\end{equation}

where
\begin{equation}
J[l] = J_{mass}[l] + J_{dist}[l] = \underbrace{ \int_0^1 \frac{1}{2} l\Psi\left(\frac{\dot{l}}{l}\right) \mathrm{d}t }_{\text{Convex part}} + \underbrace{ \frac{d^2}{2} \left( \int_0^1 \frac{1}{l} \mathrm{d}t \right)^{-1} }_{\text{Concave part}}
\end{equation}

Let \(w(t)=\frac{\eta(t)}{l(t)}\), we calculate the second order variation of \(J_{dist}[l]\) along \(\eta(t)\). Note that \(\eta(t)\) has no boundary condition.

\begin{equation}
\begin{aligned}
\delta^2 J_{dist}[\eta] &=  \frac{d^2}{(\int_0^1 l^{-1} \mathrm{d}t)^3} \left[ \left(\int_0^1 \frac{\eta}{l^2} \mathrm{d}t \right)^2 - \left(\int_0^1 \frac{1}{l} \mathrm{d}t \right) \left(\int_0^1 \frac{\eta^2}{l^3} \mathrm{d}t \right) \right]\\
&=\frac{d^2}{(\int_0^1 l^{-1} \mathrm{d}t)^3} \left[ \left(\int_0^1 \frac{w}{l} \mathrm{d}t \right)^2 - \left(\int_0^1 \frac{1}{l} \mathrm{d}t \right) \left(\int_0^1 \frac{w^2}{l} \mathrm{d}t \right) \right]
\end{aligned}
\label{eq:secnd order variation of dist 1}
\end{equation}

Define a new measure \(\mathrm{d}\nu(t)=\frac{1}{Z}l^{-1}(t)\mathrm{d}t\) where \(Z=\int_0^1 l^{-1}(t) \mathrm{d}t\).

\begin{equation}
\begin{aligned}
\delta^2 J_{dist}[\eta] 
&=\frac{d^2}{Z^3} \left[ Z^2\left(\int_0^1 w(t) \mathrm{d}
\nu(t) \right)^2 - Z^2 \left(\int_0^1 w^2(t) \mathrm{d}\nu(t) \right) \right]\\
&=-\frac{d^2}{Z}\left[\left(\int_0^1 w^2(t) \mathrm{d}\nu(t) \right)-\left(\int_0^1 w(t) \mathrm{d}
\nu(t) \right)^2\right]
\end{aligned}
\label{eq:secnd order variation of dist 2}
\end{equation}

Next, we calculate the second order variation of \(J_{mass}[l]\) along \(\eta(t)\).

\begin{equation}
\begin{aligned}
\delta^2 J_{mass}[\eta] &= \int_0^1 \frac{l}{2}\Psi''\left(\frac{\dot{l}}{l}\right) \left[ \frac{\dot{l}^2}{l^4}\eta^2 - 2\frac{\dot{l}}{l^3}\eta\dot{\eta} + \frac{1}{l^2}\dot{\eta}^2 \right] \mathrm{d}t\\ 
&= \int_0^1 \frac{l}{2}\Psi''\left(\frac{\dot{l}}{l}\right) \left( \frac{\dot{\eta}l - \eta\dot{l}}{l^2} \right)^2 \mathrm{d}t\\
&=\int_0^1 \frac{l}{2}\Psi''\left(\frac{\dot{l}}{l}\right)\dot{w}^2\mathrm{d}t
\end{aligned}
\label{eq:secnd order variation of mass 1}
\end{equation}

Remark that \(\dot{w}(t)=\frac{\mathrm{d}w}{\mathrm{d}t}=\frac{\mathrm{d}w}{\mathrm{d}\nu}\frac{\mathrm{d}\nu}{\mathrm{d}t}=\frac{1}{Zl(t)}\frac{\mathrm{d}w}{\mathrm{d}\nu}\).

\begin{equation}
\begin{aligned}
\delta^2 J_{mass}[\eta]
&=\int_0^1 \frac{l}{2}\Psi''\left(\frac{\dot{l}}{l}\right)\frac{1}{Z^2l^2}\left(\frac{\mathrm{d}w}{\mathrm{d}\nu}\right)^2\mathrm{d}t\\
&=\frac{1}{2Z}\int_0^1\Psi''\left(\frac{\dot{l}}{l}\right)\left(\frac{\mathrm{d}w}{\mathrm{d}\nu}\right)^2\mathrm{d}\nu(t)
\end{aligned}
\label{eq:secnd order variation of mass 2}
\end{equation}

Combine (\ref{eq:secnd order variation of dist 2}) and (\ref{eq:secnd order variation of mass 2}), we have

\begin{equation}
\label{eq:second order variation}
\begin{aligned}
\delta^2J[\eta]&=\frac{1}{2Z}\int_0^1\Psi''\left(\frac{\dot{l}}{l}\right)\left(\frac{\mathrm{d}w}{\mathrm{d}\nu}\right)^2\mathrm{d}\nu(t)-\frac{d^2}{Z}\left[\left(\int_0^1 w^2(t) \mathrm{d}\nu(t) \right)-\left(\int_0^1 w(t) \mathrm{d}
\nu(t) \right)^2\right]\\
&\ge\frac{1}{Z}\left\{\frac{\kappa}{2}\int_0^1\left(\frac{\mathrm{d}w}{\mathrm{d}\nu}\right)^2\mathrm{d}\nu(t)-d^2\left[\left(\int_0^1 w^2(t) \mathrm{d}\nu(t) \right)-\left(\int_0^1 w(t) \mathrm{d}
\nu(t) \right)^2\right]\right\}
\end{aligned}
\end{equation}

By Poincaré-Wirtinger inequality on \([0,1]\) \citep{heinonen2012nonlinear}, we have

\begin{equation}
\int_0^1\left(\frac{\mathrm{d}w}{\mathrm{d}\nu}\right)^2\mathrm{d}\nu(t)\ge \pi^2 \left[ \left(\int_0^1 w^2(t) \mathrm{d}\nu(t) \right)-\left(\int_0^1 w(t) \mathrm{d}
\nu(t) \right)^2\right]
\end{equation}

Hence, when  \(d\le\pi\sqrt{\frac{\kappa}{2}}\), we have \(\delta^2J[\eta]\ge0\) for any \(\eta\), i.e. the functional \(J[l]\) is convex w.r.t \(l(t)\). Now, we can prove the jointly convexity of \(C_d(m_0,m_1)\). Let \(l_1(t),l_2(t)\) satisfy the boundary conditions

\begin{equation}
\left\{
\begin{aligned}
&l_1(0)=m_0, l_1(1)=m_1\\
&l_2(0)=n_0,l_2(1)=n_1
\end{aligned}
\right .
\end{equation}

such that \(l_1(t),l_2(t)\) achieve the minimal cost \(C_d(m_0,m_1), C_d(n_0,n_1)\). Consider the convex combination \(l_{\lambda}(t)=\lambda l_1(t)+(1-\lambda)l_2(t)\) with boundary condition
\begin{equation}
l_{\lambda}(0)=\lambda m_0+(1-\lambda)n_0,l_{\lambda}(1)=\lambda m_1+(1-\lambda)n_1
\end{equation}

By the convexity of \(J[l]\) and the definition of \(C_d\), we have

\begin{equation}
\begin{aligned}
C_d(\lambda m_0+(1-\lambda)n_0,\lambda m_1+(1-\lambda)n_1)
&\le J[l_{\lambda}]\le\lambda J[l_1]+(1-\lambda)J[l_2]\\&=\lambda C_d(m_0,m_1)+(1-\lambda)C_d(n_0,n_1)  
\end{aligned}
\end{equation}

This is the jointly convexity of \(C_d\). Recall that \(C_d\) is also a homogeneous function of degree 1 w.r.t $(m_0, m_1)$. The sublinearity follows natrually.

\begin{equation}
\begin{aligned}
C_d(m_0+n_0,m_1+n_1)\le\frac{1}{2}C_d(2m_0,2m_1)+\frac{1}{2}C_d(2n_0,2n_1)=C_d(m_0,m_1)+C_d(n_0,n_1)  
\end{aligned}
\end{equation}

The proof is completed.\hfill $\square$

\subsection{Proof of Theorem \ref{thm:convexity}}
\label{pf:convexity}
\textbf{Theorem \ref{thm:convexity}.}
\textit{For uniformly convex $\Psi$ with $\Psi''\ge \kappa>0$, and \(\text{diam}(\mathcal{X})\le\pi\sqrt{\frac{\kappa}{2}}\), (\ref{eq:static UDOT}) is a convex optimization w.r.t $(\gamma_0,\gamma_1)$.}

\textit{Proof.} As proved in theorem \ref{thm:uniformly convex}, \(\forall\bm{x},\bm{y}\in\mathcal{X}\), since \(\Vert\bm{x}-\bm{y}\Vert\le\text{diam}(\mathcal{X})\le\pi\sqrt{\frac{\kappa}{2}}\), \(C_{\Vert\bm{x}-\bm{y}\Vert}(m_0,m_1)\) is jointly convex w.r.t \((m_0,m_1)\). Hence,
\begin{equation}
\mathcal{J}(\gamma_0, \gamma_1) = \int_{\mathcal{X}^2} C_{\|\bm{x}-\bm{y}\|}(\gamma_0(\bm{x},\bm{y}), \gamma_1(\bm{x}, \bm{y})) \mathrm{d}\bm{x}\mathrm{d}\bm{y}   
\end{equation}
is a jointly convex functional w.r.t \((\gamma_0,\gamma_1)\). Since the constraint of semi-coupling (\ref{eq:semi-coupling constraint}) is linear and hence convex,  (\ref{eq:static UDOT}) is a convex optimization w.r.t \((\gamma_0,\gamma_1)\). The proof is completed.\hfill $\square$

\section{Implementation details}
\label{appendix:implementation}
\subsection{Computational resources}
The experiments were performed on a shared high-performance computing cluster with NVIDIA A800 GPU and 12 CPU cores. All neural networks were built using PyTorch and trained by Adam optimizer \citep{adam}.

\subsection{Network architectures}
\textbf{Path model.} The path model \((\phi_{\eta},\psi_\eta)\) is realized by a single neural network with input dim 3 \((t,d,r)\), output dim 2 \((\phi,\psi)\), and 5 hidden layers with dim 256. For smoothness, we use SiLU \citep{SiLU} activation. 

\textbf{Cost model.} The cost model \(\mathcal{E}_\xi\) is realized by a neural network with input dim 2 \((d,r)\), output dim 1 \((\mathcal{E})\), and 5 hidden layers with dim 256. We also use SiLU activation. 

\textbf{Flow model.} We parameterized the velocity net \(\bm{u}_{\bm{\theta}}\), the growth net \(g_{\bm{\theta}}\), and if stochastic version is used, the score net \(\bm{s}_{\bm{\theta}}\) by three neural networks. These networks have 5 hidden layers with dim 256, and LeakyReLU activations, which is consistent to the previous simulation-free methods \citep{wfr_fm}.

\subsection{Grid}
In practice, we compute the path loss (\ref{eq:path loss}) and the cost loss (\ref{eq:cost loss}) on a grid \(D\times R\). In all experiments, we set \(R\) as the uniform logarithmic scale grid on \([0.01, 10]\), and set \(D\) as the uniform grid on \([0.1,d_{max}]\), where \(d_{max}\) is the largest distance between the data points in consecutive time points. The grid size of \(D\times R\) is \(64\times64\).

\subsection{Mini-batch OT coupling}
Due to the time and memory constraints of computing large-scale OT, we utilize the mini-batch OT strategy \citep{fatras2021minibatchOT} for large dataset (EB and Mouse), consistent with prior works \citep{cfm_tong,tong2023simulationfree,wfr_fm}. We follow their implementation and replace the Sinkhorn solver by projected gradient descent solver.

\subsection{Metrics}
\textbf{Measure matching.} Given two measures \(p,q\) with total mass \(m_0,m_1\), let \(\tilde{p}=p/m_0\), \(\tilde{q}/m_1\). Their difference can be measured in two aspects: the difference between their shape \(\tilde{p},\tilde{q}\), and the difference between their total mass \(m_0,m_1\).

We use Wasserstein-1 distance \(\mathcal{W}_1\) to measure the difference between the shapes.
\begin{equation}
\label{eq:W1}
\mathcal{W}_1(\tilde{p}, \tilde{q}) = \min_{\pi \in \Pi(\tilde{p}, \tilde{q})} \int \|\bm{x} - \bm{y}\|_2 d\pi(\bm{x}, \bm{y})
\end{equation}
where 
\[\Pi(\tilde{p}, \tilde{q})=\{\pi(\bm{x}, \bm{y})\in\mathcal{M}_+(\mathcal{X}^2)|\int_{\mathcal{X}}\pi(\bm{x}, \bm{y})\mathrm{d}\bm{y}=\tilde{p}(\bm{x}),\int_{\mathcal{X}}\pi(\bm{x}, \bm{y})\mathrm{d}\bm{x}=\tilde{q}(\bm{y})\}\]
In practice, \(\tilde{p},\tilde{q}\) are represented by weighted particles. The \(\mathcal{W}_1\) can be efficiently obtained using POT package \citep{pot}.

We use relative mass error (RME) to measure the difference between the total masses.
\begin{equation}
\label{eq:RME}
\text{RME}(p,q)=\frac{|m_p-m_q|}{m_q}
\end{equation}
In practice, \(p\) is the predicted measure, \(q\) is the true measure.

\subsection{UDOT cost}
Given \(\bm{u}_{\bm{\theta}}\) and \(g_{\bm{\theta}}\), we calculate the induced UDOT cost (\ref{eq:UDOT}) by numerical integral. In detail, we initialize the system by assigning a mass of $1/N$ to each of the $N$ cells at $t=0$, thereby constructing an initial measure with a unit total mass. Starting from these initial states, we generate $N$ trajectories using the forward Euler method with a time step of 0.01. By performing numerical integration of the UDOT cost along these trajectories, we derive an estimate of the overall UDOT cost for the measure path.

\subsection{Static WFR cost}
On a dataset with \(N+1\) time points \(t_0,t_1,\cdots,t_N\), and \(\delta\), we first compute the semi-coupling pairs \(\{(\gamma_0^{(k)},\gamma_1^{(k)})\}_{k=0:N-1}\). \((\gamma_0^{(k)},\gamma_1^{(k)})\) stands for the semi-coupling between the time point \(t_k\) and \(t_{k+1}\). The static WFR cost is calculated using the closed form \citep{chizat2018interpolating}.

\begin{equation}
\label{eq:static WFR cost}
\begin{aligned}
&\text{WFR}^2_{\delta}(\mu_{t_0},\mu_{t_1},\cdots,\mu_{t_N})\\
&=2\delta^2\sum_{k=0}^{N-1}\sum_{\bm{x},\bm{y}}\left(\gamma_0^{(k)}(\bm{x},\bm{y})+\gamma_1^{(k)}(\bm{x},\bm{y})-2\sqrt{\gamma_0^{(k)}(\bm{x},\bm{y})\gamma_1^{(k)}(\bm{x},\bm{y})}\cos(\min\{\frac{\Vert\bm{x}-\bm{y}\Vert}{2\delta},\frac{\pi}{2}\})\right)   
\end{aligned}
\end{equation}

For consistency, we also scale the total mass of all measures to make \(\mu_0\) a unit probability measure.

\subsection{A Stochastic Version for RUOT via Unbalanced Score Matching}
\label{appendix:USM}
Although SUDO can only be applied to UDOT problem (\ref{eq:UDOT}) whose constraint is a continuity equation with source term, one can replace the conditional unbalanced flow matching loss with a conditional unbalanced score matching loss with noise level $\sigma$ \citep{USB}.

\begin{equation}
\begin{aligned}
\label{eq:SUDO CUSM}
\mathcal{L}_{\text{CUSM}}(\bm{\theta})=
&\mathbb{E}_{t \sim \mathcal{U}[0,1], (\bm{x}_0,\bm{x}_1)\sim\gamma_0^\star, }m_{\eta,t}\big(\left\| \bm{\bm{u}_{\theta}}(\bm{x},t) - \big(\frac{1-2t}{t(1-t)}(\bm{x}-\bm{x}_{\eta,t})+\dot{\bm{x}}_{\eta,t}\big) \right\|_2^2\\
&+\left\| g_{\bm{\theta}}(\bm{x},t)- \frac{\dot{m_{\eta,t}}}{m_{\eta,t}} \right\|_2^2+\lambda^2(t)\left\| \bm{s}_{\bm{\theta}}(\bm{x},t)+\frac{\bm{x} - \bm{x}_{\eta,t}}{\sigma^2t(1-t)} \right\|_2^2\big)
\end{aligned}
\end{equation}

where \(\bm{s}_{\bm{\theta}}(\bm{x},t)\) is a neural network approximating the score function \(\bm{s}(\bm{x},t)=\nabla_{\bm{x}}\operatorname{ln}\rho_t(\bm{x})\). \(\lambda(t)=\sigma\sqrt{t(1-t)}\) is a weight scheduler for numerical stability adapted from \citep{tong2023simulationfree}, and \(\bm{x}\sim\mathcal{N}(\bm{x}_{\eta,t},\sigma^2t(1-t)\mathrm{I})\). The triplet \((\bm{u}_{\bm{\theta}},g_{\bm{\theta}},\bm{s}_{\bm{\theta}})\) can be viewd as an approximation to RUOT (\ref{eq:dynamic RUOT}).

\subsection{Inference}
\label{appendix:inference}
The output of SUDO is \((\bm{u}_{\bm{\theta}},g_{\bm{\theta}})\), or \((\bm{u}_{\bm{\theta}},g_{\bm{\theta}},\bm{s}_{\bm{\theta}})\) for stochastic version. To recover the continuous measure flow of UDOT (\ref{eq:UDOT}), an ODE simulator is applied for inference.
\begin{equation}
\label{eq:ODE inference}
\left \{
\begin{aligned}
&\mathrm{d}\bm{x}_t = \bm{u}_{\theta}(\bm{x}_t,t)\mathrm{d}t\\
&\mathrm{d}\operatorname{ln}m_t=g_{\theta}(\bm{x}_t,t)\mathrm{d}t
\end{aligned}
\right.
\end{equation}
For stochastic version, a SDE simulator is applied.
\begin{equation}
\label{eq:SDE inference}
\left \{
\begin{aligned}
&\mathrm{d}\bm{x}_t = (\bm{u}_{\theta}(\bm{x}_t,t)+
\frac{\sigma^2}{2}\bm{s}_{\bm{\theta}}(\bm{x}_t,t))\mathrm{d}t+\sigma\mathrm{d}\bm{W}_t\\
&\mathrm{d}\operatorname{ln}m_t=g_{\theta}(\bm{x}_t,t)\mathrm{d}t
\end{aligned}
\right .
\end{equation}

\subsection{VarRUOT training time}
\label{appendix:varruot training time}
In the publicly available version of VarRUOT \citep{sun2026variational}, the algorithm samples a fixed number of cells (2048) for NeuralSDE training regardless of the dataset size, which explains why its training time does not increase as the dataset grows (Table \ref{tab:scalability}). However, this fixed-sampling strategy leads to progressively worse performance on larger datasets. Conversely, if trained to all cells, VarRUOT will need more time on large data (e.g., training time > 11 hours for 49302 cells).

\newpage
\section{Additional results}
\label{appendix:additional results}
\subsection{The Path Model Approximates the Travelling Dirac}
\label{appendix:path}
To evaluate whether the path model can accurately approximate the true travelling Dirac path (see \ref{appendix:WFR-FM} for detailed closed form), we train a path model \((\phi_{\eta},\psi_{\eta})\) on a WFR problem. We choose \(\delta=4\). The model was trained on the range \(d\in[0.1,12],\ r\in[0.01,10]\). We plot \(l(t)\) in Figure \ref{fig:mass path} for \(d \in \{0.01,0.1,1,2,6,12,15\},\ r\in\{0.001,0.01,0.1,0.5,1,2,5,10,20\}\). The true path is blue, and the learned path is red. Note that \(d=0.01,15\) (The first and last column) and \(r=0.001,20\) (the first and last row) are unseen conditions. In the seen scenarios, the learned paths closely align with the ground-truth paths, demonstrating that our path model has indeed captured the underlying dynamics of mass variation. Although the model doesn't generalize perfectly to larger unseen values of $d$, it exhibits a certain degree of generalization capability with respect to $r$ and small $d$. However, the practical application of our path model does not require it to generalize to larger values of $d$.

\begin{figure}[h!]
  \centering
  \includegraphics[width=\textwidth]{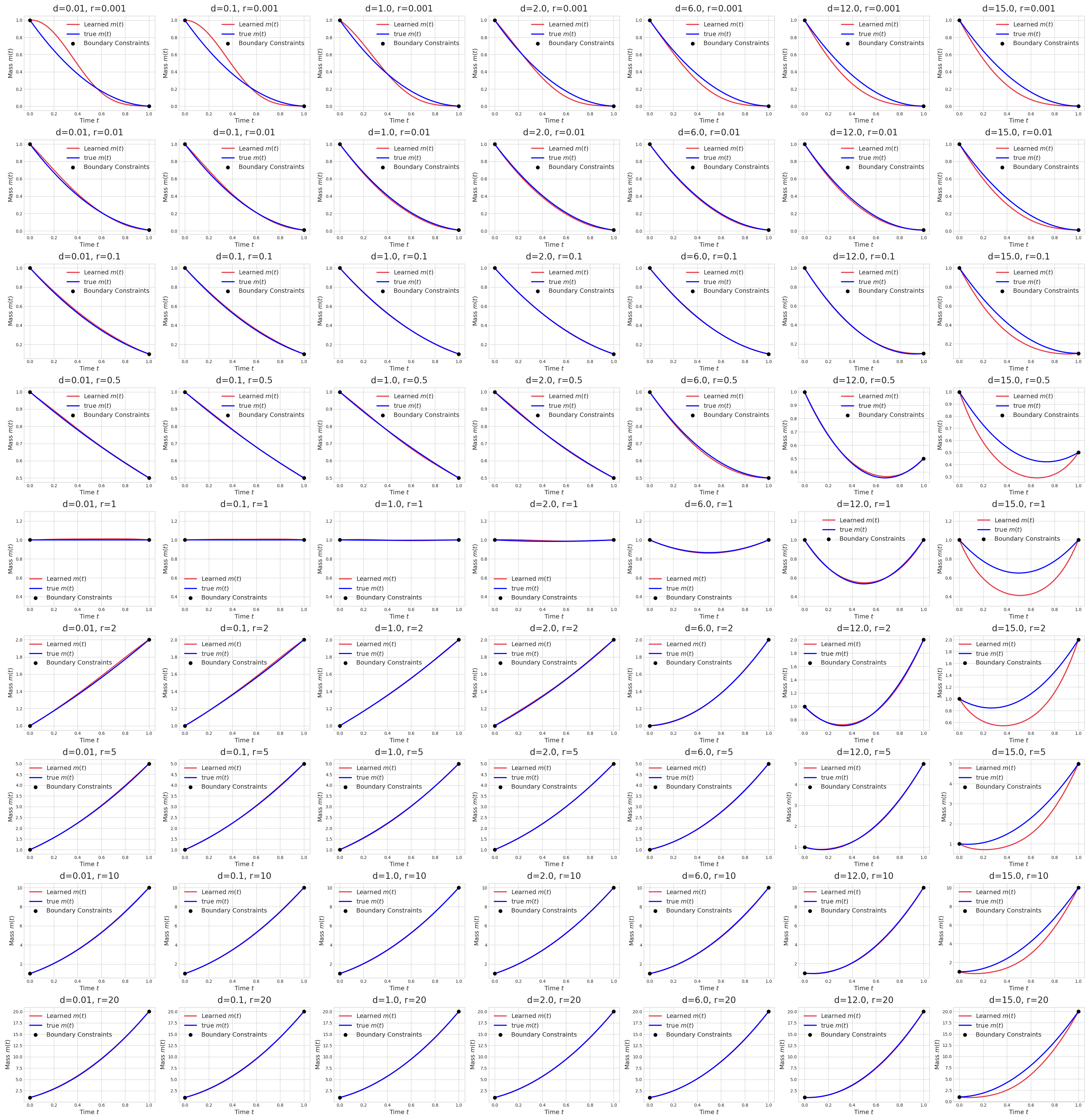}
  
  \caption{Red curve: mass variation learned by path model; Blue curve: True mass variation. Columns: \(d \in \{0.01,0.1,1,2,6,12,15\}\); Rows: \(r\in\{0.001,0.01,0.1,0.5,1,2,5,10,20\}\).}
  \label{fig:mass path}
\end{figure}

We plot \(k(t)\) in Figure \ref{fig:x path} for \(d \in \{0.01,0.1,1,2,6,12\},\ r\in\{0.001,0.01,0.1,0.5,1,2,5,10,20\}\). \(d=15\) is not plotted since \(15>4\pi\), the particles never travel, instead, they undergo pure birth-death dynamics \citep{chizat2018interpolating}. 

\begin{figure}[h!]
  \centering
  \includegraphics[width=\textwidth]{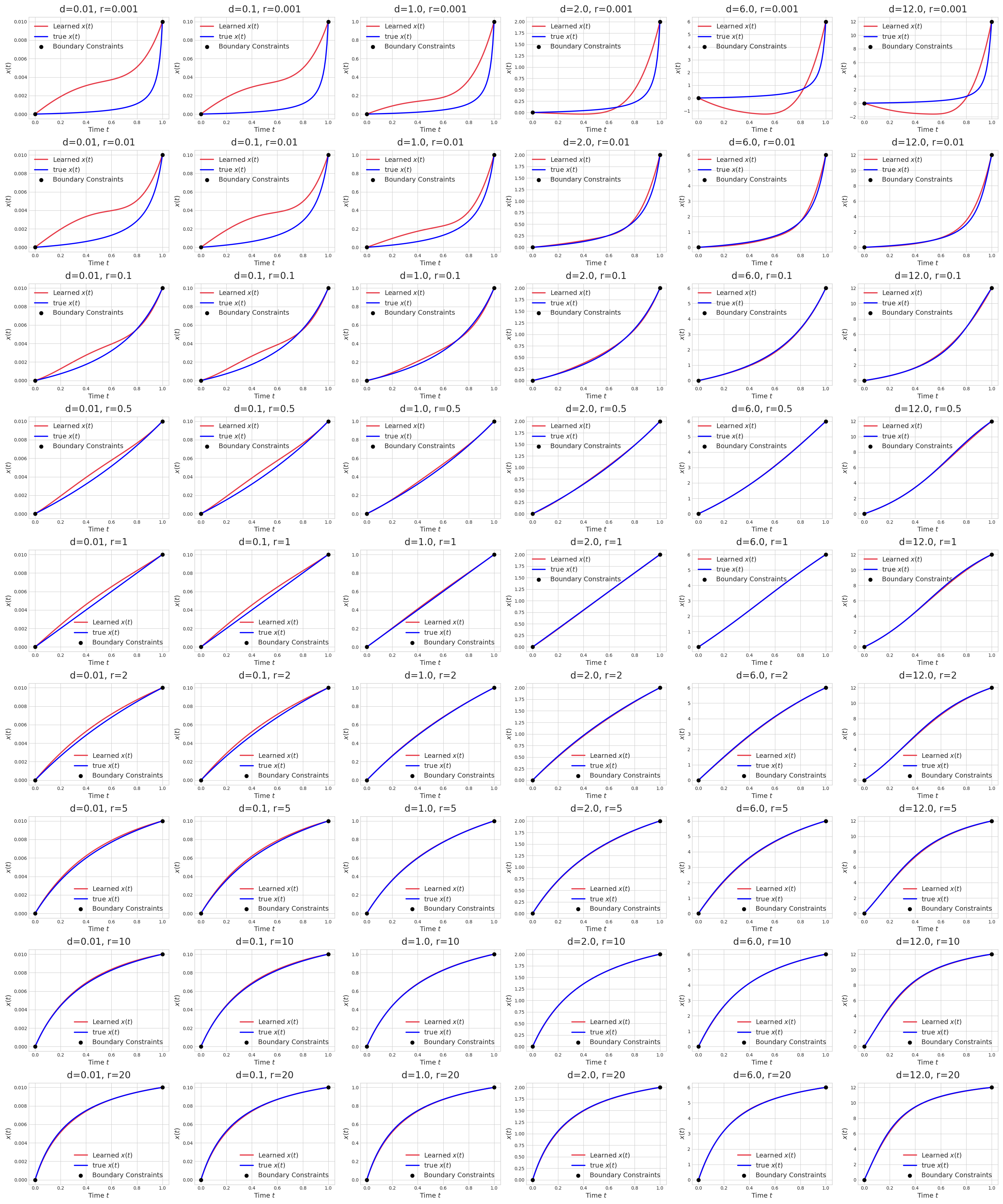}
  
  \caption{Red curve: position variation learned by path model; Blue curve: True mass variation. Columns: \(d \in \{0.01,0.1,1,2,6,12\}\); Rows: \(r\in\{0.001,0.01,0.1,0.5,1,2,5,10,20\}\).}
  \label{fig:x path}
\end{figure}

In most cases, the learned paths also closely align with the ground-truth. The paths that are not learned well are those where $r$ is very small. They correspond to relatively small costs and $m(t)$. Therefore, their weights in the subsequent cost fitting and unbalanced flow matching are small, ensuring that they won't overly influence the subsequent procedures.

We trained the model for 3000 epochs. The training takes about 14 seconds.

\subsection{The Cost Model Approximates the Transport Cost}
\label{appendix:cost}
We trained the path model and the cost model \(\mathcal{E}_\xi\) for different \(\delta\) and grid size, and calculated the relative \(L^2\) norm error (\(E_2\)) between \(\mathcal{E}_\xi(d,r)\) and the true transport cost \(C_d(1,r)\) (The closed form of \(C_d\) is introduced in \ref{appendix:WFR-FM}) on \([0,12]\times[0.01,10]\), a sufficiently large domain (Table \ref{tab:cost}). Saying the grid size is \(N\) means that we discretize the interval $[0, 12]$ into $N$ equidistant points to form $D$, and the interval $[0.01, 10]$ into $N$ logarithmically equidistant points to form $R$, and trained the cost model on grid \(D\times R\).

\begin{table}[ht]
\centering
\caption{Relative \(L^2\) norm error (\(E_2\)) between the \(C_d(1,r)\) and \(\mathcal{E}_\xi(d,r)\) under different training settings.}
\label{tab:cost}
\begin{tabular}{lccc}
\toprule
\multirow{2}{*}{\textbf{$\delta$}} & \multicolumn{3}{c}{\textbf{$E_2$ (\%) $\downarrow$}} \\
\cmidrule(lr){2-4}
 & $N=64$ & $N=32$ & $N=16$ \\
\midrule
0.5 & 8.36 & 8.29 & 8.64 \\
1   & 7.18 & 7.16 & 7.25 \\
2   & 7.63 & 7.57 & 7.62 \\
3   & 6.71 & 6.65 & 6.69 \\
5   & 4.20 & 3.91 & 4.01 \\
10  & 3.05 & 3.26 & 3.12 \\
15  & 2.94 & 2.61 & 2.69 \\
30  & 2.95 & 2.93 & 2.95 \\
50  & 3.38 & 3.45 & 3.93 \\
\bottomrule
\end{tabular}
\end{table}

We trained the cost model for 3000 epochs. The average training time for each run is about 20 seconds (Without training the path model). For experiments in the main text, we use the grid size \(N=64\).

\subsection{Simulation Gene Data}
\label{appendix:simulation gene data}
We adopted the Simulation Gene data from \citep{DeepRUOT}. It simulates a gene regulatory network consists of 3 genes (\(X_1,X_2,X_3\)).

$$\frac{dX_1}{dt} = \frac{\alpha_1 X_1^2 + \beta}{1 + \alpha_1 X_1^2 + \gamma_2 X_2^2 + \gamma_3 X_3^2 + \beta} - \delta_1 X_1 + \eta_1 \xi_t$$$$\frac{dX_2}{dt} = \frac{\alpha_2 X_2^2 + \beta}{1 + \gamma_1 X_1^2 + \alpha_2 X_2^2 + \gamma_3 X_3^2 + \beta} - \delta_2 X_2 + \eta_2 \xi_t$$$$\frac{dX_3}{dt} = \frac{\alpha_3 X_3^2}{1 + \alpha_3 X_3^2} - \delta_3 X_3 + \eta_3 \xi_t$$

$X_1$ and $X_2$ undergo mutual inhibition and auto-activation, with both being repressed by $X_3$ and stimulated by an external signal $\beta$. Here, $\xi_i$ represents the stochastic noise within the system. The growth is introduced by probabilistic cell division with probability $g=\frac{X^2_2}{2(1+X^2_2)}\%$. Simulations were initialized at two different locations. The bottom-left represents a stable attractor, whereas cells in the bottom-right migrate toward the top-left while proliferating. The resulting dataset contains five temporal snapshots of the system, specifically at $t \in \{0, 8, 16, 24, 32\}$.

\textbf{Ablation on \(\delta\).} When solving the WFR problem, we conducted an ablation study on the penalty parameter $\delta$. As shown in Table \ref{tab:delta}, within a reasonable range of $\delta$, SUDO can robustly match the measures across different time points. We use \(\delta=1.2\) in the main text.

\begin{table}[h!]
\centering
\caption{Sensitivity analysis for parameter $\delta$ on simulation gene dataset.}
\label{tab:delta}
\resizebox{\textwidth}{!}{
\begin{tabular}{lcccccccc}
\toprule
\textbf{Parameter} & \multicolumn{2}{c}{\textbf{t=1}} & \multicolumn{2}{c}{\textbf{t=2}} & \multicolumn{2}{c}{\textbf{t=3}} & \multicolumn{2}{c}{\textbf{t=4}} \\
\cmidrule(lr){2-3} \cmidrule(lr){4-5} \cmidrule(lr){6-7} \cmidrule(lr){8-9}
& $\mathcal{W}_1$ & RME & $\mathcal{W}_1$ & RME & $\mathcal{W}_1$ & RME & $\mathcal{W}_1$ & RME \\
\midrule
$\delta=0.5$ & 0.036 & 0.001 & 0.037 & 0.010 & 0.044 & 0.005 & 0.045 & 0.015 \\
$\delta=1.0$ & 0.023 & 0.002 & 0.027 & 0.006 & 0.022 & 0.007 & 0.022 & 0.011 \\
$\delta=1.2$ & 0.024 & 0.008 & 0.025 & 0.001 & 0.020 & 0.002 & 0.019 & 0.005 \\
$\delta=1.5$ & 0.024 & 0.002 & 0.031 & 0.009 & 0.031 & 0.011 & 0.029 & 0.002 \\
$\delta=2.0$ & 0.032 & 0.004 & 0.040 & 0.008 & 0.031 & 0.014 & 0.044 & 0.024 \\
$\delta=2.5$ & 0.030 & 0.005 & 0.064 & 0.004 & 0.059 & 0.004 & 0.079 & 0.021 \\
\bottomrule
\end{tabular}
}
\end{table}

\textbf{Ablation on \(\Psi\).} We also tried different growth penalties \(\Psi\) on simulation gene data. We trained SUDO to solve to UDOT problem with 4 different \(\Psi\). 

\begin{equation}
\Psi(g) = \left \{
\begin{aligned}
&\delta^2g^2,\ &\text{WFR}\\
&1-g+g\operatorname{log}g\ (g>0),\ &\text{Only-growth}\\
&1+g-g\operatorname{log}(-g)\ (g<0),\ &\text{Only-death}\\
&1-\sqrt{1+g^2}+g\operatorname{log}(g+\sqrt{1+g^2}),\ &\text{No preference}\\
\end{aligned}
\right .
\end{equation}

The \(\mathcal{W}_1\) and RME results are shown in Table \ref{tab:Psi}. SUDO matches the marginal measures under all \(\Psi\) settings. This suggests that the choice of $\Psi$ does not affect SUDO's ability to learn a measure path connecting the time points; rather, it merely influences how SUDO connects them.

\begin{table}[h!]
\centering
\caption{Sensitivity analysis for penalty $\Psi$ on simulation gene dataset.}
\label{tab:Psi}
\resizebox{\textwidth}{!}{
\begin{tabular}{lcccccccc}
\toprule
\textbf{Parameter} & \multicolumn{2}{c}{\textbf{t=1}} & \multicolumn{2}{c}{\textbf{t=2}} & \multicolumn{2}{c}{\textbf{t=3}} & \multicolumn{2}{c}{\textbf{t=4}} \\
\cmidrule(lr){2-3} \cmidrule(lr){4-5} \cmidrule(lr){6-7} \cmidrule(lr){8-9}
& $\mathcal{W}_1$ & RME & $\mathcal{W}_1$ & RME & $\mathcal{W}_1$ & RME & $\mathcal{W}_1$ & RME \\
\midrule
WFR & 0.024 & 0.008 & 0.025 & 0.001 & 0.020 & 0.002 & 0.019 & 0.005 \\
Only-growth & 0.027 & 0.004 & 0.031 & 0.009 & 0.022 & 0.011 & 0.019 & 0.014 \\
Only-death & 0.046 & 0.001 & 0.057 & 0.001 & 0.052 & 0.006 & 0.029 & 0.009 \\
No preference & 0.028 & 0.009 & 0.032 & 0.020 & 0.027 & 0.021 & 0.027 & 0.029 \\
\bottomrule
\end{tabular}
}
\end{table}

\subsection{Dyngen Data}
\label{appendix:dyngen data}
Following the experimental setup in \citep{mioflow,wang2025joint}, we utilize a synthetic dataset generated by the Dyngen framework \citep{cannoodt2021spearheading}. The dataset consists of 728 cells, with their dimensionality reduced to five via PHATE \citep{moon2019visualizing}. This data captures intricate biological dynamics, characterized by temporal mass fluctuations and a distinct bifurcation. The bifurcation is inherently unbalanced, as the cell density in the lower branch significantly exceeds that of the upper branch. In the main text, we use \(\delta=1.7\) for WFR problem. The detailed \(\mathcal{W}_1\) and RME results are shown in Table \ref{tab:Dyngen}. SUDO ranks second only to WFR-FM, which utilizes the closed-form traveling Dirac solution. It outperforms existing simulation-based algorithms in terms of both measure matching accuracy and computational efficiency.

\begin{table}[h!]
\centering
\caption{Comparison of method performance over time on the Dyngen dataset. Best results are in bold, and the second best are underlined.}
\label{tab:Dyngen}
\resizebox{\textwidth}{!}{
\begin{tabular}{lcccccccc}
\toprule
\textbf{Method} & \multicolumn{2}{c}{\textbf{t=1}} & \multicolumn{2}{c}{\textbf{t=2}} & \multicolumn{2}{c}{\textbf{t=3}} & \multicolumn{2}{c}{\textbf{t=4}} \\
\cmidrule(lr){2-3} \cmidrule(lr){4-5} \cmidrule(lr){6-7} \cmidrule(lr){8-9}
& $\mathcal{W}_1$ & RME & $\mathcal{W}_1$ & RME & $\mathcal{W}_1$ & RME & $\mathcal{W}_1$ & RME \\
\midrule
TIGON & 0.446 & 0.033 & 0.584 & \underline{0.060} & 0.415 & 0.023 & 0.603 & 0.071 \\
DeepRUOT & 0.454 & \underline{0.011} & 0.481 & 0.070 & 0.870 & 0.104 & 0.688 & 0.074 \\
Var-RUOT & 0.315 & 0.128 & 0.548 & 0.336 & 0.630 & 0.222 & 0.593 & 0.023 \\
WFR-FM & \textbf{0.110} & \textbf{0.003} & \textbf{0.098} & \textbf{0.007} & \textbf{0.211} & \underline{0.008} & \textbf{0.121} & \textbf{0.002} \\
\textbf{SUDO} & \underline{0.140} & \textbf{0.003} & \underline{0.168} & \textbf{0.007} & \underline{0.280} & \textbf{0.005} & \underline{0.170} & \underline{0.006} \\
\bottomrule
\end{tabular}
}
\end{table}

\subsection{Gaussian Data}
\label{appendix:gaussian data}
We utilize the 1000D Gaussian Mixture Model (GMM) dataset as described in \citep{wfr_fm}. In accordance with their experimental setup, the initial population consists of 500 cells distributed across two Gaussian components: 100 in the upper component and 400 in the lower one. The upper component remains stationary while undergoing significant proliferation, whereas the lower component undergoes no growth, but symmetrically bifurcates into two clusters. By the terminal time point, the population reaches 1,400 cells, with the upper component expanding to 1,000 cells and the lower component splitting into two 200-cell Gaussians. The high dimensionality of this dataset provides a benchmark for evaluating SUDO's performance in high dimensional space. In the main text, we use $\delta=1.4$.

\textbf{SUDO learns a more plausible growth rate.} A main limitation of WFR-FM \citep{wfr_fm} is that its growth rate must follow the WFR geometry, where mass varies according to a quadratic function of time. When connecting two measures of equal total mass using a WFR flow, the resulting path tends to decrease and then increase the mass to minimize kinetic energy consumption, thereby achieving a lower total cost. However, in our Gaussian dataset construction, the lower cluster undergoes pure transport without any growth or death. To prevent the model from yielding a mass-reducing solution typical of WFR and to ensure the upper component exhibits pure growth while the lower one remains pure transport, we employ an only-growth $\Psi$ (\ref{eq:only growth Psi}). By assigning a infinite cost to apoptosis, we explicitly penalize and thus avoid any mass decreasing. In practice, we relax the infinite cost on \((-\infty,0]\) to a quadratic function \(10000g^2+1\).

Table \ref{tab:gaussian Psi} demonstrates that SUDO matches the target measure under both growth penalty setting.

\begin{table}[h!]
\centering
\caption{The measure matching results on gaussian 1000d data with different growth penalty}
\label{tab:gaussian Psi}

\begin{tabular}{cccc}
\toprule 
\multicolumn{2}{c}{WFR} & \multicolumn{2}{c}{Only-growth} \\
\cmidrule(lr){1-2} \cmidrule(lr){3-4}
$\mathcal{W}_1$ & RME & $\mathcal{W}_1$ & RME\\
\midrule
2.315 & 0.061 & 2.333 & 0.016\\
\bottomrule
\end{tabular}
\end{table}

As shown in Figure \ref{fig:gaussian Psi}, SUDO learns a more plausible growth rate than WFR-FM where the lower cluster undergoes pure transport with zero growth rate.

\begin{figure}[h!]
  \centering
  \includegraphics[width=0.45\textwidth]{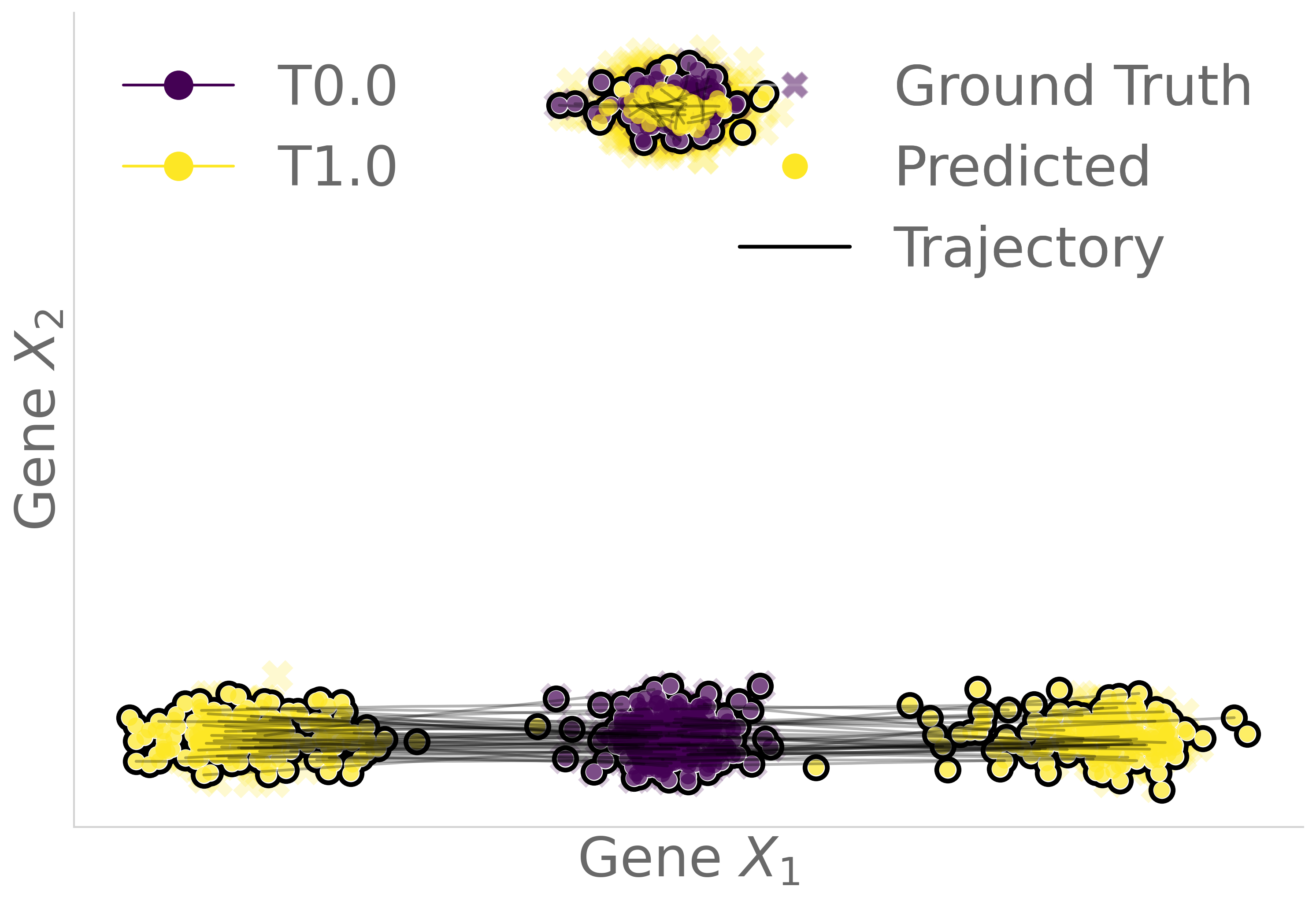} 
  \includegraphics[width=0.45\textwidth]{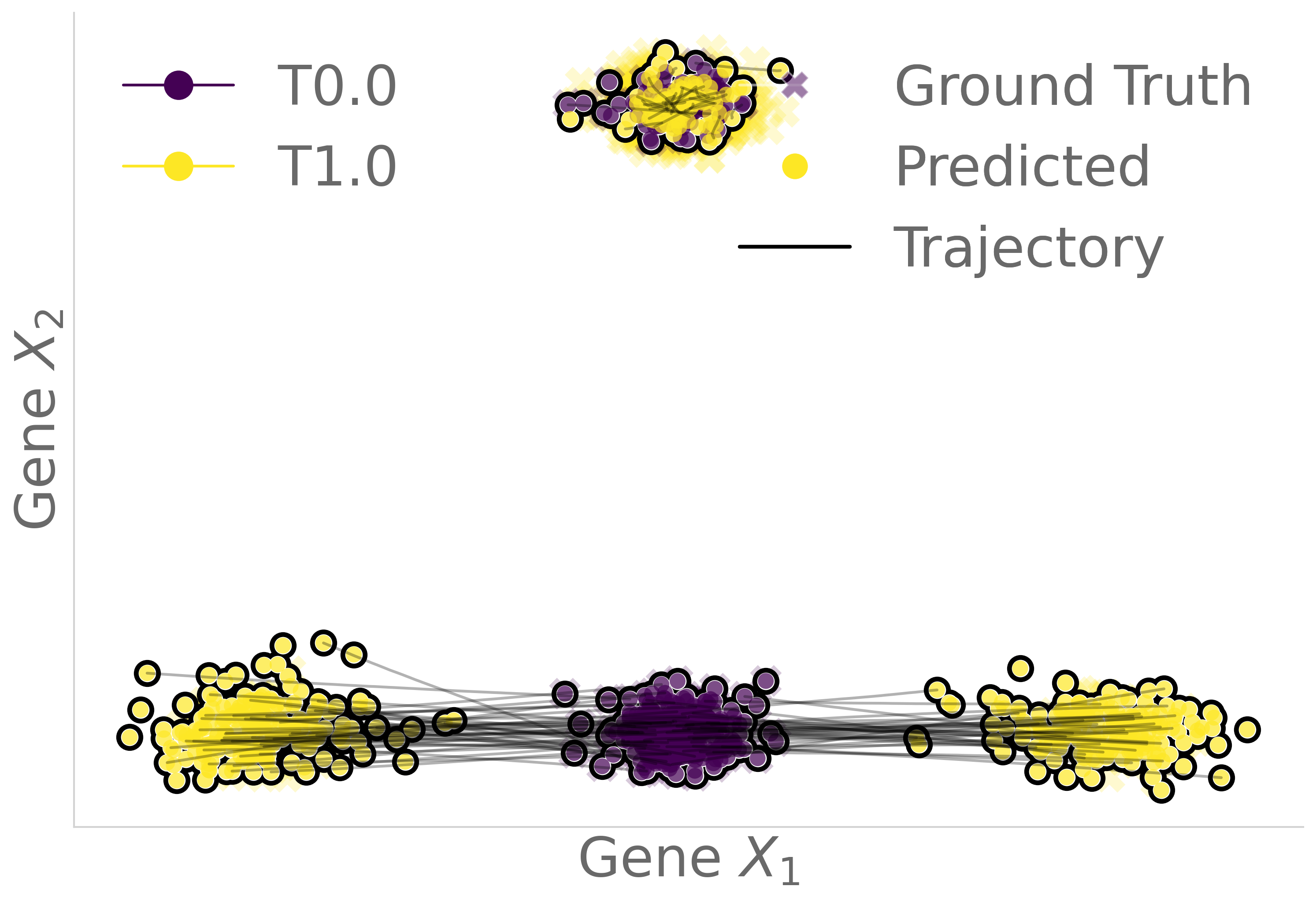} 
  \includegraphics[width=0.45\textwidth]{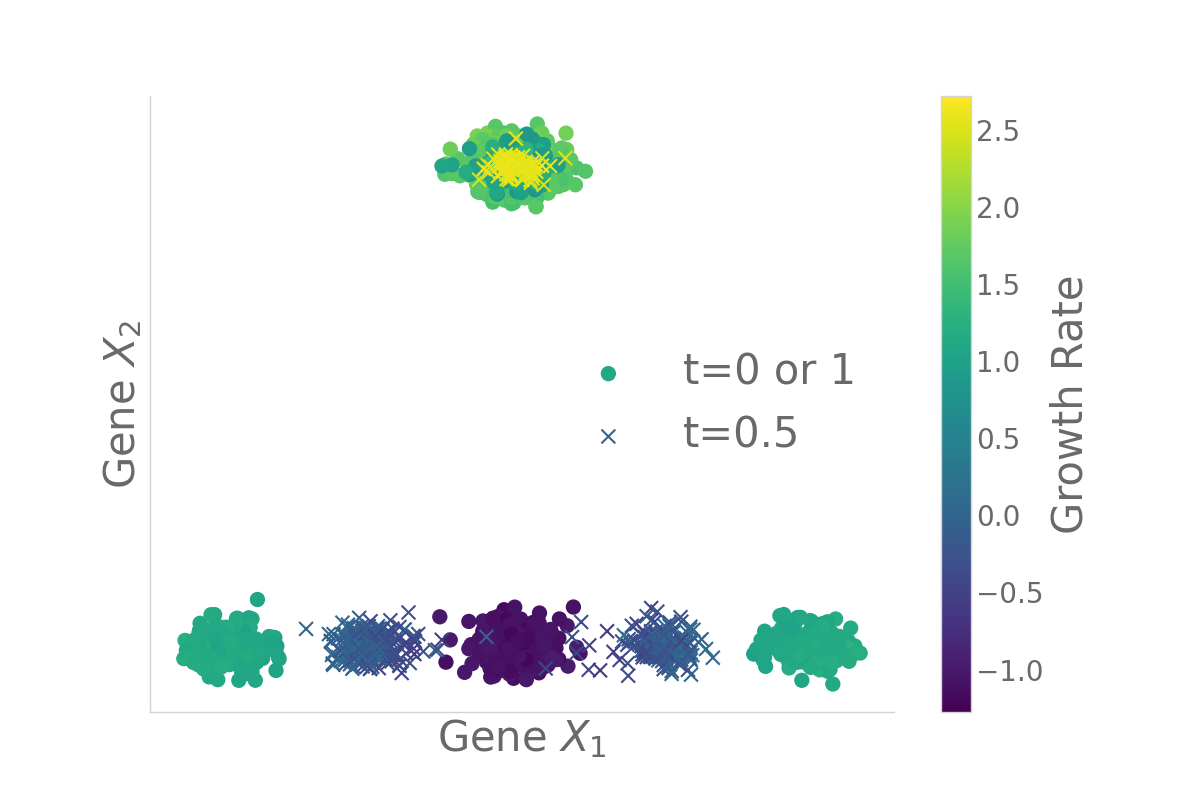} 
  \includegraphics[width=0.45\textwidth]{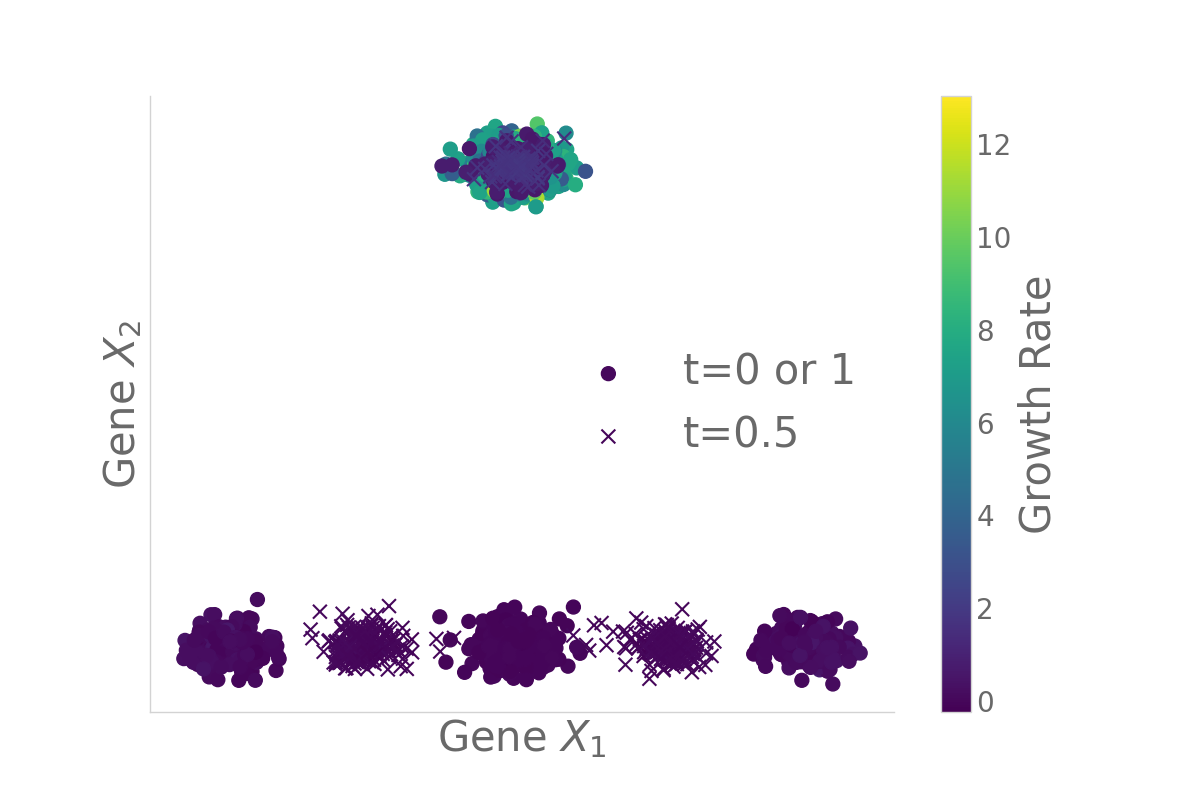} 
  
  \caption{Left: predicted trajectories and growth rate by WFR-FM; Right: predicted trajectories and growth rate by SUDO.}
  \label{fig:gaussian Psi}
\end{figure}

\subsection{Epithelial Mesenchymal Transition Data}
\label{appendix:emt data}
We utilize the single-cell dataset from \citep{cook2020context}, which characterizes the epithelial-mesenchymal transition (EMT) in A549 lung cancer cells across four time points. Following the preprocessing pipeline established by \citep{TIGON}, the high-dimensional gene expression profiles were projected into a 10-dimensional latent space using an autoencoder. Setting \(\delta=2\), the performance of SUDO is shown in Table \ref{tab:emt}.

\begin{table}[h!]
\centering
\caption{Comparison of method performance over time on the 10D EMT dataset.}
\label{tab:emt}
\begin{tabular}{lcccccc}
\toprule
\textbf{Method} & \multicolumn{2}{c}{\textbf{t=1}} & \multicolumn{2}{c}{\textbf{t=2}} & \multicolumn{2}{c}{\textbf{t=3}} \\
\cmidrule(lr){2-3} \cmidrule(lr){4-5} \cmidrule(lr){6-7}
& $\mathcal{W}_1$ & RME & $\mathcal{W}_1$ & RME & $\mathcal{W}_1$ & RME \\
\midrule
TIGON & 0.2433 & \underline{0.002} & 0.2661 & \underline{0.003} & 0.2847 & \textbf{0.001} \\
DeepRUOT & 0.2902 & \textbf{0.001} & 0.3193 & 0.011 & 0.3291 & \underline{0.002} \\
Var-RUOT & 0.2540 & 0.075 & 0.2670 & 0.014 & 0.2683 & 0.041 \\
WFR-FM & \textbf{0.2099} & \textbf{0.001} & \textbf{0.2272} & \textbf{0.002} & \textbf{0.2346} & \textbf{0.001} \\
\textbf{SUDO} & \underline{0.2305} & \underline{0.002} & \underline{0.2485} & \underline{0.003} & \underline{0.2556} & \underline{0.002} \\

\bottomrule
\end{tabular}
\end{table}

SUDO is comparable to simulation-free WFR-FM, and is consistently better than simulation-based baselines. 

\subsection{Embryoid Bodies Data}
\label{appendix:eb data}
We utilize the human embryoid bodies (EB) dataset presented by \citep{moon2019visualizing}. This dataset captures the 27-day differentiation process of human embryoid bodies and consists of 16,819 single-cell profiles across five time points. Following the dimensionality reduction protocol of \citep{wang2025joint}, the raw data was preprocessed using Principal Component Analysis (PCA). We used the top 100 PCs for all subsequent analyses. We set \(\delta=30\), and the results are shown in Table \ref{tab:eb_100d}. We use chunk size 2000 for mini batch OT.

\begin{table}[h!]
\centering
\caption{Comparison of method performance over time on the 100D EB dataset. Best results are in bold, and the second best are underlined.}
\label{tab:eb_100d}
\resizebox{\textwidth}{!}{
\begin{tabular}{lcccccccc}
\toprule
\textbf{Method} & \multicolumn{2}{c}{\textbf{t=1}} & \multicolumn{2}{c}{\textbf{t=2}} & \multicolumn{2}{c}{\textbf{t=3}} & \multicolumn{2}{c}{\textbf{t=4}} \\
\cmidrule(lr){2-3} \cmidrule(lr){4-5} \cmidrule(lr){6-7} \cmidrule(lr){8-9}
& $\mathcal{W}_1$ & RME & $\mathcal{W}_1$ & RME & $\mathcal{W}_1$ & RME & $\mathcal{W}_1$ & RME \\
\midrule
TIGON & 10.547 & 0.014 & 12.926 & 0.052 & 13.897 & 0.107 & 14.945 & 0.096 \\
DeepRUOT & 10.256 & \textbf{0.002} & \underline{11.103} & 0.074 & \underline{11.529} & 0.136 & \textbf{12.406} & 0.047 \\
Var-RUOT & 11.746 & 0.091 & 12.237 & 0.024 & 12.957 & 0.150 & 13.335 & 0.074 \\
WFR-FM & \textbf{9.941} & \underline{0.009} & \textbf{11.040} & \textbf{0.006} & \textbf{11.516} & \underline{0.008} & \underline{12.664} & \underline{0.005} \\
\textbf{SUDO} & \underline{10.231} & 0.015 & 11.389 & \underline{0.009} & 11.914 & \textbf{0.006} & 13.118 & \textbf{0.002} \\
\bottomrule
\end{tabular}
}
\end{table}

\textbf{Ablation on mini batch OT.} We also conducted an ablation study on the mini batch OT strategy \citep{fatras2021minibatchOT}. On 100D EB dataset, we trained SUDO using different mini batch OT chunk size, as well as full batch OT. As shown in Table \ref{tab:batch_size}, the performance of SUDO is not sensitive to the mini batch OT strategy. 

\begin{table}[h!]
\centering
\caption{Sensitivity analysis for batch size of mini-batch OT chunk size on the 100D EB dataset.}
\label{tab:batch_size}

\begin{tabular}{lcccccccc}
\toprule
\textbf{Batch Size} & \multicolumn{2}{c}{\textbf{t=1}} & \multicolumn{2}{c}{\textbf{t=2}} & \multicolumn{2}{c}{\textbf{t=3}} & \multicolumn{2}{c}{\textbf{t=4}}\\
\cmidrule(lr){2-3} \cmidrule(lr){4-5} \cmidrule(lr){6-7} \cmidrule(lr){8-9}
& $\mathcal{W}_1$ & RME & $\mathcal{W}_1$ & RME & $\mathcal{W}_1$ & RME & $\mathcal{W}_1$ & RME \\
\midrule
500  & 10.258 & 0.010 & 11.408 & 0.016 & 11.993 & 0.002 & 13.360 & 0.008 \\
1000 & 10.250 & 0.013 & 11.364 & 0.014 & 11.902 & 0.005 & 13.151 & 0.014 \\
2000 & 10.231 & 0.015 & 11.389 & 0.009 & 11.914 & 0.006 & 13.118 & 0.002 \\
3000 & 10.213 & 0.017 & 11.436 & 0.014 & 12.010 & 0.004 & 13.348 & 0.010 \\
4000 & 10.174 & 0.015 & 11.379 & 0.010 & 11.898 & 0.001 & 13.177 & 0.004 \\
\midrule 
full-batch & 10.195 & 0.015 & 11.391 & 0.011 & 11.911 & 0.003 & 13.143 & 0.001 \\
\bottomrule
\end{tabular}

\end{table}

\subsection{Mouse Hematopoiesis Data}
\label{appendix:mouse data}
We utilize the raw mouse hematopoietic dataset from \cite{weinreb2020lineage}. It consists of 49,302 cells from three time points. The dataset is preprocessed via PCA and the first 50 PCs were kept. Given its significant population expansion and substantial sample size, this dataset serves as an ideal benchmark for evaluating scalability of SUDO. We have done this in the main text (\ref{scalability}). We set \(\delta=15\), and mini batch chunk size 2000. As shown in Table \ref{tab:mouse}, SUDO can be applied to large dataset.

\begin{table}[h!]
\centering
\caption{Comparison of method performance over time on the 50D Mouse dataset. Best results are in bold, and the second best are underlined.}
\label{tab:mouse}
\begin{tabular}{lcccc}
\toprule
\textbf{Method} & \multicolumn{2}{c}{\textbf{t=1}} & \multicolumn{2}{c}{\textbf{t=2}} \\
\cmidrule(lr){2-3} \cmidrule(lr){4-5}
& $\mathcal{W}_1$ & RME & $\mathcal{W}_1$ & RME \\
\midrule
TIGON & 6.140 & 0.382 & 6.973 & 0.326 \\
DeepRUOT & 6.052 & 0.062 & \underline{6.757} & 0.041 \\
Var-RUOT & 7.951 & 0.131 & 10.862 & 0.154 \\
WFR-FM & \textbf{5.486} & \textbf{0.012} & \textbf{6.211} & \textbf{0.011} \\
\textbf{SUDO} & \underline{5.894} & \underline{0.027} & 6.824 & 
\underline{0.019} \\
\bottomrule
\end{tabular}
\end{table}

In \ref{general Psi}, we follow the original preprocessing procedure of \citep{weinreb2020lineage,TIGON,DeepRUOT} to obtain a 2D mouse hematopoiesis data which consists of 10998 cells. These cells are in clones committing to neutrophils and monocytes fates at day 2, 4, 6. The data was projected to the reduced two force-directed layouts (SPRING plots) \citep{spring} after batch correction.

\subsection{Numerical Travelling Diracs for only-growth and only-death}
\begin{figure}[h!]
  \centering
  \includegraphics[width=0.9\textwidth]{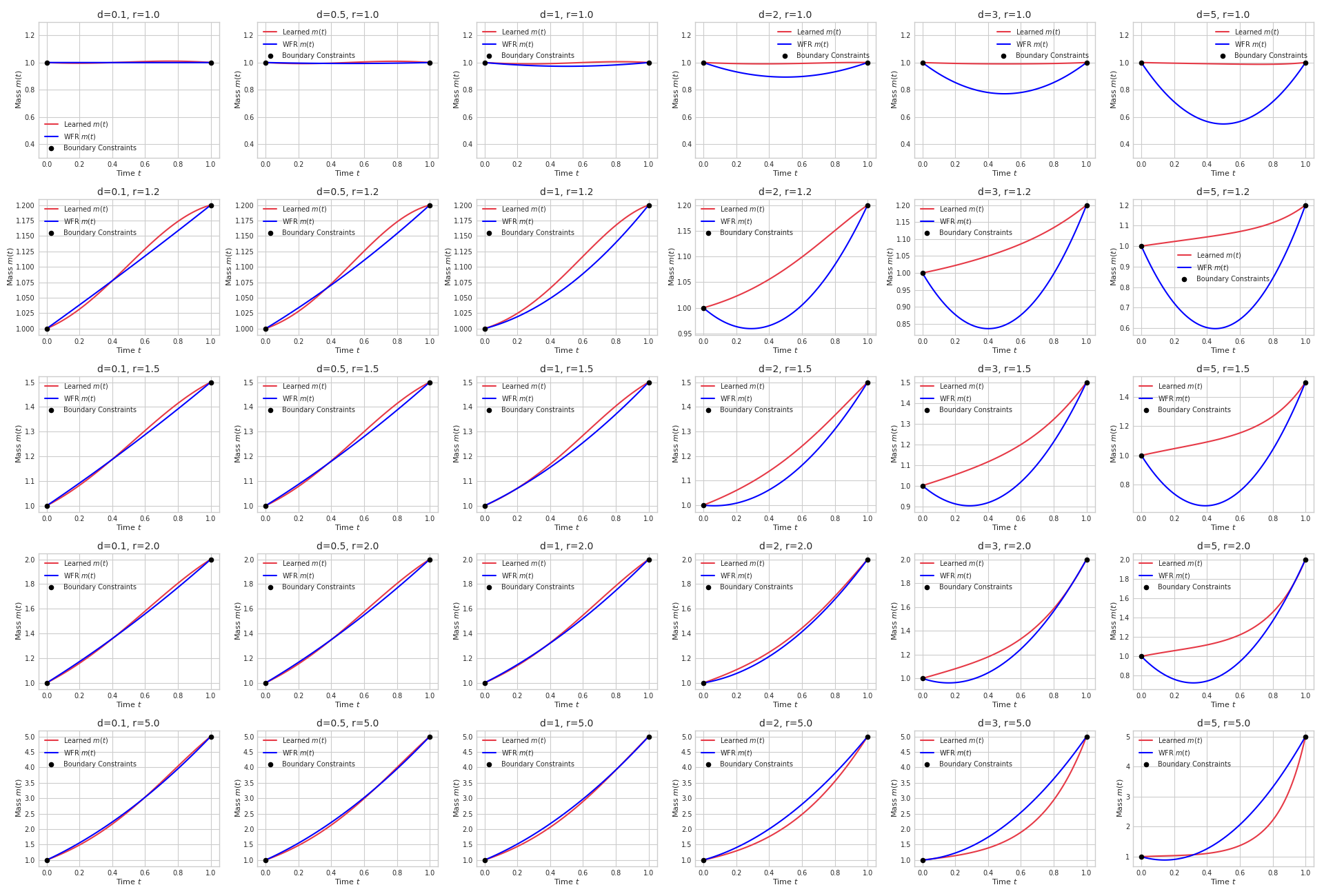}
  
  \caption{Red curve: mass variation learned by only-growth penalty; Blue curve: mass variation induced by WFR penalty. Columns: \(d \in \{0.1,0.5,1,2,3,5\}\); Rows: \(r\in\{1,1.2,1.5,2,5\}\).}
  \label{fig:only growth mass path}
\end{figure}

\begin{figure}[h!]
  \centering
  \includegraphics[width=0.9\textwidth]{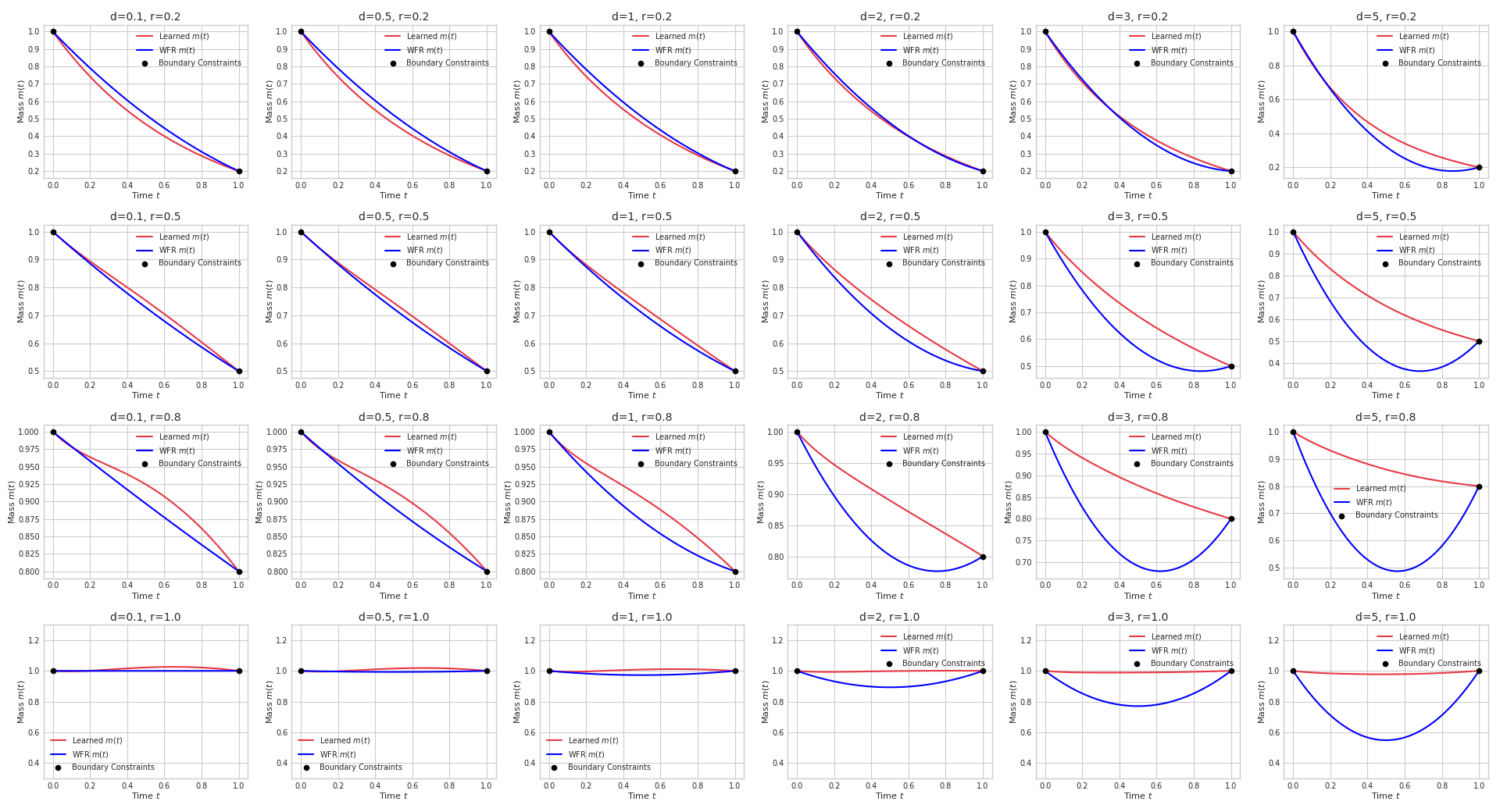}
  
  \caption{Red curve: mass variation learned by only-death penalty; Blue curve: mass variation induced by WFR penalty. Columns: \(d \in \{0.1,0.5,1,2,3,5\}\); Rows: \(r\in\{0.2,0.5,0.8,1\}\).}
  \label{fig:only death mass path}
\end{figure}

We plot the mass variation of travelling Diracs induced by the only-growth and only-death penalty (\ref{eq:BSB Psi}). These travelling Diracs differ from the WFR travelling Diracs. For only-growth, mass never decrease, while for only-death, mass never increase.

\section{Relations to other neural UOT solvers}
\subsection{UOT-FM}
UOT-FM \citep{UOT} was a straightforward extension of OT under unbalanced settings. They added a KL penalty to deal with the unbalanced marginals.
\begin{equation}
\label{UOT}
\begin{aligned}
\text{UOT}(\mu_0,\mu_1) &:= \inf_{\gamma\in\mathcal{M}_+(\mathcal{X}^2)} \{\int_{\mathcal{X}^2}  \, c(\bm{x},\bm{y})\gamma(\bm{x},\bm{y})\mathrm{d} \bm{x}\mathrm{d} \bm{y}\\&+\lambda_1\operatorname{KL}(\int_\mathcal{X}\gamma(\bm{x},\bm{y})\mathrm{d} \bm{y}\Vert \mu_0(\bm{x}))+\lambda_2\operatorname{KL}(\int_\mathcal{X}\gamma(\bm{x},\bm{y})\mathrm{d} \bm{x}\Vert \mu_1(\bm{y}))\}
\end{aligned}
\end{equation}
Having the corresponding unbalanced OT coupling \(\gamma\), they applied the idea of OT-CFM \citep{cfm_tong} to learn a flow. This is a natural way to introduce the unbalanced effect and generalize OT-CFM to unbalanced measures. 

This formulation primarily accounts for mass variation through the static transport plan. As a result, the learned dynamics are guided by an unbalanced endpoint coupling, while the continuous-time evolution of mass is not explicitly parameterized along the path. In particular, the flow-matching step focuses on learning the velocity field, so the growth or decay component is not directly modeled as part of the dynamical system.

SUDO follows a different direction by building on the unbalanced flow matching framework, where both the velocity field and the growth rate are learned simultaneously. Moreover, by solving the UDOT problem, SUDO provides a dynamic formulation in which transport and mass variation are coupled along the continuous-time trajectory. 

\subsection{WFR-FM}
\label{appendix:WFR-FM}
When choosing \(\Psi(g)=\delta^2g^2\) in UDOT formulation (\ref{eq:UDOT}), it is called the WFR problem \citep{chizat2018interpolating}.
\begin{equation}
\label{Dynamical WFR}
\begin{aligned}
&\text{WFR}_{\delta}^2(\mu_0,\mu_1) = \inf_{\rho,g,\bm{u}} \int_0^1\int_{\mathcal{X}}  \, \frac{1}{2}(\Vert \bm{u}(\bm{x},t)\Vert_2^2+\delta^2  g(\bm{x},t)^2)\rho_t(\bm{x})\mathrm{d} \bm{x}\mathrm{d}t \\
&\text{s.t.} \ \ \ \ \ \ \ \ \ \ \ \ \ \ \ \ \ \ \ \ \partial_t\rho+\nabla_{\bm{x}}\cdot(\rho \bm{u})=\rho g,\ \rho_0=\mu_0,\ \rho_1=\mu_1,
\end{aligned}
\end{equation}
The square is introduced to make \(\text{WFR}_{\delta}\) itself a metric on \(\mathcal{M}_+(\mathcal{X})\). When \(\Vert\bm{x}_1-\bm{x}_0\Vert\le\pi\delta\), the travelling Dirac has a closed form.
\begin{equation}
\label{WFR travelling Dirac}
\left \{
\begin{aligned}
&m(t)=At^2-2Bt+m_0\\
&\bm{x}(t)=\bm{x}_0+\frac{\bm{\omega}_0}{\sqrt{m_0 A - B^2}} ( \arctan ( \frac{At - B}{\sqrt{m_0 A - B^2}} ) - \arctan ( \frac{-B}{\sqrt{m_0 A - B^2}}) )
\end{aligned}
\right .
\end{equation}
where \(A,B,\bm{\omega}_0\) can be determined by \(m_0,m_1,\bm{x}_0,\bm{x}_1,\delta\) in closed form.
\begin{equation}
\label{ABw}
\left\{
\begin{aligned}
&A=m_0+m_1-2\sqrt{\frac{m_0m_1}{1+\tau^2}}\ \ , \ \ B=m_0-\sqrt{\frac{m_0m_1}{1+\tau^2}}\\
&\bm{\omega}_0=2\delta\tau \sqrt{\frac{m_0m_1}{1+\tau^2}}\bm{l}\ \ ,  \ \tau=\operatorname{tan}(\frac{ \Vert\bm{x}_1-\bm{x}_0\Vert_2}{2\delta})
\end{aligned}
\right .
\end{equation}
Where \(\bm{l}\) is the unit vector of the direction \(\bm{x}_1-\bm{x}_0\). The transport cost between two Dirac measures also has a simple closed form.
\begin{equation}
\label{dirac WFR}
\text{WFR-DD}_{\delta}^2(m_0\delta_{\bm{x}_0},m_1\delta_{\bm{x}_1}) = 2\delta^2(m_0+m_1-2\sqrt{m_0m_1}\cos(\min\{\frac{\Vert\bm{x}_0-\bm{x}_1\Vert_2}{2\delta}
,\frac{\pi}{2}\})) 
\end{equation}

The corresponding static form of WFR is proved to be equivalent to an entropic OT problem \citep{chizat2018unbalanced}.
\begin{equation}
\label{WFR OET}
\begin{aligned}
\text{WFR}_{\delta}^2(\mu_0,\mu_1) &= 2\delta^2\inf_{\gamma\in\mathcal{M}_+(\mathcal{X}^2)} \{\int_{\mathcal{X}^2}  \, -2\operatorname{ln}\operatorname{\overline{\cos}}(\frac{\Vert \bm{x}-\bm{y}\Vert_2}{2\delta})\gamma(\bm{x},\bm{y})\mathrm{d} \bm{x}\mathrm{d} \bm{y}\\&+\operatorname{KL}(\int_\mathcal{X}\gamma(\bm{x},\bm{y})\mathrm{d} \bm{y}\Vert \mu_0(\bm{x}))+\operatorname{KL}(\int_\mathcal{X}\gamma(\bm{x},\bm{y})\mathrm{d} \bm{x}\Vert \mu_1(\bm{y}))\}
\end{aligned}
\end{equation}

Given the optimal coupling \(\gamma\), the semi-coupling of WFR can be obtained by renormalization \citep{liero2018optimal,wfr_fm}.
\begin{equation}
\gamma_0(\bm{x},\bm{y})=\frac{\gamma(\bm{x},\bm{y})}{\int_\mathcal{X}\gamma(\bm{x},\bm{z})\mathrm{d} \bm{z}}\mu_0(\bm{x}), \gamma_1(\bm{x},\bm{y})=\frac{\gamma(\bm{x},\bm{y})}{\int_\mathcal{X}\gamma(\bm{z},\bm{y})\mathrm{d} \bm{z}}\mu_1(\bm{y})
\end{equation}

Based on these results, \citep{wfr_fm} proposed WFR-FM, a simulation-free framework for learning dynamic WFR flow between observed measures.

The main limitation of WFR-FM is that the framework is based on the closed form of travelling Dirac which can not be obtained for general \(\Psi\). Actually, as discussed in \citep{USB}, when \(\Psi\) is not a quadratic function, the Euler-Lagrange equation of travelling Dirac
\begin{equation}
\left \{
\begin{aligned}
&\frac{\mathrm{d}}{\mathrm{d}t}(m\dot{\bm{x}})=\bm{0}\quad\quad &(\frac{\partial}{\partial\bm{x}}=\frac{\mathrm{d}}{\mathrm{d}t}\frac{\partial}{\partial\dot{\bm{x}}})\\
&|\dot{\bm{x}}|^2+\Psi-\frac{\dot{m}}{m}\Psi'=\frac{\ddot{m}m-\dot{m}^2}{m^2}\Psi''\quad\quad &(\frac{\partial}{\partial m}=\frac{\mathrm{d}}{\mathrm{d}t}\frac{\partial}{\partial\dot{m}})
\end{aligned}
\right .
\end{equation}
is hard to solve. 

In contrast, SUDO learns the travelling Dirac and the transport cost, and solves the semi-coupling by projected gradient descent. Given the semi-coupling and the travelling Dirac, SUDO utilizes the unbalanced flow matching framework proposed in WFR-FM to learn the UDOT flow. Thus, SUDO can be viewed as a generalization of WFR-FM on non-quadratic \(\Psi\).

\subsection{TIGON and DeepRUOT}
TIGON \citep{TIGON} used NeuralODE to solve the UDOT problem. DeepRUOT \citep{DeepRUOT} introduced stochasticity into it, and solved the RUOT problem. These methods can be viewed as simulation-based UDOT solvers which suffer from the expensive computational cost. In contrast, SUDO is a simulation-free solver, hence, it is more efficient than simulation-based methods.

\subsection{VarRUOT}
\label{appendix:varrout}
VarRUOT \citep{sun2026variational} parameterized the dual variable of the RUOT problem with a single network to achieve faster convergence and lower action, and represents the velocity and growth rate by the dual variable. In their original work, they also proposed to solve a special class of RUOT problem with concave growth penalty \(\Psi(g)=|g|^p,\ 0<p<1\) and obtained a non-zero solution, given that the monotonicity of growth rate along trajectory is determined by the convexity or concavity of \(\Psi\). 

The relation between these observations and Theorem \ref{thm:concave} in this work can be understood from the distinction between empirical stationary dynamics and global variational behavior. Theorem \ref{thm:concave} characterizes a degeneracy of the UDOT objective under concave growth penalties: for strictly sublinear concave \(\Psi\), one can construct minimizing sequences whose total cost approaches zero by separating mass change from transport. This result describes the global infimum structure of the variational problem, rather than excluding the existence of non-trivial stationary or finite-parameter solutions obtained under a particular numerical scheme.

From this perspective, the non-zero solutions reported by VarRUOT are compatible with our theorem. Since VarRUOT solves a SDE simulation-based, parameterized, and regularized optimization problem, its empirical solutions may correspond to stable stationary points within the chosen neural parameterization and training procedure. Moreover, the monotonicity results in VarRUOT are derived from first-order optimality conditions, which are necessary conditions and therefore can also hold for stationary solutions that are not global minimizers of the original variational problem. Thus, Theorem \ref{thm:concave} does not contradict the empirical findings of VarRUOT. Instead, it clarifies why concave growth penalties may lead to degenerate global variational behavior, and thereby helps explain the gap between empirical non-trivial dynamics and the underlying global optimization structure. 

\section{Growth penalties from branching Sch\"odinger problem}
\label{appendix:BSB}
In this section, we follow \citep{BSB} to introduce a family of convex \(\Psi\) with nice microscopic biological interpretation. This family of \(\Psi\) is rooted in the branching Schr\"odinger problem which is related to the RUOT (\ref{eq:dynamic RUOT}) problem. To introduce their results, we first introduce the branching Brownian motion.

\subsection{Branching Brownian motion}
Branching Brownian motion (BBM) is a branching processes where particles undergo both diffusion and branching. Mathematically, it is characterized by a doublet $(\nu,\bm{q})$, where $\nu\in\mathbb{R}$ dictates the scale of diffusion and $\bm{q}\in\mathcal{M}_{+}(\mathbb{N})$ represents the branching mechanism—a finite, unnormalized measure over the natural numbers. From this mechanism, we can derive both the overall branching rate $\lambda=\sum_{k\in\mathbb{N}}\bm{q}_k$ and the normalized offspring distribution $\bm{p}=\bm{q}/\lambda$.

Dynamically, a particle in a BBM system diffuses via standard Brownian motion scaled by $\nu$ (\(\nu\mathbb{W}\),\(\mathbb{W}\) is the standard Bronian motion). This continuous movement is interrupted at a random time $\tau\sim Exp(\lambda)$ governed by an internal exponential clock. At \(\tau\), the original particle undergoes a branching event to produce $k\sim\bm{p}$ offspring, where $k=0$ simply indicates particle death, $k=1$ means nothing happens, $k=2$ indicates that the particle divides into two independent new particles, and so on. Each newly generated particle then independently restarts this exact same stochastic process. 

By constraining possible outcomes to strict duplication or death \(\bm{p}_k=0,\ k\neq0,2\), BBM perfectly mirrors the underlying microscopic biological mechanisms of cell division and apoptosis. Furthermore, its inherent capacity to model discrete growth in population mass makes it an suitable framework for single-cell birth-death dynamics.

\subsection{Schr\"odinger bridge}
Originally introduced by \citep{schrodinger1932sur}, the Schrödinger bridge (SB) problem provides a mathematical framework for establishing stochastic trajectories between two given probability distributions. Specifically, it seeks the optimal stochastic process $\mathbb{P}^{\star}$ that precisely matches the normalized boundary distributions $\mu_0$ and $\mu_1$, while remaining as close as possible to a predefined reference process $\mathbb{Q}$. This is formulated as a Kullback-Leibler (KL) divergence minimization problem:
\begin{equation}
\label{eq:SB}
\mathbb{P}^{\star} = \underset{\mathbb{P}:p_0=\mu_0, p_1=\mu_1}{\arg \min} \text{KL}(\mathbb{P} \Vert \mathbb{Q})
\end{equation}
where $p_t$ represents the marginal distributions of the process $\mathbb{P}$ at time $t$. A standard choice for the reference prior $\mathbb{Q}$ is $\nu\mathbb{W}$, representing a standard Brownian motion scaled by a diffusion coefficient $\nu$. When parameterized this way, the framework is widely referred to as the diffusion Schrödinger bridge (DSB) \cite{bortoli2021diffusion,bunne2023schrodinger,shi2024diffusion}.

\subsection{Branching Schr\"odinger problem}
To extend the SB framework into unbalanced settings where \(\mu_0,\mu_1\) are unnormalized measures with different total mass, \citep{BSB} developed a generalized framework known as the branching Schrödinger problem. By employing branching Brownian motion (BBM) as the underlying reference process, this formulation inherently supports mass creation and destruction. The objective is defined by:
\begin{equation}
\label{eq:BSB}
\mathbb{P}^{\star} = \underset{\mathbb{P}:p_0=\mu_0, p_1=\mu_1}{\arg \min} \text{KL}(\mathbb{P} \Vert \mathbb{Q})
\end{equation}
where $\mathbb{Q}$ is a BBM, and the target marginals $\mu_0$ and $\mu_1$ are explicitly permitted to be unnormalized.

The branching Schrödinger bridge problem seeks to find a branching stochastic process that connects the target measures while remaining as close as possible to the reference BBM. Intuitively, the parameters $\nu$, $\lambda$, and $\bm{p}$ of the reference BBM represent the prior knowledge about the randomness and growth of the system. 

More specifically, we restrict ourselves to cases where \(\bm{p}_k=0,\ k\neq0,2\). The choice of \((\bm{p}_0,\bm{p}_2)\) represents the prior about the relative rate of cell proliferation and apoptosis. For example, setting $\bm{p}_2=1$ reflects a prior assumption of pure cellular proliferation, whereas $\bm{p}_0=1$ corresponds to pure apoptosis.

\subsection{Penalties}
Unfortunately, the branching Schr\"odinger problem is ill-posed and hard to solve. \citep{BSB} found that its convex relaxation is exactly the RUOT problem (\ref{eq:dynamic RUOT}).
\begin{equation}
\label{appendix:dynamic RUOT}
\begin{aligned}
&\text{RUOT}(\mu_0,\mu_1) =
\inf_{\rho,g,\bm{u}} \int_0^1\int_{\mathcal{X}}  \, \frac{1}{2}\Big(\Vert \bm{u}(\bm{x},t)\Vert_2^2+\Psi_{\nu,\lambda}(g(\bm{x},t))\Big)\rho_t(\bm{x})\mathrm{d} \bm{x}\mathrm{d}t \\
&\text{s.t.}\ \ \ \ \ \ \ \ \ \ \ \ \ \ \partial_t\rho+\nabla_{\bm{x}}\cdot(\rho \bm{u})=\rho g+\frac{\nu^2}{2}\Delta_{\bm{x}}\rho,\ \rho_0=\mu_0,\ \rho_1=\mu_1
\end{aligned}
\end{equation}

where \(\Psi_{\nu,\lambda}\) is determined by \((\nu,\lambda,\bm{p})\). Three important instances are
\begin{equation}
\label{eq:BSB Psi appendix}
\left \{
\begin{aligned}
&\Psi_{1,1}(g)=2(1-g+g\operatorname{log}g)\ (g>0),\ &\bm{p}_2=1\\
&\Psi_{1,1}(g)=2(1+g-g\operatorname{log}(-g))\ (g<0),\ &\bm{p}_0=1\\
&\Psi_{1,1}(g)=2\big(1-\sqrt{1+g^2}+g\operatorname{log}(g+\sqrt{1+g^2})\big)\ &\bm{p}_0=\bm{p}_2=1/2\\
\end{aligned}
\right .
\end{equation}

These are penalties we introduced in (\ref{eq:BSB Psi}), representing the prior about the growth of the system: Only-growth, Only-death, and No preference.

For general \((\nu,\lambda)\), \(\Psi_{\nu,\lambda}(g)=\nu^2\lambda\Psi(\frac{g}{\lambda})\). Note that our notation is slightly different to their original paper. In their paper, \(\nu\) is the variance of diffusion, while our \(\nu\) is the standard deviation of diffusion. For general \(\bm{p}\), \(\Psi\) can be represented as a Legendre transform of some function determined by \((\nu,\lambda,\bm{p})\), hence, it is convex. For more details, please refer to \citep{BSB}.

Though these convex \(\Psi\) are introduced in RUOT, we point out that UDOT (\ref{eq:UDOT}) is just the limit case of RUOT with zero noise \(\nu\). With small noise level, RUOT is similar to UDOT. Thus, we adopt these penalties as empirical growth preference prior.

\newpage
\section{Algorithm workflow}
\label{appendix:algorithm}

\begin{algorithm}[h!]
\caption{Path model training}
\label{alg:path model training}
\begin{algorithmic}[1]
\Require Growth penalty \(\Psi\), grid for distance \(D\), grid for mass ratio \(R\), path model \((\phi_\eta,\psi_\eta)\).
\While{Training}
    \State $t\sim\mathcal{U}[0,1]$
    \State $\mathcal{L}_{path}(\eta)\gets 0$
    \For{$(d,r)\in D\times R$}
        \State $k_{\eta} \gets t+t(1-t)\phi_{\eta}(t,d,r)$
        \State $l_{\eta}\gets r^t\operatorname{exp}(t(1-t)\psi_{\eta}(t,d,r))$
        \State $\mathcal{L}_{path}(\eta)\gets\mathcal{L}_{path}(\eta)+\Big(\dot{k}_{\eta}^2+ \Psi(\frac{\dot{l}_{\eta}}{l_{\eta}})\Big)l_{\eta}$
    \EndFor
    \State $\eta \gets \mathrm{Update}(\eta, \nabla_{\eta} \mathcal{L}_{path}(\eta))$
\EndWhile
\State \Return \((\phi_\eta,\psi_\eta)\)
\end{algorithmic}
\end{algorithm}

\begin{algorithm}[h!]
\caption{Cost model training}
\label{alg:cost model training}
\begin{algorithmic}[1]
\Require Growth penalty \(\Psi\), grid for distance \(D\), grid for mass ratio \(R\), trained path model \((\phi_\eta,\psi_\eta)\), cost model \(\mathcal{E}_\xi\), numerical integral node set $T=\{0,\frac{1}{N},\frac{2}{N},\cdots,\frac{N-1}{N},1\}$.
\While{Training}
    \State $\mathcal{L}_{cost}(\xi)\gets 0$
    \For{$(d,r)\in D\times R$}
        \State $I=0$
        \For{$t\in T$}
            \State $k_{\eta} \gets t+t(1-t)\phi_{\eta}(t,d,r)$
            \State $l_{\eta}\gets r^t\operatorname{exp}(t(1-t)\psi_{\eta}(t,d,r))$
            \If{$t\in\{0,1\}$}
                \State $I \gets I+\frac{1}{4N}\big(\dot{k}_{\eta}^2+ \Psi(\frac{\dot{l}_{\eta}}{l_{\eta}})\big)l_{\eta}$
            \Else
                \State $I \gets I+\frac{1}{2N}\big(\dot{k}_{\eta}^2+ \Psi(\frac{\dot{l}_{\eta}}{l_{\eta}})\big)l_{\eta}$
            \EndIf
        \EndFor
        \State $\mathcal{L}_{cost}(\xi)\gets \mathcal{L}_{cost}(\xi)+\big(\mathcal{E}_\xi(d,r)-I\big)^2$
    \EndFor
    \State $\xi \gets \mathrm{Update}(\xi, \nabla_{\xi} \mathcal{L}_{cost}(\xi))$
\EndWhile
\State \Return \(\mathcal{E}_\xi\)
\end{algorithmic}
\end{algorithm}

\begin{algorithm}[h!]
\caption{Flow model training}
\label{alg:flow model training}
\begin{algorithmic}[1]
\Require Sample-able distributions $\mu_{t_0}, \mu_{t_1}, \ldots, \mu_{t_K}$, growth penalty \(\Psi\), trained path model \((\phi_\eta,\psi_\eta)\), velocity net $\bm{u_{\theta}}(\bm{x},t)$, growth rate net $g_{\bm{\theta}}(\bm{x},t)$, semi-couplings $(\gamma_0^{(k)},\gamma_1^{(k)})$.
\While{Training}
    \State $\mathcal{L}_{\text{CUFM}}(\bm{\theta})\gets 0$
    \For{$k = 0 \to K-1$}
        \State $(\bm{x}_{t_k}, \bm{x}_{t_{k+1}}) \sim \gamma_0^{(k)},d\gets \Vert\bm{x}_{t_{k+1}}-\bm{x}_{t_{k}}\Vert,r\gets\gamma_1^{(k)}(\bm{x}_{t_k}, \bm{x}_{t_{k+1}})$
        \State $t \sim \mathcal{U}[0,1]$, $t \gets t_k + (t_{k+1}-t_k)t$
        
        \State $\bm{x}_{\eta,t} \gets \bm{x}_{t_k}+(\bm{x}_{t_{k+1}}-\bm{x}_{t_k})(t+t(1-t)\phi_{\eta}(t,d,r))$
        \State $m_{\eta,t}\gets r^t\operatorname{exp}(t(1-t)\psi_{\eta}(t,d,r))$

        \State $\mathcal{L}_{\text{CUFM}}(\bm{\theta})\gets \mathcal{L}_{\text{CUFM}}(\bm{\theta})+m_{\eta,t}\big(\left\| \bm{\bm{u}_{\theta}}(\bm{x}_{\eta,t},t) - \dot{\bm{x}}_{\eta,t} \right\|_2^2+\left\| g_{\bm{\theta}}(\bm{x}_{\eta,t},t)- \frac{\dot{m}_{\eta,t}}{m_{\eta,t}} \right\|_2^2\big)$
    \EndFor
    \State $\bm{\theta} \gets \mathrm{Update}(\bm{\theta}, \nabla_{\bm{\theta}} \mathcal{L}_{\text{CUFM}}(\bm{\theta}))$
\EndWhile
\State \Return $\bm{u_{\theta}}$, $g_{\bm{\theta}}$, and $\bm{s}_{\bm{\theta}}$
\end{algorithmic}
\end{algorithm}

\newpage
\begin{algorithm}[h!]
\caption{Inference}
\label{alg:inference}
\begin{algorithmic}[1]
\Require Data at initial time point $\mathcal{D}_0$, timestep \(\Delta t\), learned vector net $\bm{u_{\theta}}(\bm{x},t)$, growth rate net $g_{\bm{\theta}}(\bm{x},t)$.
\For {\(\bm{x}\) in \(\mathcal{D}_0\)}
    \State \(\omega=1\)
    \While {simulation}
        \State $\bm{x} \gets \bm{x}+\bm{u_{\theta}}(\bm{x},t)\Delta t$
        \State $\omega \gets \omega \cdot e^{g_{\bm{\theta}}(\bm{x},t)\Delta t}$
    \EndWhile
    \State \(\mathcal{D}_{weight}\) append \((\bm{x},\omega)\)
\EndFor
\State \Return \(\mathcal{D}_{weight}\)
\end{algorithmic}
\end{algorithm}



\end{document}